\pgfplotsset{compat=1.17}
\newtheorem*{theorem*}{Theorem}
\newtheorem{theorem}{Theorem}
\newtheorem{lemma}{Lemma}
\newtheorem{corollary}{Corollary}
\newtheorem*{corollary*}{Corollary}
\newcommand{\cS}{{\mathcal{S}}}
\newcommand{\cA}{\mathcal{A}}
\newcommand{\cD}{\mathcal{D}}
\newcommand{\cT}{\mathcal{T}}
\newcommand{\RR}{\mathbb{R}}
\title{RORL: Robust Offline Reinforcement Learning via Conservative Smoothing}
\author{
Rui Yang$^1$\thanks{Equal Contribution}, Chenjia Bai$^{2*}$, Xiaoteng Ma$^{3}$,
\textbf{Zhaoran Wang$^{4}$, Chongjie Zhang$^{3}$, Lei Han$^{5}$\thanks{Corresponding Author}}\\
$^{1}$Hong Kong University of Science and Technology, $^{2}$Shanghai AI Laboratory \\
$^{3}$Tsinghua University, $^{4}$Northwestern University, $^{5}$Tencent Robotics X\\
\texttt{ryangam@connect.ust.hk, baichenjia@pjlab.org.cn} \\
\texttt{ma-xt17@mails.tsinghua.edu.cn, zhaoranwang@gmail.com}\\
\texttt{chongjie@tsinghua.edu.cn, leihan.cs@gmail.com}
}
\begin{document}

\maketitle

\begin{abstract}
  Offline reinforcement learning (RL) provides a promising direction to exploit massive amount of offline data for complex decision-making tasks. Due to the distribution shift issue, current offline RL algorithms are generally designed to be conservative in value estimation and action selection. However, such conservatism can impair the robustness of learned policies when encountering observation deviation under realistic conditions, such as sensor errors and adversarial attacks. To trade off robustness and conservatism, we propose Robust Offline Reinforcement Learning (RORL) with a novel conservative smoothing technique. In RORL, we explicitly introduce regularization on the policy and the value function for states near the dataset, as well as additional conservative value estimation on these states. Theoretically, we show RORL enjoys a tighter suboptimality bound than recent theoretical results in linear MDPs. We demonstrate that RORL can achieve state-of-the-art performance on the general offline RL benchmark and is considerably robust to adversarial observation perturbations.
\end{abstract}

\section{Introduction}
\label{sec:intro}
Over the past few years, deep reinforcement learning (RL) has been a vital tool for various decision-making tasks~\cite{mnih2015human,silver2016mastering,schrittwieser2020mastering,ecoffet2021first} in a trial-and-error manner. A major limitation of current deep RL algorithms is that they require intense online interactions with the environment~\cite{levine2020offline,yang2021exploration}. These data collecting processes can be costly and even prohibitive in many real-world scenarios such as robotics and health care \cite{levine2020offline,sun2021safe}. Offline RL~\cite{fujimoto2019off,kumar2019stabilizing} is gaining more attention recently since it offers probabilities to learn reinforced decision-making strategies from fully offline datasets.

The main challenge of offline RL is the distribution shift between the offline dataset and the learned policy, which would lead to severe overestimation for the out-of-distribution (OOD) actions \cite{fujimoto2019off,kumar2019stabilizing}. To overcome such an issue, a series of model-free offline RL works~\cite{wang2018exponentially,fujimoto2019off,yang2021believe,kumar2020conservative,li2021focal,an2021uncertainty,yang2022rethinking,bai2021pessimistic} propose to celebrate conservatism, such as constraining the learned policy close to the supported distribution or penalizing the $Q$-values of OOD actions. 
Besides, another stream of works builds upon model-based algorithms \cite{yu2020mopo,yu2021combo,wang2021offline}, which leverages the ensemble dynamics models to enforce pessimism through uncertainty penalizing or data generation.

However, conservatism is not the only concern when applying offline RL to the real world. Due to the sensor errors and model mismatch, the robustness of offline RL is also crucial under the realistic engineering conditions, which has not been well studied yet. In online RL, a series of works has been studied to learn the optimal policy under worst-case perturbations of the observation \cite{zhang2020robust,pattanaik2017robust,huang2017adversarial} or environmental dynamics \cite{vinitsky2020robust,pinto2017robust,rajeswaran2016epopt,bai2021dynamic}. Yet, it is non-trivial to apply online robust RL techniques into the offline problems. The main challenge is that the perturbation of states may bring OOD observation and extra overestimation for the value function. New techniques are needed to tackle the conservatism and robustness simultaneously in the offline RL.

This paper studies robust offline RL against adversarial observation perturbations, where the agent needs to learn the policy conservatively while handling the potential OOD observation with perturbation. We first demonstrate that current value-based offline RL algorithms lack the necessary smoothness for the policy, which is visualized in Figure~\ref{fig:schematic_diagram}. As an illustration, we show that a famous baseline method CQL~\cite{kumar2020conservative} learns a non-smooth value function, leading to significant performance degradation for even a tiny scale perturbation on observation (see Section~\ref{sec:Motivating_Example} for details). In addition, simply adopting the smoothing technique for existing methods may result in extra overestimation at the boundary of supported distribution and lead the agent toward unsafe areas.

To this end, we propose Robust Offline Reinforcement Learning (RORL) with a novel conservative smoothing technique, which explicitly handles the overestimation of OOD state-action pairs. Specifically, we explicitly introduce smooth regularization on both the value functions and policies for states near the dataset support and conservatively estimate the values of these OOD states based on pessimistic bootstrapping. Furthermore, we theoretically prove that RORL yields a valid uncertainty quantifier in linear MDPs and enjoys a tighter suboptimality bound than previous work~\cite{bai2021pessimistic}.

In our experiments \footnote{Our code is available at \href{https://github.com/YangRui2015/RORL}{https://github.com/YangRui2015/RORL}}, we demonstrate that RORL can achieve state-of-the-art (SOTA) performance in the D4RL benchmark~\cite{d4rl-2020} with fewer ensemble $Q$ networks than the current SOTA approach~\cite{an2021uncertainty}. The results of the benchmark experiments imply that robust training can lead to performance improvement in non-perturbed environments. Meanwhile, compared with current ensemble-based baselines, RORL is considerably more robust to adversarial perturbations on observations. We conduct the adversarial experiments under different attack types, showing consistently superior performance on several continuous control tasks.

\begin{figure*}[t]
\centering
\includegraphics[width=0.73\textwidth]{ 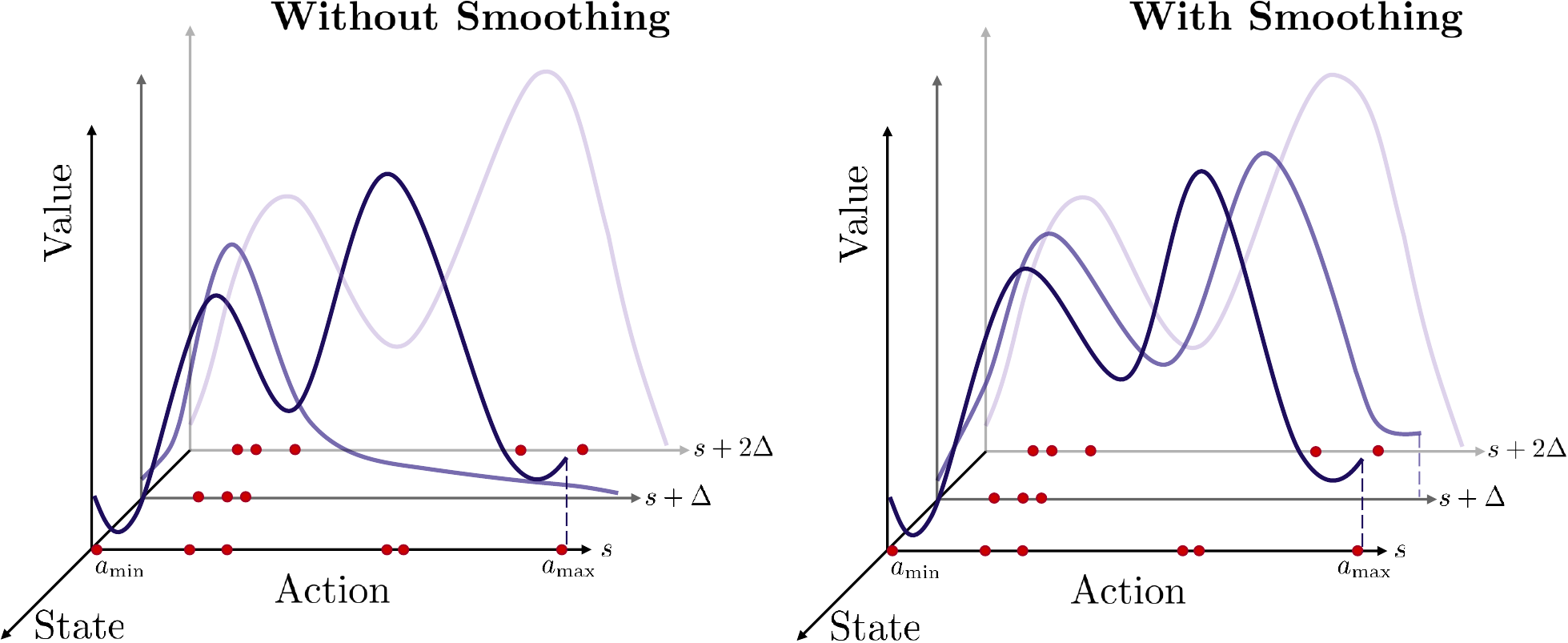}
\caption{A schematic diagram of smoothing in offline RL. The red spots represent the offline data samples. Without state smoothing, the value function would change drastically over neighboring states and induce an unstable policy. Yet, the smoothness may also lead to value overestimation of dangerous areas. RORL trades off smoothness and possible overestimation as discussed in Sec \ref{sec:method}.}
\vspace{-1em}
\label{fig:schematic_diagram}
\end{figure*}

\section{Preliminaries}

\paragraph{Offline RL}
Considering an episodic MDP $\mathcal{M}=(\mathcal{S},\mathcal{A},T,r,\gamma,\mathbb{P})$, where $\mathcal{S}$ is the state space, $\mathcal{A}$ is the action space, $T$ is the length of an episode, $r$ is the reward function, $\mathbb{P}$ is the dynamics, and $\gamma$ is the discount factor.
In offline RL, the objective of the agent is to find an optimal policy by sampling experiences from a fixed dataset $\mathcal{D}=\{(s_t^i,a_t^i,r_t^i,s_{t+1}^i)\}$. Nevertheless, directly applying off-policy algorithms in offline RL suffers from the distribution shift problem. In $Q$-learning, the value function evaluated on the greedy action $a'$ in Bellman operator $\mathcal{T}Q=r+\gamma\mathbb{E}_{s'}[\max_{a'} (s',a')]$ tends to have extrapolation error since $(s',a')$ has barely occurred in $\mathcal{D}$.

Pessimistic Bootstrapping for Offline RL (PBRL) \cite{bai2021pessimistic} is an uncertainty-based method that uses bootstrapped $Q$-functions for uncertainty quantification \cite{sun2022daux} and OOD sampling for regularization. Specifically, PBRL maintains $K$ bootstrapped $Q$ functions to quantify the epistemic uncertainty \cite{bai2021principled}
and performs pessimistic update
to penalize $Q$ functions with large uncertainties. The uncertainty is defined as the standard deviation among bootstrapped $Q$-functions. For each bootstrapped $Q$-function, the Bellman target is defined as $\widehat{\mathcal{T}} Q(s,a)=r(s,a)+\gamma \widehat{\mathbb{E}}_{s'\sim P(\cdot|s,a),a'\sim \pi(\cdot|s')}\big[Q(s',a')-\lambda u(s',a')\big]$.
Under linear MDP assumptions, this uncertainty is equivalent to the LCB penalty and is provably efficient \cite{jin2021pessimism}. Furthermore, PBRL incorporates OOD sampling by sampling OOD actions to form $(s, a^{\rm ood})$ pairs, where $a^{\rm ood}$ follows the learned policy. The detached learning target for $(s, a^{\rm ood})$ is $\widehat{\mathcal{T}}^{\rm ood} Q(s,a^{\rm ood}):=Q(s,a^{\rm ood})-\lambda u(s,a^{\rm ood})$, which introduces uncertainty penalization to enforce pessimistic $Q$-functions for OOD actions.

\paragraph{Smooth Regularized RL}
Robust RL aims to learn a robust policy against the adversarial perturbed environment in online RL. SR$^2$L \cite{shen2020deep} enforces smoothness in both the policy and $Q$-functions. Specifically, SR$^2$L encourages the outputs of the policy and value function to not change much when injecting small perturbations to the states. For state $s$, SR$^2$L constructs a perturbation set $\mathbb{B}_{d}(s,\epsilon)=\{\hat{s}:d(s,\hat{s})\leq \epsilon\}$ with a metric $d(·,·)$, which is chosen to be the $\ell_p$ distance, and introduces a smoothness regularizer for policy as
$\mathcal{R}^{\pi}_s=\mathbb{E}_{s\sim \rho^{\pi}} \max_{\hat{s}\in \mathbb{B}_{d}(s,\epsilon)} \mathcal{D}(\pi(\cdot|s)\|\pi(\cdot|\hat{s})),$
where $\mathcal{D}(\cdot\|\cdot)$ is a distance metric and the $\max$ operator gives an adversarial manner to choose $\hat{s}$. Similarly, the smoothness regularizer for the value function is defined as $\mathcal{R}^{V}_s=\mathbb{E}_{s\sim \rho^{\pi},a\sim \pi} \max_{\hat{s}\in \mathbb{B}_{d}(s,\epsilon)} (Q(s,a)-Q(\hat{s},a))^2$. SR$^2$L is shown to improve robustness against both random and adversarial perturbations.


\section{Robustness of Offline RL: A Motivating Example} \label{sec:Motivating_Example}

\begin{figure*}[t]
\centering
\subfigure[$Q$-function of CQL]{\includegraphics[width=0.33\textwidth]{ 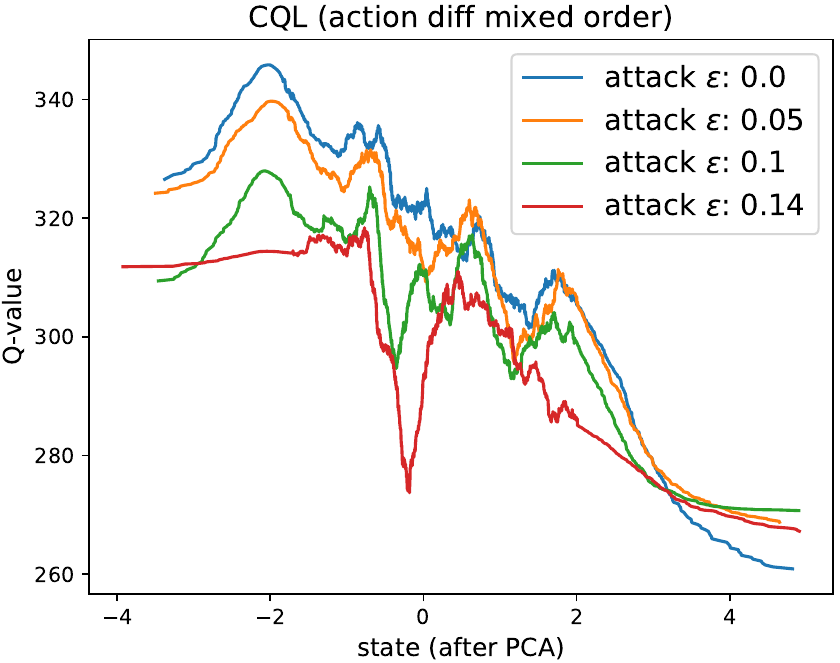}\label{fig:cql-non-smooth}}
\subfigure[$Q$-function of CQL-smooth]{\includegraphics[width=0.33\textwidth]{ 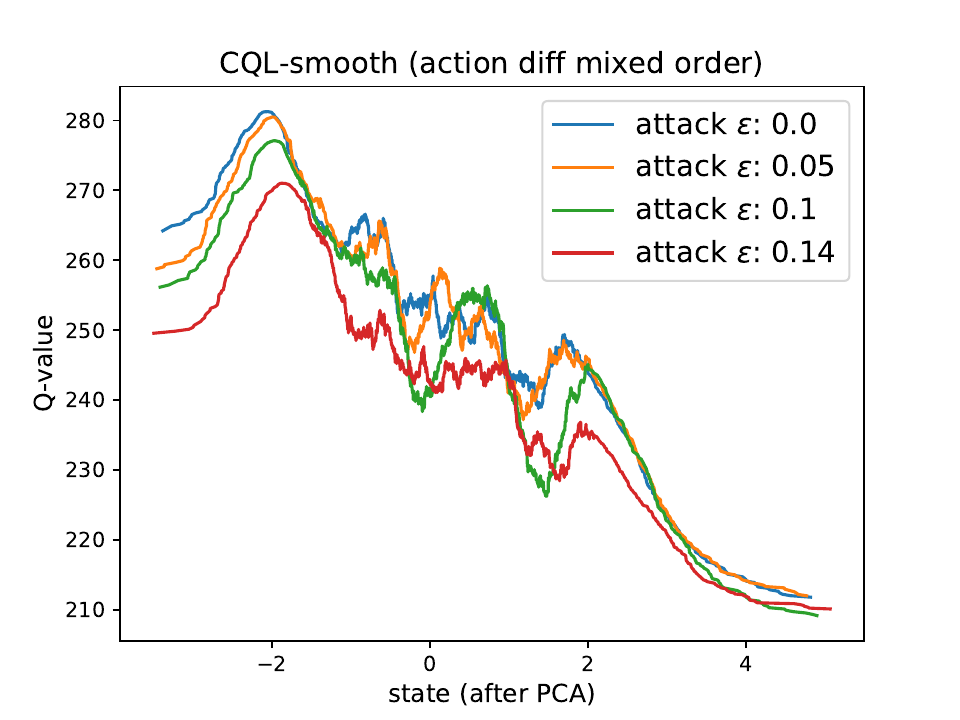}\label{fig:cql-smooth}}
\subfigure[Final performance]{\includegraphics[width=0.3\textwidth]{ 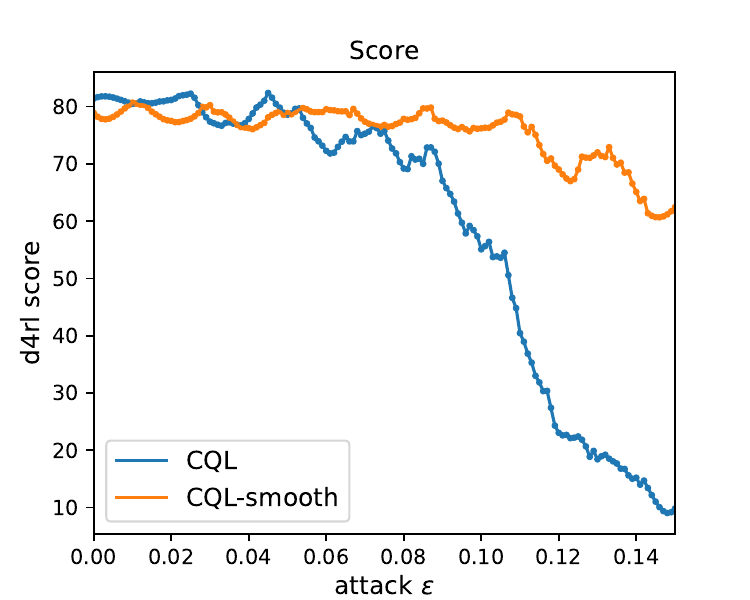}\label{fig:cql-score}}
\caption{(a) (b) The $Q$-functions of $\hat{s}$ with adversarial noises in CQL and CQL-smooth, respectively. The same moving average factor is used in plotting both figures. (c) The performance of CQL and CQL-smooth with different perturbation scales. We use 100 uniformly distributed $\epsilon\in[0.0,0.15]$ for the evaluation.}
\vspace{-1em}
\label{fig:cql-attack}
\end{figure*}

We give a motivating example to illustrate the robustness of the popular CQL \cite{kumar2020conservative} policies. We introduce an adversarial attack on state $s$ to obtain $\hat{s}=\arg\max_{\hat s\in \mathbb{B}_d(s,\epsilon)} D_{\rm J} (\pi_{\theta}(\cdot|s)\|\pi_{\theta}(\cdot|\hat s))$, where $\mathbb{B}_d(s,\epsilon)=\{\hat{s}:d(s,\hat{s})\leq \epsilon\}$ is the perturbation set and the metric $d(·,·)$ is chosen to be the $\ell_\infty$ norm. The Jeffrey’s divergence $D_{\rm J}$ for two distributions $P$, $Q$ is defined by:
$D_{\rm J}(P\|Q) = \frac{1}{2} [D_{\rm KL}(P\|Q) + D_{\rm KL}(Q\|P)]$. To obtain $\hat{s}$, we take gradient assent with respect to the loss function $D_{\rm J} (\pi_{\theta}(\cdot|s)\|\pi_{\theta}(\cdot|\hat s))$ and restrict the outputs to the $\mathbb{B}_d(s,\epsilon)$ set, where $\pi_\theta$ is a learned CQL policy. We remark that the the perturbation is applied on normalized observations following prior work \cite{zhang2020robust}.

In the \emph{walker-medium-v2} task from D4RL \cite{d4rl-2020}, we use various $\epsilon$ for adversarial attack to evaluate the robustness of CQL policies. Specifically, we use $\epsilon\in\{0, 0.05, 0.1, 0.14\}$ to control the strengths of the attack, where we have $\hat{s}=s$ if $\epsilon=0$. Given a specific $\epsilon$, we sample $N$ state-action pairs $\{(s_i,a_i)\}$ from the offline dataset, and then perform adversarial attack to obtain $\{(\hat{s}_i,a_i)\}$ and the corresponding $Q$-values $\{Q_i(\hat{s}_i,a_i)\}$, where the $Q$-function is the trained critic of CQL.

Figure \ref{fig:cql-non-smooth} shows the relationship between $\hat{s}_i$ and the corresponding $Q_i$ with different $\epsilon$. To visualize $\hat{s}_i$, we perform PCA dimensional reduction \cite{pca-1999} and choose one of the reduced dimensions to represent $\hat{s}_i$. More details can be found in Appendix \ref{ap:visual_cql}. With the increase of $\epsilon$ in the adversarial attack, the $Q$-curve has greater deviation compared to the curve with $\epsilon=0$. The result signifies that the $Q$-function of CQL is not smooth in the state space, which makes the adversarial noises easily affect the $Q$ values. As a comparison, we apply the proposed conservative smoothing loss in CQL training (i.e., \emph{CQL-smooth}) and use the same evaluation method to obtain $\hat{s}_i$ and $Q_i$. According to the result in Figure \ref{fig:cql-smooth}, the value function becomes smoother.

In addition, we show how the adversarial attack affects the final performance of offline RL policies. We use $\epsilon\in[0,0.15]$ to evaluate both the original CQL policies (i.e., \emph{CQL}) and CQL with conservative smoothing loss (i.e., \emph{CQL-smooth}) in adversarial attack. Figure \ref{fig:cql-score} shows the performance with different settings of $\epsilon$. We find that our smooth constraints significantly improve the robustness of CQL, especially for large adversarial noises.

\section{Robust Offline RL via Conservative Smoothing}
\label{sec:method}

\begin{figure}[t]
    \centering
    \begin{minipage}{0.42\linewidth}
        \includegraphics[width=\linewidth]{ 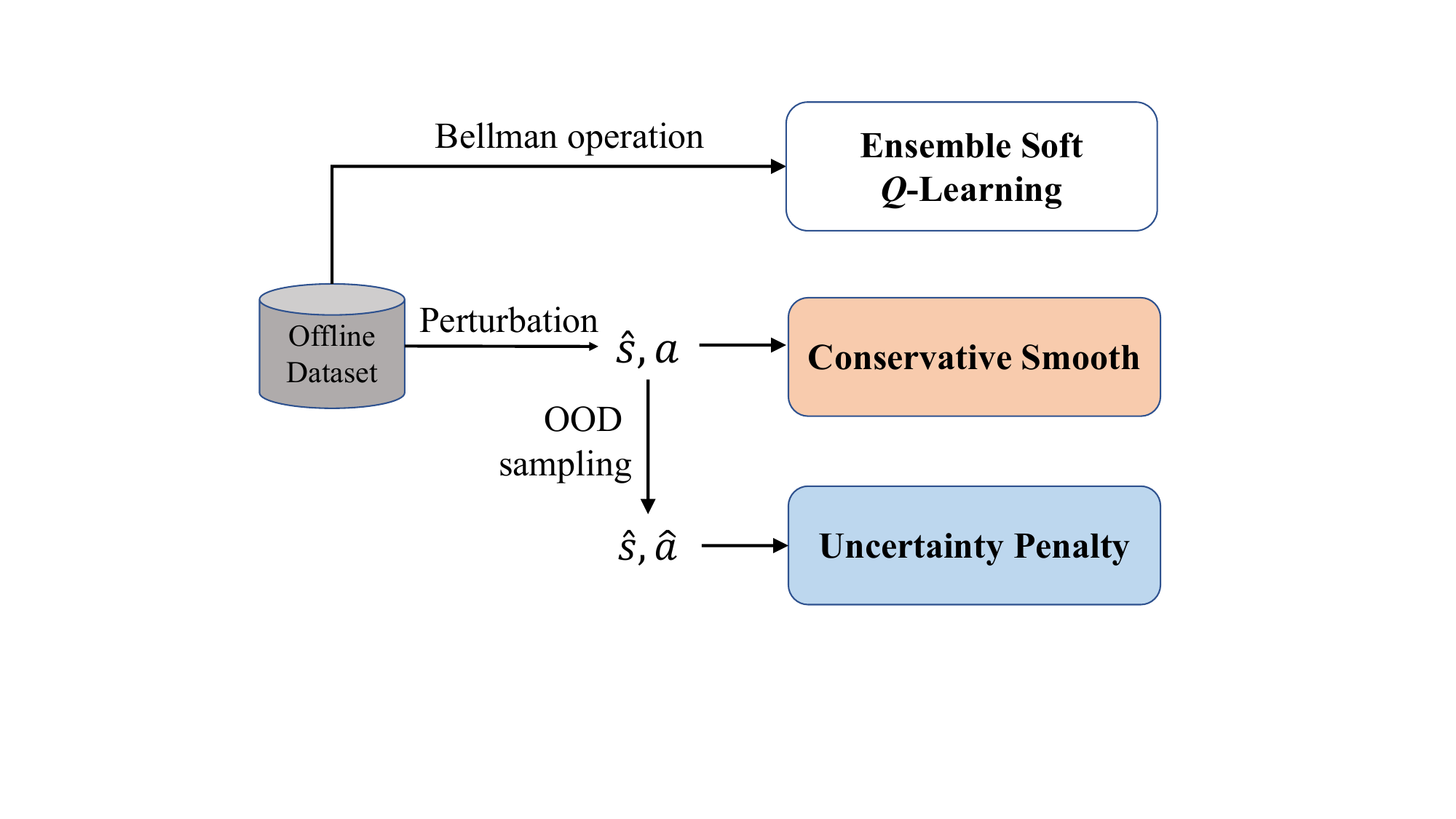}
    \end{minipage} \hfill
    \begin{minipage}{0.57\linewidth}
    
    \begin{algorithm}[H]
    \small
    \DontPrintSemicolon
    \SetAlgoLined
    Initialize policy $\pi_{\theta}$ and $Q$-functions $\{Q_{\phi_1},\ldots,Q_{\phi_K}\}$. \;
    \While{not converged}{
        Sample mini-batch transitions $(s,a,r,s')$ from $\mathcal{D}$.\;
        Sample $\hat{s}$ from $\mathbb{B}_d(s,\epsilon)$ to obtain $(\hat{s},a)$ pairs.\;
        Calculate the $Q$ smooth loss $\mathcal{L}_{\rm smooth}$.\;
        Sample OOD actions $\hat{a} \sim \pi_{\theta}(\hat{s})$. \; Calculate uncertainty $u(\hat{s},\hat{a})$ and the OOD loss $\mathcal{L}_{\rm ood}$. \;
        Train each $Q$ function $Q_{\phi_i}$ with Eq.~\eqref{eq:roal-Q}. \;
        Train the policy $\pi_{\theta}$ with Eq.~\eqref{eq:roal-policy}. \;
    }
    \caption{RORL Algorithm}
    \end{algorithm}
    \end{minipage}
    \caption{\textbf{RORL Algorithm}: RORL trains multiple $Q$-functions for uncertainty quantification. The conservative smoothing loss is calculated for $(\hat{s},a)$ with perturbed states. We perform uncertainty penalization for $(\hat{s},\hat{a})$ with perturbed states and OOD actions. 
\label{fig:model}}
\vspace{-1em}
\end{figure}

In RORL, we develop smooth regularization on both the policy and the value function for states near the dataset. The smooth constraints make the policy and the $Q$-functions robust to observation perturbations. Nevertheless, the smoothness may also lead to value overestimation in areas outside the supported dataset. To address this problem, we adopt bootstrapped $Q$-functions \cite{osband2016deep,bai2021pessimistic} for uncertainty quantification and sample perturbed states and OOD actions for penalization. RORL obtains conservative and smooth value estimation on OOD states, which can improve the generalization ability of offline RL algorithms. The overall architecture of RORL is given in Figure \ref{fig:model}.

\paragraph{Robust $Q$-function} We sample three sets of state-action pairs and apply different loss functions to obtain a conservative and smooth policy. Specifically, for a $(s,a)$ pair sampled from $\mathcal{D}$, we construct a perturbation set $\mathbb{B}_d(s,\epsilon)$ to obtain $(\hat{s},a)$ pairs, where $\hat{s}\in \mathbb{B}_d(s,\epsilon)$ and $\epsilon$ is the perturbation scale. The perturbation set $\mathbb{B}_{d}(s,\epsilon)=\{\hat{s}:d(s,\hat{s})\leq \epsilon\}$ for state $s$ is an $\epsilon$-radius ball measured in metric $d(·,·)$, which is the $\ell_\infty$ norm in our paper.
Then we perform OOD sampling by using the current policy $\pi_\theta$ to obtain $(\hat{s},\hat{a})$ pairs, where $\hat{a}\sim\pi_\theta(\hat{s})$. RORL contains $K$ ensemble $Q$-functions. We denote the parameters of the $i$-th $Q$-function and the target $Q$-function as $\phi_i$ and $\phi'_i$, respectively. In the following, we give different learning targets for $(s,a)$, $(\hat{s},a)$, and $(\hat{s},\hat{a})$ pairs.

First, for a $(s,a)$ pair sampled from $\mathcal{D}$, we apply extended soft $Q$-learning to obtain the target as 
\begin{equation}
    \label{eq:soft_Q_learning}
    \widehat{\mathcal{T}}Q_{\phi_i}(s, a):=r(s,a)+\gamma \widehat {\mathbb{E}}_{a'\sim \pi_\theta(\cdot|s')}\big[\min_{j=1,\ldots,K}Q_{\phi'_j}(s',a')- \alpha \cdot \log\pi_\theta(a'|s')\big],
\end{equation}
where the next-$Q$ function takes minimum value among the target $Q$-functions and $\log\pi_\theta(a'|s')$ is the entropy regularization. Note that Eq.~\eqref{eq:soft_Q_learning} is the same learning target of SAC-$N$ in~\cite{an2021uncertainty}.

Then, for a $(\hat{s},a)$ pair with a perturbed state, we enforce smoothness in each $Q$-function by minimizing the $Q$-value difference between $Q(s,a)$ and $Q(\hat{s},a)$. In particular, we choose an adversarial $\hat{s}\in \mathbb{B}_d(s,\epsilon)$ that maximizes a inner objective $\mathcal{L}(Q(\hat{s},a),Q(s,a))$, and then train each $Q$-function to minimize a loss function $\mathcal{L}_{\rm smooth}$ with the adversarial $\hat{s}$. Intuitively, we want the $Q$-function to be smooth under the most difficult (i.e., adversarial) perturbation in $\mathbb{B}_d(s,\epsilon)$. The smooth loss function for $Q_{\phi_i}$ is as follows:
\begin{equation}
\label{eq:Q_smooth}
    \mathcal{L}_{\rm smooth}(s,a; \phi_i) = \max_{\hat s\in \mathbb{B}_d(s,\epsilon)}  \mathcal{L}\big(Q_{\phi_i}(\hat s,a),Q_{\phi_i}(s,a)\big).
\end{equation}
We denote $\delta(s, \hat s,a)=Q_{\phi_i}(\hat s,a)-Q_{\phi_i}(s,a)$ and remark that if $\delta(s, \hat s,a)>0$, the perturbed state may induce an overestimated $Q$-value that we need to smooth. In contrast, if $\delta(s, \hat s,a)<0$, the perturbed $Q$-function is underestimated, which does not cause a serious problem in offline RL. As a result, we use different weights for $\delta(s, \hat s,a)_+$ and $\delta(s, \hat s,a)_-$, where $x_+=\max(x,0)$ and $x_-=\min(x,0)$. The definition of $\mathcal{L}(\cdot,\cdot)$ is give as follows:
\begin{equation}
\label{eq:conservative_smoothing_Q}
\mathcal{L}\big(Q_{\phi_i}(\hat s,a),Q_{\phi_i}(s,a)\big) = (1-\tau) \delta(s, \hat s,a)_{+}^2 + \tau \delta (s, \hat s,a)_{-}^2,
\end{equation}
where we can choose $\tau \leq 0.5$. In $\mathcal{L}_{\rm smooth}$, we does not introduce OOD action $\hat a$ for smoothing since the actions are desired to be close to the behavior actions for areas near the offline dataset.

Finally, to prevent overestimation of OOD states and actions, we use bootstrapped uncertainty $u(\hat{s},\hat{a})$ as the penalty for $Q(\hat{s},\hat{a})$, where $\hat{a}\sim\pi_{\theta}(\hat{s})$ is an OOD action sampled from the current policy $\pi_{\theta}$. We remark that a similar OOD sampling is also used in PBRL \cite{bai2021pessimistic}. \emph{The difference is that PBRL only penalizes the OOD actions for in-distribution states, while RORL penalizes both the OOD states and OOD actions to provide conservatism for unfamiliar areas.} 
We follow PBRL and use a loss function as:
\begin{equation}
\label{eq:ood_loss}
    \mathcal{L}_{\rm ood}(s; \phi_i) = \mathbb{E}_{\hat s \sim \mathbb{B}_d(s,\epsilon),\hat a \sim \pi_\theta(\hat{s})} \big(\widehat{\mathcal{T}}_{\rm ood}Q_{\phi_i}(\hat s, \hat a) - Q_{\phi_i}(\hat s, \hat a)\big)^2,
\end{equation}
where the pseudo-target for the OOD datapoints is computed as:
$\widehat{\mathcal{T}}_{\rm ood}Q_{\phi_i}(\hat{s},\hat{a}):=Q_{\phi_i}(\hat{s},\hat{a})-u(\hat{s},\hat{a})$, which is detached from gradients similar to the conventional TD target. The bootstrapped uncertainty $u(\hat{s},\hat{a})$ is defined as the standard deviation among the $Q$-ensemble:
\begin{equation*}
    u(\hat{s},\hat{a}):=\sqrt{\frac{1}{K}\sum\nolimits_{k=1}^K \big(Q_{\phi_i}(\hat s,\hat a)-\bar{Q}_{\phi}(\hat s,\hat a)\big)^2}.
\end{equation*}
The ensemble technique~\cite{osband2016deep} forms an estimation of the $Q$-posterior, which yields diverse predictions and large penalty $u(\hat{s},\hat{a})$ on areas with scarce data. 

Combining the loss functions above, RORL has the following loss function for each $Q_{\phi_i}$:
\begin{equation}
\begin{aligned}
\min_{\phi_i}  \mathbb{E}_{s,a,r,s'\sim \mathcal{D}} \Big[ &\big(\widehat{\mathcal{T}} Q_{\phi_i}(s,a) - Q_{\phi_i}(s,a)\big)^2 + \beta_{\text{Q}}   \mathcal{L}_{\rm smooth}(s,a; \phi_i) + \beta_{\rm ood}  \mathcal{L}_{\rm ood}(s; \phi_i) \Big],
\label{eq:roal-Q}
\end{aligned}
\end{equation}

\paragraph{Robust Policy}
We learn a robust policy by using a smooth constraint to make the policy change less under perturbations. Similarly, we choose an adversarial state $\hat{s}\in\mathbb{B}_d(s,\epsilon)$ that maximizes $D_{\rm J} \big(\pi_{\theta}(\cdot|s)\|\pi_{\theta}(\cdot|\hat s)\big)$, and then minimize the policy difference between $\pi_\theta(\cdot|s)$ and $\pi_\theta(\cdot|\hat{s})$. To conclude, we minimize the following loss function for $\pi_{\theta}$:
\begin{equation}
\label{eq:roal-policy}
\min_{\theta} \Big[\mathbb{E}_{s \sim \mathcal{D}, a \sim \pi_{\theta}(\cdot|s)} \big[-\min_{j=1,\ldots,K}Q_{\phi_j}(s,a) + \alpha \log \pi_\theta(a|s) + \beta_{\rm P}   \max_{\hat s\in \mathbb{B}_d(s,\epsilon)} D_{\rm J} \big(\pi_{\theta}(\cdot|s)\|\pi_{\theta}(\cdot|\hat s)\big) \big]\:\Big],
\end{equation}
where the first term aims to maximize the minimum of the ensemble $Q$-functions to obtain a conservative policy, and the second term is the entropy regularization.

\section{Theoretical Analysis} \label{sec:theroy}

We analyze a simplified learning objective of RORL in linear MDPs \cite{lsvi-2020,jin2021pessimism}, where the feature map of the state-action pair takes the form of $\phi:\mathcal{S}\times\mathcal{A}\rightarrow\mathbb{R}^d$, and both the transition function and the reward function are assumed to be linear in $\phi$. The parameter $\widetilde w_t$ of RORL can be solved in closed form following the least squares value iteration (LSVI), which minimizes the following loss function.
\begin{equation}
\label{eq:simplified_problem-main}
\begin{aligned}
    \widetilde w_t^i =  \min_{w\in \mathcal{R}^d} & \Big[\sum_{i=1}^{m} \big(y_t^i-Q_{w}(s_t^i,a_t^i)\big)^2 +   \sum_{i=1}^{m} \frac{1}{|\mathbb{B}_d(s_t^i,\epsilon)|} \sum_{\hat{s}_t^i\in \mathcal{D}_{\text{ood}}(s_t^i)}  \big(Q_{w}(s_t^i,a_t^i) - Q_{w}(\hat{s}_t^i,a_t^i)\big)^2  + \\ & \sum_{(\hat s, \hat a, \hat y) \sim \mathcal{D}_{\text{ood}} } \big(\hat y - Q_{w}(\hat s,\hat a)\big)^2 \Big],
\end{aligned}
\end{equation}
where we have $Q_{w}(s_t^i,a_t^i)=\phi(s_t^i,a_t^i)^\top w$ since the $Q$-function is also linear in $\phi$. The first term in Eq.~\eqref{eq:simplified_problem-main} is the ordinary TD-error, where we consider the setting of $\gamma = 1$ and the $Q$-target is $y_t^i=r(s_t^i,a_t^i)+V_{t+1}(s_{t+1}^i)$. The second term is the proposed conservative smoothing loss. Specifically, $\hat s_t^i\sim \mathcal{D}_{\text{ood}}(s_t^i)$ are sampled from a $l_\infty$ ball of center $s_t^i$ and norm $\epsilon > 0$, which can also be formulated as $\hat s_t^i\sim \mathbb{B}_d(s_t^i,\epsilon)$. The third term is the additional OOD-sampling loss, which enforces conservatism for OOD states and OOD actions. In contrast to PBRL~\cite{bai2021pessimistic}, we use perturbed states sampled from $\mathcal{D}_{\text{ood}}=\bigcup\limits_{i=1}^m \mathcal{D}_{\text{ood}}(s_t^i)$ rather than states from dataset. The OOD action $\hat a$ is sampled from policy $\pi$.
The explicit solution of Eq.~\eqref{eq:simplified_problem-main} takes the following form:
\begin{equation}\label{eq::main_w_ood_solu}
    \widetilde w_t^i =\widetilde \Lambda_t^{-1} \Big( \sum_{i=1}^{m} \phi(s_t^i,a_t^i) y_t^i + \sum_{(\hat s, \hat a, \hat y) \sim \mathcal{D}_{\text{ood}} }\phi(\hat s,\hat a)  \hat y  \Big),
\end{equation}
where the covariance matrix $\widetilde \Lambda_t$ is defined as
\begin{equation}
\begin{aligned}
        \widetilde \Lambda_t = &\sum_{i=1}^{m} \phi(s_t^i,a_t^i)\phi(s_t^i,a_t^i)^\top + \sum_{(\hat s, \hat a) \sim \mathcal{D}_{\text{ood}}} \phi(\hat s_t,\hat a_t)\phi(\hat s_t,\hat a_t)^\top \\
        &+ \sum_{i=1}^{m} \frac{1}{|\mathbb{B}_d(s_t^i,\epsilon)|}  \sum_{\hat s_t^i \sim \mathcal{D}_{\text{ood}}(s_t^i)} [\phi(\hat s_t^i,a_t^i) - \phi(s_t^i,a_t^i)]\big[\phi(\hat s_t^i,a_t^i) - \phi(s_t^i,a_t^i)\big]^\top. 
\label{eq:covariance-rorl-main}
\end{aligned}
\end{equation}

We denote the first term and the second term as $\widetilde \Lambda^{\text{in}}$ and $\widetilde \Lambda_t^{\text{ood}}$, which represent the covariance matrices induced by the offline samples and OOD samples, respectively. Nevertheless, in linear MDPs, it is difficult to ensure the covariance $\widetilde \Lambda^{\text{in}}+\widetilde \Lambda_t^{\text{ood}} \succeq \lambda \cdot \mathrm{I}$, since it requires that the embeddings of the samples are isotropic to make the eigenvalues of the corresponding covariance matrix lower bounded. This condition holds if we can sample embeddings uniformly from the whole embedding space. However, since the offline dataset has limited coverage in the state-action space and the OOD samples come from limited $l_{\infty}$-balls around the offline data, $\widetilde \Lambda^{\text{in}}+\widetilde \Lambda_t^{\text{ood}}$ cannot be guaranteed to be positive definite. PBRL~\cite{bai2021pessimistic} uses the assumption of $\widetilde \Lambda_t^{\text{ood}}\succeq \lambda \cdot \mathrm{I}$, while it is unachievable empirically. In RORL, we solve this problem by introducing an additional conservative smoothing loss, which induces a covariance matrix as $\widetilde \Lambda_t^{\text{ood\_diff}} = \sum_{i=1}^{m} \frac{1}{|\mathbb{B}_d(s_t^i,\epsilon)|}  \sum_{\hat s_t^i \sim \mathcal{D}_{\text{ood}}(s_t^i)} [\phi(\hat s_t^i,a_t^i) - \phi(s_t^i,a_t^i)][\phi(\hat s_t^i,a_t^i) - \phi(s_t^i,a_t^i)]^\top $ (i.e., the third term in Eq.~\eqref{eq:covariance-rorl-main}). The following theorem gives the guarantees of $\widetilde \Lambda_t^{\text{ood\_diff}}\succeq \lambda \cdot  \mathrm{I}$.

\begin{theorem}
\label{tm:PD_matrix}
Assume $\exists i\in[1,m]$ the vector group of all $\hat s_t^i \sim \mathcal{D}_{\text{ood}}(s_t^i)$: $\{\phi(\hat s_t^i,a_t^i) - \phi(s_t^i,a_t^i)\}$ be full rank, 
then the covariance matrix $\widetilde \Lambda_t^{\rm{ood\_diff}}$ is positive-definite: $\widetilde \Lambda_t^{\rm{ood\_diff}} \succeq \lambda \cdot \mathrm{I}$ where $\lambda > 0$.
\end{theorem}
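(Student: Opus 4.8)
The plan is to show that $\widetilde{\Lambda}_t^{\mathrm{ood\_diff}}$, being a sum of outer products of vectors $v_{\hat{s}}^i := \phi(\hat{s}_t^i, a_t^i) - \phi(s_t^i, a_t^i)$, is positive semidefinite automatically, and that it becomes strictly positive definite precisely when these difference vectors span $\mathbb{R}^d$. The hypothesis gives us an index $i$ for which the vector group $\{v_{\hat{s}}^i : \hat{s}_t^i \sim \mathcal{D}_{\mathrm{ood}}(s_t^i)\}$ is full rank, i.e.\ contains $d$ linearly independent vectors spanning $\mathbb{R}^d$. Focusing on the corresponding inner sum $\frac{1}{|\mathbb{B}_d(s_t^i,\epsilon)|}\sum_{\hat{s}_t^i} v_{\hat{s}}^i (v_{\hat{s}}^i)^\top$ and discarding the remaining (PSD) summands over the other indices $i'$, it suffices to lower-bound this single term.

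First I would fix an arbitrary unit vector $x \in \mathbb{R}^d$ and write $x^\top \widetilde{\Lambda}_t^{\mathrm{ood\_diff}} x \geq \frac{1}{|\mathbb{B}_d(s_t^i,\epsilon)|}\sum_{\hat{s}_t^i} \big(x^\top v_{\hat{s}}^i\big)^2 =: f(x) \geq 0$. The key observation is that $f(x) = 0$ forces $x^\top v_{\hat{s}}^i = 0$ for every $\hat{s}_t^i$ in the group, which contradicts full rank (the $v_{\hat{s}}^i$ span $\mathbb{R}^d$, so no nonzero $x$ is orthogonal to all of them). Hence $f(x) > 0$ for all unit $x$. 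Since $f$ is a continuous function on the compact unit sphere $S^{d-1}$, it attains a positive minimum; call it $\lambda := \min_{\|x\|=1} f(x) > 0$. By homogeneity, $x^\top \widetilde{\Lambda}_t^{\mathrm{ood\_diff}} x \geq \lambda \|x\|^2$ for all $x \in \mathbb{R}^d$, which is exactly $\widetilde{\Lambda}_t^{\mathrm{ood\_diff}} \succeq \lambda \mathrm{I}$ with $\lambda > 0$.

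Equivalently, and perhaps cleaner to present, I would note that $\lambda$ can be taken as $\lambda_{\min}$ of the full rank matrix $\frac{1}{|\mathbb{B}_d(s_t^i,\epsilon)|} \sum_{\hat{s}_t^i} v_{\hat{s}}^i (v_{\hat{s}}^i)^\top$ (a Gram-type matrix of a spanning set, hence invertible, hence with strictly positive smallest eigenvalue), and then invoke the Courant–Fischer / Weyl monotonicity fact that adding PSD matrices only increases eigenvalues: $\widetilde{\Lambda}_t^{\mathrm{ood\_diff}} = \big(\text{this term}\big) + \big(\text{PSD remainder}\big) \succeq \lambda \mathrm{I} + 0$. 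There is essentially no hard part here — the statement is a standard linear-algebra fact once one recognizes that "full rank difference vectors" is just "the $v_{\hat{s}}^i$ span $\mathbb{R}^d$." The only place to be slightly careful is bookkeeping: confirming that the coefficient $1/|\mathbb{B}_d(s_t^i,\epsilon)|$ is positive and finite so it does not degrade positivity, and making sure the quantifier in the hypothesis ("$\exists i$") is used correctly so that a single good index suffices while all other terms are merely discarded as PSD.
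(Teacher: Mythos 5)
Your proposal is correct and follows essentially the same route as the paper's proof: both establish positive semidefiniteness by writing the quadratic form as a sum of squares $\big(x^\top(\phi(\hat s,a)-\phi(s,a))\big)^2$ and then derive strict positive definiteness from the fact that no nonzero vector can be orthogonal to a spanning (full-rank) family of difference vectors. Your extra step of extracting $\lambda$ explicitly via compactness of the unit sphere (or as $\lambda_{\min}$ of the full-rank term, with the remaining PSD terms discarded) is just a more explicit rendering of the paper's conclusion that a positive-definite symmetric matrix satisfies $\widetilde \Lambda_t^{\rm ood\_diff}\succeq \lambda \mathrm{I}$ for some $\lambda>0$.
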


Recall the covariance matrix of PBRL is $\widetilde \Lambda_t^{\text{PBRL}}=\widetilde \Lambda_t^{\text{in}}+\widetilde \Lambda_t^{\text{ood}}$, and RORL has a covariance matrix as $\widetilde \Lambda_t=\widetilde \Lambda_t^{\text{PBRL}}+\widetilde \Lambda_t^{\text{ood\_diff}}$, we have the following corollary based on Theorem \ref{tm:PD_matrix}.

\begin{corollary}\label{coro:positive-def}
Under the linear MDP assumptions and conditions in Theorem \ref{tm:PD_matrix}, we have $\widetilde \Lambda_t \succeq \widetilde \Lambda_t^{\rm PBRL}$. Further, the covariance matrix $\widetilde \Lambda_t$ of RORL is positive-definite: $\widetilde \Lambda_t \succeq \lambda \cdot \mathrm{I}$, where $\lambda > 0$.
\end{corollary}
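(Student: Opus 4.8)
The plan is to read off both assertions from the additive decomposition $\widetilde \Lambda_t = \widetilde \Lambda_t^{\rm PBRL} + \widetilde \Lambda_t^{\rm ood\_diff}$ together with Theorem~\ref{tm:PD_matrix}, using only the monotonicity of the Loewner order under addition of positive semidefinite matrices. No delicate estimate is required; the only work is to confirm that every block appearing in Eq.~\eqref{eq:covariance-rorl-main} is positive semidefinite, so that nothing can cancel the contribution of $\widetilde \Lambda_t^{\rm ood\_diff}$.

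For the first claim, $\widetilde \Lambda_t \succeq \widetilde \Lambda_t^{\rm PBRL}$, it suffices to show $\widetilde \Lambda_t^{\rm ood\_diff} \succeq 0$, since adding $\widetilde \Lambda_t^{\rm PBRL}$ to both sides of $\widetilde \Lambda_t^{\rm ood\_diff} \succeq 0$ preserves the ordering. By its definition (the third term of Eq.~\eqref{eq:covariance-rorl-main}), $\widetilde \Lambda_t^{\rm ood\_diff}$ is a finite sum of terms $\tfrac{1}{|\mathbb{B}_d(s_t^i,\epsilon)|}\, v_i v_i^\top$ with $v_i = \phi(\hat s_t^i,a_t^i) - \phi(s_t^i,a_t^i) \in \RR^d$ and strictly positive scalar weights; each outer product satisfies $x^\top v_i v_i^\top x = (v_i^\top x)^2 \ge 0$ for all $x$, and a nonnegative linear combination of positive semidefinite matrices is positive semidefinite. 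Hence $\widetilde \Lambda_t^{\rm ood\_diff} \succeq 0$ and the first claim follows.

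For the second claim I would invoke Theorem~\ref{tm:PD_matrix}: under its full-rank hypothesis on the vector group $\{\phi(\hat s_t^i,a_t^i) - \phi(s_t^i,a_t^i)\}$ for some $i$, it produces a constant $\lambda>0$ with $\widetilde \Lambda_t^{\rm ood\_diff} \succeq \lambda I$. Separately, $\widetilde \Lambda_t^{\rm PBRL} = \widetilde \Lambda_t^{\rm in} + \widetilde \Lambda_t^{\rm ood}$ is, by the same rank-one argument applied to the first two terms of Eq.~\eqref{eq:covariance-rorl-main}, a sum of outer products $\phi(\cdot,\cdot)\phi(\cdot,\cdot)^\top$ and therefore $\succeq 0$. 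Finally, Weyl's inequality for symmetric matrices gives $\lambda_{\min}(A+B) \ge \lambda_{\min}(A) + \lambda_{\min}(B)$, so taking $A = \widetilde \Lambda_t^{\rm PBRL}$ and $B = \widetilde \Lambda_t^{\rm ood\_diff}$ yields $\lambda_{\min}(\widetilde \Lambda_t) \ge 0 + \lambda = \lambda$, i.e. $\widetilde \Lambda_t \succeq \lambda I$ with the same $\lambda$ as in Theorem~\ref{tm:PD_matrix}.

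I do not anticipate a genuine obstacle: the corollary is a direct consequence of Theorem~\ref{tm:PD_matrix} and the fact that $\widetilde \Lambda_t^{\rm PBRL}$ is positive semidefinite. The only point needing care is bookkeeping — checking that the in-sample term $\widetilde \Lambda_t^{\rm in}$ and the OOD-sampling term $\widetilde \Lambda_t^{\rm ood}$ contribute nonnegatively rather than partially cancelling $\widetilde \Lambda_t^{\rm ood\_diff}$ — but since all three blocks in Eq.~\eqref{eq:covariance-rorl-main} are sums of outer products with nonnegative coefficients, this is immediate. (If a sharper constant were wanted one could instead keep $\lambda_{\min}(\widetilde \Lambda_t) \ge \lambda_{\min}(\widetilde \Lambda_t^{\rm PBRL}) + \lambda_{\min}(\widetilde \Lambda_t^{\rm ood\_diff})$, but the statement only asserts existence of a positive lower bound, so reusing $\lambda$ suffices.)
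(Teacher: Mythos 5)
Your proposal is correct and follows essentially the same route the paper intends: decompose $\widetilde \Lambda_t = \widetilde \Lambda_t^{\rm PBRL} + \widetilde \Lambda_t^{\rm ood\_diff}$, note that $\widetilde \Lambda_t^{\rm PBRL}$ is a sum of outer products and hence positive semidefinite, and invoke Theorem~\ref{tm:PD_matrix} for $\widetilde \Lambda_t^{\rm ood\_diff} \succeq \lambda I$, so that the Loewner-order addition (your Weyl-inequality step is just this) yields both claims. The only point worth keeping explicit, as you do, is that the second claim inherits the full-rank hypothesis of Theorem~\ref{tm:PD_matrix} rather than following from the linear MDP assumptions alone.
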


Recent theoretical analysis shows that an appropriate uncertainty quantification is essential to provable efficiency in offline RL \cite{jin2021pessimism,xie2021bellman,bai2021pessimistic}. Pessimistic Value Iteration \cite{jin2021pessimism} defines a general $\xi$-uncertainty quantifier as the penalty and achieves provable efficient pessimism in offline RL. In linear MDPs, Lower Confidence Bound (LCB)-penalty \cite{bandit-2011,lsvi-2020} is known to be a $\xi$-uncertainty quantifier for appropriately selected $\beta_t$ as $\Gamma^{\rm lcb}(s_t,a_t)=\beta_t\cdot\big[\phi(s_t,a_t)^\top\Lambda_t^{-1}\phi(s_t,a_t)\big]^{\nicefrac{1}{2}}$. Following the analysis of PBRL \cite{bai2021pessimistic}, since the bootstrapped uncertainty is an estimation of the LCB-penalty and the OOD sampling provides a covariance matrix $\widetilde \Lambda_t \succeq \lambda \cdot \mathrm{I}$ given in Corollary \ref{coro:positive-def}, the proposed RORL also forms a valid $\xi$-uncertainty quantifier. This allows us to further characterize the optimality gap based on the pessimistic value iteration \cite{jin2021pessimism,bai2021pessimistic}. We have the following suboptimality gap under linear MDP assumptions.

\begin{corollary}
\label{cor::opt_gap_rorl}
${\rm SubOpt} (\pi^*, \hat \pi) \leq \sum_{t=1}^{T} \mathbb{E}_{\pi^*} \big[ \Gamma_t^{\rm lcb}(s_t,a_t) \big] < \sum_{t=1}^{T} \mathbb{E}_{\pi^*} \big[ \Gamma_t^{\rm lcb\_PBRL}(s_t,a_t) \big]$.
\end{corollary}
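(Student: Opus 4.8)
The plan is to obtain the statement as a direct consequence of the pessimistic value iteration (PVI) framework of \cite{jin2021pessimism}, combined with the covariance-matrix ordering established in Corollary~\ref{coro:positive-def}. First I would recall the central guarantee of PVI: if the penalty sequence $\{\Gamma_t\}_{t=1}^T$ fed into the value iteration is a valid $\xi$-uncertainty quantifier --- i.e., with high probability $|(\widehat{\mathcal{T}}_t V_{t+1})(s,a) - (\mathcal{T}_t V_{t+1})(s,a)| \le \Gamma_t(s,a)$ for all $(s,a)$ and all $t$ --- then the output policy $\hat\pi$ satisfies ${\rm SubOpt}(\pi^*,\hat\pi) \le \sum_{t=1}^T \mathbb{E}_{\pi^*}[\Gamma_t(s_t,a_t)]$ (up to an absolute constant, following \cite{jin2021pessimism,bai2021pessimistic}). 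So it suffices to show that RORL's LCB penalty $\Gamma_t^{\rm lcb}(s,a) = \beta_t \cdot [\phi(s,a)^\top \widetilde\Lambda_t^{-1}\phi(s,a)]^{1/2}$ is such a valid quantifier, and that it is pointwise strictly smaller than the PBRL penalty.

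Second, I would argue that $\Gamma_t^{\rm lcb}$ built from RORL's covariance matrix $\widetilde\Lambda_t$ is indeed a valid $\xi$-uncertainty quantifier. This is precisely where Corollary~\ref{coro:positive-def} does the work: because $\widetilde\Lambda_t \succeq \lambda I$ with $\lambda > 0$ holds deterministically, $\widetilde\Lambda_t$ plays exactly the role that the ridge-regularized design matrix $\Lambda_t + \lambda I$ plays in the standard self-normalized concentration analysis for linear MDPs \cite{bandit-2011,lsvi-2020}. Consequently the same choice of confidence radius $\beta_t$ (up to logarithmic factors) that certifies the PBRL penalty $\Gamma_t^{\rm lcb\_PBRL}$ also certifies $\Gamma_t^{\rm lcb}$, since the bootstrapped uncertainty is an estimate of the LCB penalty as in \cite{bai2021pessimistic}. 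Plugging $\Gamma_t = \Gamma_t^{\rm lcb}$ into the PVI bound from the first step yields the first inequality ${\rm SubOpt}(\pi^*,\hat\pi) \le \sum_{t=1}^T \mathbb{E}_{\pi^*}[\Gamma_t^{\rm lcb}(s_t,a_t)]$.

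Third, for the strict inequality I would compare $\widetilde\Lambda_t$ with $\widetilde\Lambda_t^{\rm PBRL}$. By Corollary~\ref{coro:positive-def} (equivalently Theorem~\ref{tm:PD_matrix}), $\widetilde\Lambda_t = \widetilde\Lambda_t^{\rm PBRL} + \widetilde\Lambda_t^{\rm ood\_diff}$ with $\widetilde\Lambda_t^{\rm ood\_diff} \succeq \lambda I \succ 0$, so $\widetilde\Lambda_t - \widetilde\Lambda_t^{\rm PBRL} \succ 0$. Since $X \mapsto X^{-1}$ is operator-antitone on the cone of positive-definite matrices, $\widetilde\Lambda_t^{-1} \prec (\widetilde\Lambda_t^{\rm PBRL})^{-1}$, hence $\phi(s_t,a_t)^\top \widetilde\Lambda_t^{-1}\phi(s_t,a_t) < \phi(s_t,a_t)^\top (\widetilde\Lambda_t^{\rm PBRL})^{-1}\phi(s_t,a_t)$ whenever $\phi(s_t,a_t) \ne 0$. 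Taking square roots, multiplying by the common $\beta_t$, taking expectation under $\pi^*$ and summing over $t$ gives $\sum_{t=1}^T \mathbb{E}_{\pi^*}[\Gamma_t^{\rm lcb}(s_t,a_t)] < \sum_{t=1}^T \mathbb{E}_{\pi^*}[\Gamma_t^{\rm lcb\_PBRL}(s_t,a_t)]$, completing the chain.

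I expect the main obstacle to be the second step: justifying rigorously that the \emph{same} confidence radius $\beta_t$ may be used for both penalties, so that the covariance ordering transfers cleanly to an ordering of the penalties. If RORL's concentration argument demanded a strictly larger $\beta_t$, the strict inequality could be lost; the key point to make precise is that the deterministic bound $\widetilde\Lambda_t \succeq \lambda I$ from Corollary~\ref{coro:positive-def} is exactly what allows the self-normalized bound to go through with the PBRL-level $\beta_t$. A secondary, minor point is the non-degeneracy needed for strictness: one should remark that $\phi(s_t,a_t) \neq 0$ on a set of positive $\pi^*$-measure along the optimal trajectory (otherwise both penalties vanish there and that contribution is an equality), which is benign and already implicit in the linear-MDP setup.
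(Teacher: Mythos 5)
Your overall route is the same as the paper's: import the pessimistic-value-iteration bound ${\rm SubOpt}(\pi^*,\hat\pi)\le\sum_t\mathbb{E}_{\pi^*}[\Gamma_t]$ once the RORL penalty is certified as a $\xi$-uncertainty quantifier, then compare the two penalties pointwise through $\widetilde\Lambda_t=\widetilde\Lambda_t^{\rm PBRL}+\widetilde\Lambda_t^{\rm ood\_diff}$ with $\widetilde\Lambda_t^{\rm ood\_diff}\succeq\lambda I\succ 0$. Your third step is correct and essentially equivalent to the paper's Lemma~\ref{lemma:1}, which gets strictness from a generalized Rayleigh-quotient bound, $x^\top A^{-1}x/x^\top(A+B)^{-1}x\ge 1+\lambda_{\min}(BA^{-1})>1$; your operator-antitonicity argument, together with the explicit remark that $\phi(s_t,a_t)\ne 0$ is needed for strictness, is if anything slightly more careful than the paper's.

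The genuine gap is in your second step. RORL's $\widetilde\Lambda_t$ is not simply a ridge-regularized design matrix: in the closed-form solution \eqref{eq::appendix_w_ood_solu} the smoothing loss contributes the extra covariance block in \eqref{eq:covariance-rorl} but has no matching regression targets on the right-hand side, so expanding $\mathcal{T}V_{t+1}-\widetilde{\mathcal{T}}V_{t+1}$ does not reduce to the usual self-normalized term alone. The paper's Theorem~\ref{app-thm:rorl-lcb} decomposes the error into three pieces: the standard concentration term (handled with the PBRL-level $\beta_t$, as you expected, using $\widetilde\Lambda_t\succeq\lambda I$ from Corollary~\ref{coro:positive-def}), an OOD term that vanishes only under the explicit hypothesis $\hat y=\mathcal{T}V_{t+1}(s^{\rm ood},a^{\rm ood})$ — a condition of the theorem that your argument silently drops (the corollary is stated "under the same conditions") — and a bias term of the form $\big(\phi(\hat s_t^i,a_t^i)-\phi(s_t^i,a_t^i)\big)\big(\mathcal{T}V_{t+1}(\hat s_t^i,a_t^i)-\mathcal{T}V_{t+1}(s_t^i,a_t^i)\big)$ generated by the smoothing covariance, which the paper argues is $\approx 0$ precisely because the value function is smoothed over $\mathbb{B}_d(s,\epsilon)$. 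You correctly anticipated that step two is the delicate one, but you located the difficulty in the confidence radius, whereas the actual missing piece is controlling this smoothing-induced bias (and stating the OOD pseudo-target condition); without it, the claim that $\Gamma^{\rm lcb}_t$ built from $\widetilde\Lambda_t$ is a valid $\xi$-uncertainty quantifier with the same $\beta_t$ is unsupported.
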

Detailed proof can be found in Appendix \ref{appendix-theoretical}. Corollary \ref{cor::opt_gap_rorl} indicates that RORL enjoys a tighter suboptimality bound than PBRL \cite{bai2021pessimistic}.



\section{Experiments}
\label{sec:experiments}

We evaluate our method on the D4RL benchmark \cite{d4rl-2020} with various continuous-control tasks and datasets. We compare RORL with several offline RL algorithms, including 
(\romannumeral1) BC that performs behavior cloning, (\romannumeral 2) CQL \cite{kumar2020conservative} that learns conservative value function for OOD actions, (\romannumeral3) EDAC \cite{an2021uncertainty} that learns a diversified $Q$-ensemble to enforce conservatism, and (\romannumeral4) PBRL \cite{bai2021pessimistic} that performs uncertainty penalization and OOD sampling. We also include a basic SAC-10 algorithm as a baseline \cite{an2021uncertainty}, which is an extension of SAC with 10 $Q$-functions.
Among these methods, EDAC \cite{an2021uncertainty} and PBRL \cite{bai2021pessimistic} are related to RORL since all these methods apply $Q$-ensemble for conservatism. EDAC needs much more $Q$-networks (i.e., 10$\sim$50) for hopper tasks than PBRL and RORL that only use 10 $Q$-networks. For fair comparison, we also report the reproduced results of EDAC-10. To assign uniform adversarial attack budget on each dimension of observations, we normalize the observations for SAC-10, EDAC and RORL. Besides, we use different perturbation scales for the policy smoothing loss, the Q smoothing loss and the OOD loss, namely $\epsilon_{\rm P}$, $\epsilon_{\rm Q}$ and $\epsilon_{\rm ood}$. More hyper-parameters and implementation details are provided in Appendix \ref{ap-implementation}.

\begin{table}[t]
	\centering
	\small
	\caption{Normalized average returns on Gym tasks, averaged over 4 random seeds. Part of the results are reported in the EDAC paper. Top two scores for each task are highlighted.}
	\vspace{0.2em}
	\label{tab:gym_more}
	\begin{adjustbox}{max width=\linewidth}
		\begin{tabular}{l|rrrrrr|r}
			\toprule
			\multirow{2}{*}{\textbf{Task Name}} & \multirow{2}{*}{\textbf{BC}} &  \multirow{2}{*}{\textbf{CQL}} &
			\multirow{2}{*}{\textbf{PBRL}}   &
			\textbf{SAC-$10$} &
			\textbf{EDAC} & \textbf{EDAC-10} &  \textbf{RORL} \\
			& & & & \textbf{(Reproduced)} & \textbf{(Paper)} & \textbf{(Reproduced)} &  \textbf{(Ours)} \\
			\midrule
			halfcheetah-random & 2.2$\pm$0.0  &  \textbf{31.3$\pm$3.5} & 11.0$\pm$5.8 & \textbf{29.0$\pm$1.5} & 28.4$\pm$1.0  &
			13.4 $\pm$ 1.1 & 28.5$\pm$0.8 \\    
			halfcheetah-medium & 43.2$\pm$0.6  & 46.9$\pm$0.4 & 57.9 $\pm$1.5 & 64.9$\pm$1.3 & \textbf{65.9$\pm$0.6}  & 64.1$\pm$1.1  & \textbf{66.8$\pm$0.7}
			\\
			halfcheetah-medium-expert & 44.0$\pm$1.6  & 95.0$\pm$1.4 & 92.3$\pm$1.1 & 107.1$\pm$2.0 & 106.3$\pm$1.9 &
			\textbf{107.2$\pm$1.0} & \textbf{107.8$\pm$1.1}	\\ 
			halfcheetah-medium-replay & 37.6$\pm$2.1  &45.3$\pm$0.3 & 45.1$\pm$8.0 & \textbf{63.2$\pm$0.6} & 61.3$\pm$1.9 &
			60.1$\pm$0.3 & \textbf{61.9$\pm$1.5} \\ 
			halfcheetah-expert & 91.8$\pm$1.5  & 97.3$\pm$1.1 & 92.4$\pm$1.7 & 104.9$\pm$0.9 & \textbf{106.8$\pm$3.4} &
			104.0$\pm$0.8 & \textbf{105.2$\pm$0.7}\\
			\midrule
			hopper-random & 3.7$\pm$0.6 & 5.3$\pm$0.6 & \textbf{26.8$\pm$9.3} & 25.9$\pm$9.6 & 25.3$\pm$10.4 &
			16.9$\pm$10.1 & \textbf{31.4$\pm$0.1}	\\   
			hopper-medium & 54.1$\pm$3.8 & 61.9$\pm$6.4 & 75.3$\pm$31.2 &  0.8$\pm$0.2 & 101.6$\pm$0.6  &
			\textbf{103.6$\pm$0.2} & \textbf{104.8$\pm$0.1}\\      
			hopper-medium-expert & 53.9$\pm$4.7 & 96.9$\pm$15.1 & \textbf{110.8$\pm$0.8} & 6.1$\pm$7.7 & 110.7$\pm$0.1 &
			58.1$\pm$22.3 &  
			\textbf{112.7$\pm$0.2}\\
			hopper-medium-replay & 16.6$\pm$4.8  &  86.3$\pm$7.3 & 100.6$\pm$1.0 & \textbf{102.9$\pm$0.9} & 101.0$\pm$0.5 &
			\textbf{102.8$\pm$0.3} & \textbf{102.8$\pm$0.5}	\\ 
			hopper-expert & 107.7$\pm$9.7 & 106.5$\pm$9.1 & \textbf{110.5$\pm$0.4} & 1.1$\pm$0.5 & 110.1$\pm$0.1 & 77.0$\pm$43.9 &
			\textbf{112.8$\pm$0.2}\\
			\midrule
			walker2d-random & 1.3$\pm$0.1  & 5.4$\pm$1.7 & 8.1$\pm$4.4  & 1.5$\pm$1.1 & \textbf{16.6$\pm$7.0} & 6.7$\pm$8.8 & \textbf{21.4$\pm$0.2}  \\
			walker2d-medium & 70.9$\pm$11.0  & 79.5$\pm$3.2 & 89.6$\pm$0.7 & 46.7$\pm$45.3 & \textbf{92.5$\pm$0.8} & 87.6$\pm$11.0 & \textbf{102.4$\pm$1.4} \\
			walker2d-medium-expert & 90.1$\pm$13.2 &  109.1$\pm$0.2 & 110.1$\pm$0.3 & \textbf{116.7$\pm$1.9} & 114.7$\pm$0.9 &
			115.4$\pm$0.5 &
			\textbf{121.2$\pm$1.5} \\
			walker2d-medium-replay & 20.3$\pm$9.8 & 76.8$\pm$10.0 & 77.7$\pm$14.5 & 89.6$\pm$3.1 & 87.1$\pm$2.3 & \textbf{94.0$\pm$1.2} & \textbf{90.4 $\pm$ 0.5}  \\
			walker2d-expert & 108.7$\pm$0.2 & 109.3$\pm$0.1 & 108.3$\pm$0.3 & 1.2$\pm$0.7 & \textbf{115.1$\pm$1.9} & 57.8$\pm$55.7 & \textbf{115.4 $\pm$ 0.5} \\
			\midrule
			Average & 49.7 & 70.2 &  74.4 & 50.8 & \textbf{82.9} & 71.2 &  \textbf{85.7} \\ 
			Total & 746.1 & 1052.8 &  1116.5 & 761.6 & \textbf{1243.4} & 1068.7  &  \textbf{1285.7}\\ 
			\bottomrule
		\end{tabular}
	\end{adjustbox}
\vspace{-1em}
\end{table}

\subsection{Benchmark Results}

We evaluate each method on Gym domain that includes three environments (HalfCheetah, Hopper, and Walker2d) with five types of datasets (random, medium, medium-replay, medium-expert, and expert) for each environment.  
The medium-replay dataset contains experiences collected in training a medium-level policy. The random/medium/expert dataset is generated by a single random/medium/expert policy. The medium-expert dataset is a mixture of medium and expert datasets. For benchmark experiments, we set small perturbation scales $\epsilon_{\rm P}$, $\epsilon_{\rm Q}$, and $\epsilon_{\rm ood}$ within $\{0.001, 0.005, 0.01\}$ when training RORL and do not include observation perturbation in the testing time. 

Table~\ref{tab:gym_more} reports the performance of the average normalized score with standard deviation. (\romannumeral1) SAC-10 is unstable on several walker2d and hopper tasks since the ensemble number is relatively small to provide reliable uncertainties for SAC-$N$ \cite{an2021uncertainty}. (\romannumeral2) EDAC solves this problem by gradient diversity constraints while still requiring 10$\sim$50 $Q$-networks to obtain reasonable performance. In contrast, RORL only uses 10 ensemble $Q$-networks to achieve better or comparable performance with EDAC. Additionally, we also show that RORL outperforms EDAC-10 by a large margin. (\romannumeral3) PBRL chooses an alternative OOD-sampling technique to reduce the ensemble numbers. According to the result, RORL significantly outperforms PBRL with the same ensemble number. The reason is RORL additionally uses conservative smoothing loss for perturbed states and penalizes values of these states based on uncertainty estimation, which may improve the generalization ability of the learned policy on continuous state space. We remark that RORL significantly improves over the current SOTA results on walker2d and hopper tasks, probably because these two tasks require a more precise balance of conservatism and robustness for better performance.

\begin{figure*}[t]
\centering
\subfigure[Performance under attack on the halfcheetah-medium-v2 dataset]{\includegraphics[width=1\textwidth]{ 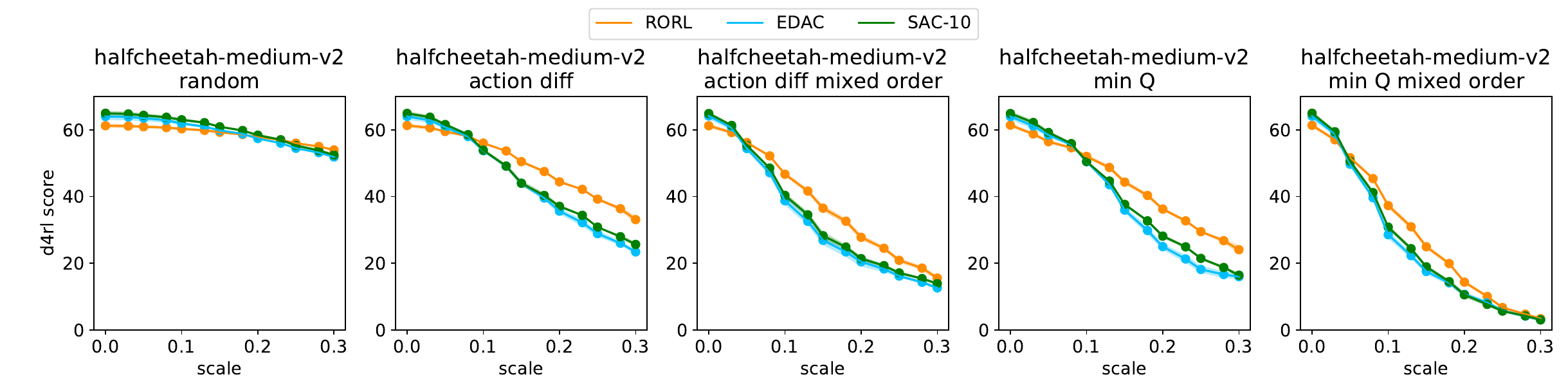}\label{fig:attack_halfcheetah}}
\subfigure[Performance under attack on the walker2d-medium-v2 dataset]{\includegraphics[width=1\textwidth]{ 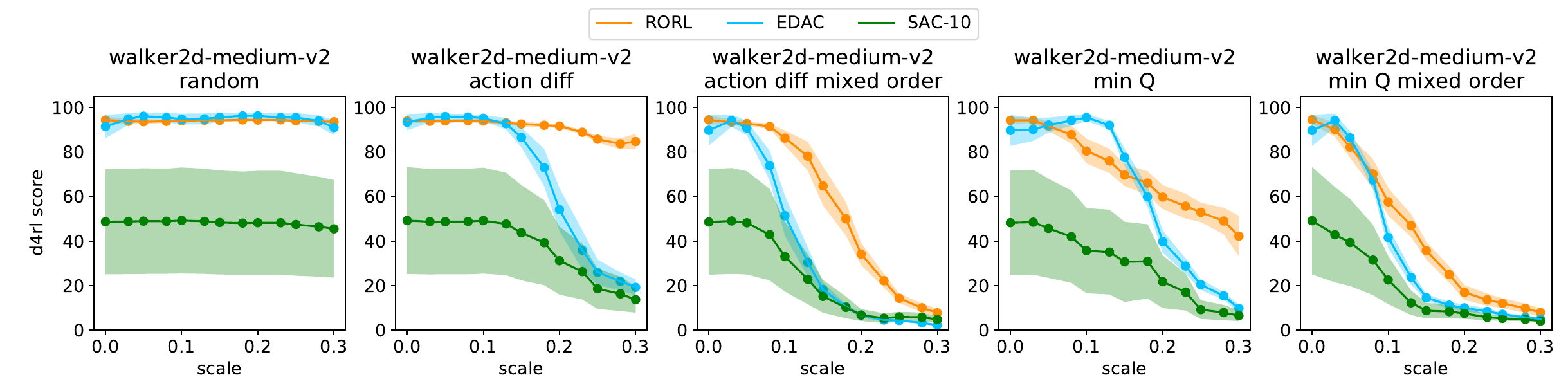}\label{fig:attack_walker2d}}
\subfigure[Performance under attack on the hopper-medium-v2 dataset]{\includegraphics[width=1\textwidth]{ 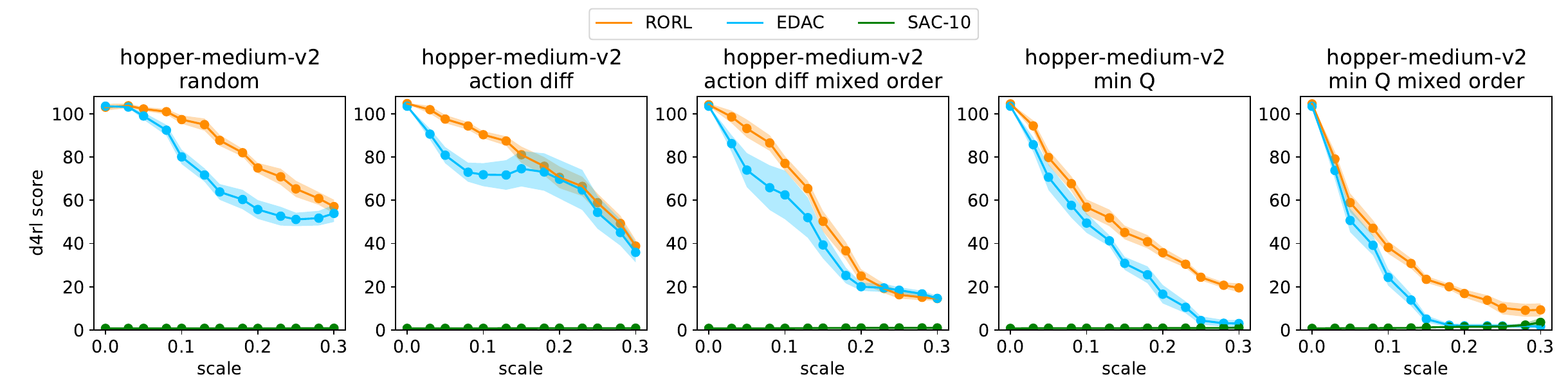}\label{fig:attack_hopper}}
\caption{(a) (b) (c) illustrate the performance of RORL, EDAC and SAC-10 under attack scales range $[0,0.3]$ of different attack types. The curves are averaged over 4 seeds and smoothed with a window size of 3. The shaded region represents half a standard deviation. }
\label{fig:attack_rorl_new}
\vspace{-1em}
\end{figure*}

\subsection{Adversarial Attack}
\label{sec:adver_attak}
We adopt three attack methods, namely \emph{random}, \emph{action diff}, and \emph{min Q} following prior works \cite{zhang2020robust,pinto2017robust}. Given perturbation scale $\epsilon$, the later two methods perform adversarial perturbation on observations and are given access to the agent's policy and value functions. Details about the three attack methods are as follows.
\begin{itemize}
    \item  \emph{random} uniformly samples perturbed states in an $l_{\infty}$ ball of norm $\epsilon$.
    \item \emph{action diff} is an effective attack based on the agent's policy and is proved to be an upper bound on the performance difference between perturbed and unperturbed environments \cite{zhang2020robust}. It directly finds perturbed states in an $l_{\infty}$ ball of norm $\epsilon$ to satisfy:
    $\max_{\hat s\in \mathbb{B}_d(s,\epsilon)} D_{\rm J} \big(\pi_{\theta}(\cdot|s)\|\pi_{\theta}(\cdot|\hat s)\big)$, i.e., $\min_{\hat s\in \mathbb{B}_d(s,\epsilon)} - D_{\rm J} \big(\pi_{\theta}(\cdot|s)\|\pi_{\theta}(\cdot|\hat s)\big)$.
    \item \emph{min Q} requires both the agent's policy and value function to perform a relatively stronger attack. The attacker finds a perturbed state to minimize the expected return of taking an action from that state: $\min_{\hat s\in \mathbb{B}_d(s,\epsilon)} Q(s, \pi_{\theta}(\hat s))$. For ensemble-based algorithms, $Q$ is set as the mean of ensemble $Q$ functions.
\end{itemize}

In our experiments, the two objectives of \emph{action diff} and \emph{min Q} are optimized via two ways. Specifically, we optimize the objectives through:
\begin{itemize}
    \item[(1)] selecting the best perturbed state from uniformly sampled 50 states, which has the advantage of simplicity and little computation cost. For attacks with this type of optimization, we use their original names without specifying.
    \item[(2)] uniformly sampling 20 initial states and performing gradient decent for 10 steps with a step size of$\frac{1}{10} \epsilon$ from each initial state to find the best perturbed state. Note that we need to clip the perturbed states within the $l_{\infty}$ ball at the end of each optimization step. Among the attacks using this optimization, we specifically remark "mixed-order" in their names.
\end{itemize}

We compare RORL with ensemble-based baselines EDAC and SAC-10 on halfcheetah-medium-v2, walker2d-medium-v2, and hopper-medium-v2 datasets. To handle large adversarial noise, we set the perturbation scales $\epsilon_{\rm P}$, $\epsilon_{\rm Q}$ and $\epsilon_{\rm ood}$ within $\{0.01, 0.03, 0.05, 0.07\}$ in RORL's training phase. More detailed description can be found in Appendix \ref{ap-implementation}. The results are shown in Figure \ref{fig:attack_rorl_new}. In the results, RORL exhibits improved robustness than other baselines under five types of adversarial attacks. On the other hand, we find that random attack is not effective for ensemble-based offline RL algorithms, and the ``mixed order'' attack brings more significant performance drop than vanilla zero-order optimization.

\begin{figure}[h]
    \centering
    \includegraphics[width=1\linewidth]{ 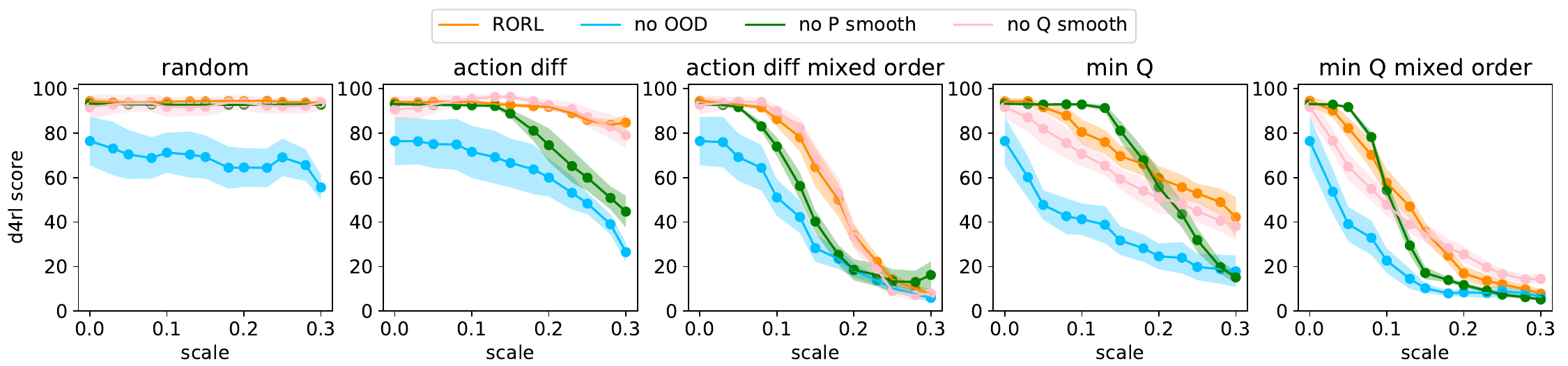}
    \caption{Ablation studies on the walker2d-medium-v2 dataset with varying perturbation scale. The curve is averaged across 4 random seeds and smoothed with a window size of 3. The shaded region represents half a standard deviation.}
    \label{fig:ablation}
\vspace{-1em}
\end{figure}

\subsection{Ablations}
\label{sec:ablation}
We conduct ablation studies on the walker2d-medium-v2 dataset to evaluate the importance of three terms, i.e., the policy smoothing loss, the $Q$ smoothing term and the OOD loss. From the results in Figure \ref{fig:ablation}, we can conclude that each loss contributes to the performance of RORL under adversarial observation attacks. The OOD loss is the most essential term, without which the performance is worse than RORL at almost all perturbation scales and all types of attacks. The policy smoothing loss is also important, especially for perturbation scales larger than 0.2. In addition, $Q$ smooth loss has the minimal impact, which is reasonable since the basic algorithm SAC-10 is based on 10 ensemble $Q$ networks. More ablations on the number of $Q$ networks, the effect of $\epsilon_{\rm ood}$ and $\tau$, and a comparison with more baselines can be found in Appendix \ref{ap:addtional_exp}. 

\begin{wraptable}{r}{5.5cm}
  \vspace{-1.4em}
  \centering
   \caption{Computational costs.}
   \label{tab:compute_cost}
   \begin{adjustbox}{max width=\linewidth}
   \begin{tabular}{ccc}
        \toprule
        & Runtime  & GPU Memory  \\ 
        & (s/epoch) & (GB) \\
        \midrule
        \textbf{CQL} & 32.40 & 1.4 \\
        \textbf{SAC-$10$} & 12.73 & 1.3 \\
        \textbf{PBRL} & 102.96 & 1.8 \\
        \textbf{EDAC} & 17.94 & 1.8 \\
        \textbf{RORL} & 29.56 & 2.1 \\
        \bottomrule
    \end{tabular}
    \end{adjustbox}
\vspace{-1em}
\end{wraptable}

\subsection{Computational Cost Comparison}
We compare the computational cost of RORL with prior works on a single machine with one GPU (Tesla V100 32G). For each method, we measure the average epoch time (i.e., 1$\times 10^3$ training steps) and the GPU memory usage on the hopper-medium-v2 task. More discussions are provided in Appendix \ref{ap:computational_cost}.

As shown in Table \ref{tab:compute_cost}, RORL runs slightly faster than CQL and much faster than PBRL. PBRL is so slow because it uses 10 $Q$ networks and needs OOD action sampling. In RORL, we also include the OOD state-action sampling and the robust training procedure, but we implemented these procedures efficiently based on the parallelization of $Q$ networks. Even so, RORL is still slower than SAC-10 and EDAC. As demonstrated in our experiments, RORL enjoys significantly better robustness than EDAC and SAC-10 under adversarial perturbations. Regarding the GPU memory consumption, RORL uses comparable memory to PBRL and EDAC, with only $16.7\%$ more memory usage.

\section{Related Works}
\paragraph{Offline RL} 
Research related to offline RL has experienced explosive growth in recent years. In model-free domain, offline RL methods focus on correcting the extrapolation error~\cite{fujimoto2019off} in the off-policy algorithms. The natural idea is to regularize the learned policy near the dataset distribution~\cite{wang2018exponentially,wu2019behavior,nair2020accelerating,wang2020critic,yang2021believe,fujimoto2021minimalist,yang2022rethinking}. For example, MARVIL reweights the policy with exponential advantage, which implicitly guarantees the policy within the KL-divergence neighborhood of the behavior policy. Another stream of model-free methods prevents the selection of OOD actions by penalizing their $Q$-value~\cite{kumar2019stabilizing,kumar2020conservative,an2021uncertainty,cheng2022adversarially} or $V$-learning~\cite{ma2022offline,kostrikov2021offline}. With the ensemble $Q$ networks and the additional loss term to diversify their gradients, EDAC~\cite{an2021uncertainty} achieves SOTA performance in the D4RL benchmark. Instead of diversifying gradients, PBRL~\cite{bai2021pessimistic} proposes an explicit value underestimation of OOD actions according to the uncertainty, which requires fewer ensemble networks. Inspired by EDAC and PBRL, we build our work upon ensemble networks, focusing more on the smoothness over the state space.

Besides the surprising empirical results, theoretical analysis of offline reinforcement learning algorithms is of increasing interest~\cite{chen2019information,jin2021pessimism,rashidinejad2021bridging,xie2021bellman,yin2022near}. Though the assumptions for the dataset vary in the different papers, they all suggest that pessimism and conservatism are necessary for offline RL. Our theoretical results can be viewed as robust extensions to previous theoretical results~\cite{jin2021pessimism,bai2021pessimistic}.

\paragraph{Robust RL} The research line of robust RL can be traced back to $H_\infty$-control theory~\cite{xie1990robust,bacsar2008h}, where policies are optimized to be well-performed in the worst possible deterministic environment. Depending on the definition, there are different streams of research on robust RL. As the extension of robust control to MDPs, Robust MDPs (RMDPs)~\cite{nilim2003robustness,iyengar2005robust,roy2017reinforcement,ho2018fast} are proposed to formulate the perturbation of transition probabilities for MDPs.
Though some recent analyses with theoretical guarantees come out under specific assumptions for RMDPs~\cite{zhou2021finite,yang2021towards,li2022policy}, there is currently no practical algorithm to solve RMDPs in a large-scale problem, expect some linear approximation attempt~\cite{tamar2013scaling}.
In online RL, domain randomization~\cite{tobin2017domain,mehta2020active} assumes the model uncertainty can be predefined in data collection by changing the setup of a simulator. However, it is not practical for offline RL.  
Robust Adversarial Reinforcement Learning (RARL)~\cite{pinto2017robust} and Noisy Robust Markov Decision Process (NR-MDP)
\cite{kamalaruban2020robust} study the robust RL with the perturbed actions, showing that the policy robustness to adversarial or noisy actions can also induce robustness for model parameter changes.
The most related work to ours is SR$^2$L~\cite{shen2020deep}, which shows policy smoothing can lead to significant performance improvement in the online setting. In contrast, we focus on the offline setting and tackle the potential overestimation of perturbed states. Another related work is S4RL~\cite{sinha2022s4rl}, where the authors study different data augmentation methods to smooth observations in offline RL. Their result supports the necessity of state smoothing. More related works are discussed in Appendix~\ref{appendix:related-work}.

\section{Conclusion}
We propose Robust Offline Reinforcement Learning (RORL) to trade-off conservatism and robustness for offline RL. To achieve that, we introduce the conservative smoothing technique for the perturbed states while actively underestimating their values based on pessimistic bootstrapping to keep conservative. We show that RORL can achieve comparable or even better performance with fewer ensemble $Q$ networks than previous methods in the offline RL benchmark. In addition, we demonstrate that RORL is considerably robust to adversarial perturbations across different types of attacks. We hope our work can promote the application of offline RL under real-world engineering conditions. 

The main limitation of our method is that the adversarial state sampling slows down the computing process, which may be improved in future work. Also, an interesting direction is to smooth or penalize the policy and $Q$ functions in latent spaces rather than the normalized observation space.

\section*{Acknowledgements}

This work was in part supported by Tencent Robotics X and Shanghai AI Laboratory, and in part by Science and Technology Innovation 2030 – “New Generation Artificial Intelligence” Major Project (No. 2018AAA0100904) and National Natural Science Foundation of China (62176135). The authors would like to thank the anonymous reviewers. Rui Yang thanks Yi Wang and Haoyi Song for valuable discussion.

\bibliographystyle{plain}

\newpage
\section*{Checklist}

The checklist follows the references.  Please
read the checklist guidelines carefully for information on how to answer these
questions.  For each question, change the default \answerTODO{} to \answerYes{},
\answerNo{}, or \answerNA{}.  You are strongly encouraged to include a {\bf
justification to your answer}, either by referencing the appropriate section of
your paper or providing a brief inline description.  For example:
\begin{itemize}
  \item Did you include the license to the code and datasets? \answerYes{See Section \ref{}.}
  \item Did you include the license to the code and datasets? \answerNo{The code and the data are proprietary.}
  \item Did you include the license to the code and datasets? \answerNA{}
\end{itemize}
Please do not modify the questions and only use the provided macros for your
answers.  Note that the Checklist section does not count towards the page
limit.  In your paper, please delete this instructions block and only keep the
Checklist section heading above along with the questions/answers below.

\begin{enumerate}

\item For all authors...
\begin{enumerate}
  \item Do the main claims made in the abstract and introduction accurately reflect the paper's contributions and scope?
    \answerYes{}
  \item Did you describe the limitations of your work?
    \answerYes{}
  \item Did you discuss any potential negative societal impacts of your work?
    \answerNA{}
  \item Have you read the ethics review guidelines and ensured that your paper conforms to them?
    \answerYes{}
\end{enumerate}

\item If you are including theoretical results...
\begin{enumerate}
  \item Did you state the full set of assumptions of all theoretical results?
    \answerYes{}
        \item Did you include complete proofs of all theoretical results?
    \answerYes{See Appendix \ref{appendix-theoretical}.}
\end{enumerate}

\item If you ran experiments...
\begin{enumerate}
  \item Did you include the code, data, and instructions needed to reproduce the main experimental results (either in the supplemental material or as a URL)?
    \answerYes{See Sec \ref{sec:intro} and Appendix \ref{ap-implementation}.}
  \item Did you specify all the training details (e.g., data splits, hyper-parameters, how they were chosen)?
    \answerYes{See Appendix \ref{ap-implementation}.}
        \item Did you report error bars (e.g., with respect to the random seed after running experiments multiple times)?
    \answerYes{See Sec \ref{sec:experiments}.}
        \item Did you include the total amount of compute and the type of resources used (e.g., type of GPUs, internal cluster, or cloud provider)?
    \answerYes{See Appendix \ref{ap-implementation}.}
\end{enumerate}

\item If you are using existing assets (e.g., code, data, models) or curating/releasing new assets...
\begin{enumerate}
  \item If your work uses existing assets, did you cite the creators?
    \answerYes{We cited D4RL \cite{d4rl-2020} and EDAC\cite{an2021uncertainty} for their datasets and code. }
  \item Did you mention the license of the assets?
    \answerYes{}
  \item Did you include any new assets either in the supplemental material or as a URL?
    \answerYes{We included our code in the anonymized link.}
  \item Did you discuss whether and how consent was obtained from people whose data you're using/curating?
    \answerYes{Opensource code and dataset.}
  \item Did you discuss whether the data you are using/curating contains personally identifiable information or offensive content?
    \answerNA{}
\end{enumerate}

\item If you used crowdsourcing or conducted research with human subjects...
\begin{enumerate}
  \item Did you include the full text of instructions given to participants and screenshots, if applicable?
    \answerNA{}
  \item Did you describe any potential participant risks, with links to Institutional Review Board (IRB) approvals, if applicable?
    \answerNA{}{}
  \item Did you include the estimated hourly wage paid to participants and the total amount spent on participant compensation?
    \answerNA{}
\end{enumerate}

\end{enumerate}


\appendix

\section{Theoretical Analysis}
\label{appendix-theoretical}
In this section, we provide detailed theoretical analysis and proofs in linear MDPs \cite{lsvi-2020}.

\subsection{LSVI Solution}

In linear MDPs, we assume that the transition dynamics and reward function take the form of
\begin{equation}
\label{eq::pf_def_linearMDP}
\mathbb{P}_t(s_{t+1} \,|\, s_t, a_t) = \langle \psi(s_{t+1}), \phi(s_t, a_t) \rangle, \quad r(s_t, a_t) = \theta^\top \phi(s_t, a_t), \quad\forall(s_{t+1}, a_t, s_t)\in\mathcal{S}\times\mathcal{A}\times\mathcal{S},
\end{equation}
where the feature embedding $\phi: \cS\times\cA\mapsto \RR^d$ is known. We further assume that the reward function $r:\cS\times\cA\mapsto[0, 1]$ is bounded and the feature is bounded by $\|\phi\|_2 \leq 1$. 

Given the offline dataset $\cD$, the parameter $w_t$ can be solved in the closed-form by following the LSVI algorithm, which minimizes the following loss function,
\begin{equation}
\label{eq::appendix_OOD_LSVI}
\widehat w_t = \min_{w\in \RR^d} \sum^m_{i = 1}\bigl(\phi(s^i_t, a^i_t)^\top w - r(s^i_t, a^i_t)- V_{t+1}(s^i_{t+1})\bigr)^2 
\end{equation}
where $V_{t+1}$ is the estimated value function in the $(t+1)$-th step, and $y_t^i=r(s^i_t, a^i_t)+ V_{t+1}(s^i_{t+1})$ is the target of LSVI. The explicit solution to (\ref{eq::appendix_OOD_LSVI}) takes the form of
\begin{equation}
\label{eq::pf_tilde_w}
\widehat w_t = \Lambda^{-1}_t\sum^m_{i = 1}\phi(s^i_t, a^i_t)y_t^i,\quad {\rm where~}\Lambda_t = \sum^m_{i=1}\phi(s^i_t, a^i_t)\phi(s^i_t, a^i_t)^\top  
\end{equation}

\subsection{RORL Solution}
In RORL, since we introduce the conservative smoothing loss and the OOD loss to learn the $Q$ value function, the parameter $\widetilde w_t$ of RORL can be solved as follows:
\begin{equation}
\label{eq:simplified_problem}
\begin{aligned}
    \widetilde w_t =  \min_{w\in \mathcal{R}^d} & \Big[\sum_{i=1}^{m} \big(y_t^i-Q_{w}(s_t^i,a_t^i)\big)^2 +   \sum_{i=1}^{m} \frac{1}{|\mathbb{B}_d(s_t^i,\epsilon)|} \sum_{\hat{s}_t^i\in \mathcal{D}_{\text{ood}}(s_t^i)}  \big(Q_{w}(s_t^i,a_t^i) - Q_{w}(\hat{s}_t^i,a_t^i)\big)^2  + \\ & \sum_{(\hat s, \hat a, \hat y) \sim \mathcal{D}_{\text{ood}} } \big(\hat y - Q_{w}(\hat s,\hat a)\big)^2 \Big],
\end{aligned}
\end{equation}
which is a simplified learning objective for linear MDPs. The first term is the ordinary TD-error, the second term is the $Q$ value smoothing loss, and the third term is the additional OOD loss. The explicit solution of Eq.~\eqref{eq:simplified_problem} takes the following form by following LSVI:
\begin{equation}\label{eq::appendix_w_ood_solu}
    \widetilde w_t =\widetilde \Lambda_t^{-1} \Big( \sum_{i=1}^{m} \phi(s_t^i,a_t^i) y_t^i + \sum_{(\hat s, \hat a, \hat y) \sim \mathcal{D}_{\text{ood}} }\phi(\hat s,\hat a)  \hat y  \Big),
\end{equation}
where the covariance matrix $\widetilde \Lambda_t$ is defined as
\begin{equation}
\begin{aligned}
        \widetilde \Lambda_t = &\sum_{i=1}^{m} \phi(s_t^i,a_t^i)\phi(s_t^i,a_t^i)^\top + \sum_{(\hat s, \hat a) \sim \mathcal{D}_{\text{ood}}} \phi(\hat s_t,\hat a_t)\phi(\hat s_t,\hat a_t)^\top \\
        &+ \sum_{i=1}^{m} \frac{1}{|\mathbb{B}_d(s_t^i,\epsilon)|}  \sum_{\hat s_t^i \sim \mathcal{D}_{\text{ood}}(s_t^i)} \big[\phi(\hat s_t^i,a_t^i) - \phi(s_t^i,a_t^i)\big]\big[\phi(\hat s_t^i,a_t^i) - \phi(s_t^i,a_t^i)\big]^\top. 
\label{eq:covariance-rorl}
\end{aligned}
\end{equation}
We denote the first term of Eq.~\eqref{eq:covariance-rorl} as $\widetilde \Lambda_t^{\rm in}$, the second term as $\widetilde \Lambda_t^{\rm ood}$, and the third term as $\widetilde \Lambda_t^{\text{ood\_diff}}$.

\subsection{\texorpdfstring{$\xi$}{xi}-Uncertainty Quantifier}

\begin{theorem*}[Theorem 1 restate]
\label{app-tm:PD_matrix}
Assume $\exists i\in[1,m]$ the vector group of all $\hat s_t^i \sim \mathcal{D}_{\text{ood}}(s_t^i)$: $\{\phi(\hat s_t^i,a_t^i) - \phi(s_t^i,a_t^i)\}$ is full rank, then the covariance matrix $\widetilde \Lambda_t^{\rm{ood\_diff}}$ is positive-definite: $\widetilde \Lambda_t^{\rm{ood\_diff}} \succeq \lambda \cdot \mathrm{I}$ where $\lambda > 0$.
\end{theorem*}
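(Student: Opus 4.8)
The plan is to exploit the fact that $\widetilde\Lambda_t^{\mathrm{ood\_diff}}$ is, by construction, a nonnegatively weighted sum of rank-one outer products, hence automatically symmetric and positive semi-definite; the only real content is upgrading this to strict positive-definiteness with an explicit constant, using the full-rank hypothesis. Introduce the shorthand $v^{i}_{\hat s} := \phi(\hat s_t^i, a_t^i) - \phi(s_t^i, a_t^i)$ and the weights $c_i := 1/|\mathbb{B}_d(s_t^i,\epsilon)| > 0$, so that $\widetilde\Lambda_t^{\mathrm{ood\_diff}} = \sum_{i=1}^{m} c_i \sum_{\hat s_t^i \sim \mathcal{D}_{\mathrm{ood}}(s_t^i)} v^{i}_{\hat s}(v^{i}_{\hat s})^\top$. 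For any $x \in \RR^d$ one then has $x^\top \widetilde\Lambda_t^{\mathrm{ood\_diff}} x = \sum_{i=1}^{m} c_i \sum_{\hat s_t^i} \bigl((v^{i}_{\hat s})^\top x\bigr)^2 \ge 0$, which already yields positive semi-definiteness.

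Next I would localize to the index $i^*$ guaranteed by the hypothesis, for which $\{v^{i^*}_{\hat s}\}_{\hat s}$ spans $\RR^d$ — this is what ``the vector group is full rank'' means, and it implicitly requires $|\mathcal{D}_{\mathrm{ood}}(s_t^{i^*})| \ge d$. Discarding the remaining terms (each still positive semi-definite) and applying Weyl's inequality, $\lambda_{\min}\bigl(\widetilde\Lambda_t^{\mathrm{ood\_diff}}\bigr) \ge c_{i^*}\,\lambda_{\min}(G)$, where $G := \sum_{\hat s} v^{i^*}_{\hat s}(v^{i^*}_{\hat s})^\top$ is the Gram matrix of the difference vectors for state $i^*$. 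Since those vectors span $\RR^d$, any $x$ with $Gx = 0$ satisfies $(v^{i^*}_{\hat s})^\top x = 0$ for all $\hat s$, hence $x \perp \mathrm{span}\{v^{i^*}_{\hat s}\} = \RR^d$, forcing $x = 0$; thus $G$ has full rank $d$ and $\lambda_{\min}(G) > 0$.

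Finally I would set $\lambda := c_{i^*}\,\lambda_{\min}(G) > 0$, so that $x^\top \widetilde\Lambda_t^{\mathrm{ood\_diff}} x \ge c_{i^*}\, x^\top G x \ge \lambda \|x\|_2^2$ for every $x \in \RR^d$, i.e.\ $\widetilde\Lambda_t^{\mathrm{ood\_diff}} \succeq \lambda \cdot \mathrm{I}$, which is the claim. (If one prefers to avoid eigenvalue language, the same conclusion follows by noting that $x \mapsto x^\top G x$ is continuous and strictly positive on the compact unit sphere, hence attains a positive minimum there, and then invoking homogeneity.)

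There is no deep obstacle here: the statement is essentially the linear-algebra fact that the Gram matrix of a spanning set of vectors is positive-definite. The points that need care are merely (i) reading ``full rank'' for a vector group as ``spanning $\RR^d$'' together with the implicit cardinality requirement $|\mathcal{D}_{\mathrm{ood}}(s_t^{i^*})| \ge d$, and (ii) exhibiting an \emph{explicit} $\lambda > 0$ rather than stopping at ``positive-definite.'' It is worth remarking that this is exactly the gap relative to PBRL: its $\widetilde\Lambda_t^{\mathrm{ood}} = \sum \phi(\hat s, \hat a)\phi(\hat s, \hat a)^\top$ need not span $\RR^d$ when the offline data has limited coverage and the OOD samples sit in a small $\ell_\infty$-ball, whereas the difference vectors $v^i_{\hat s}$ can span even in that regime — which is why the extra smoothing term repairs positive-definiteness of the full covariance in Corollary \ref{coro:positive-def}.
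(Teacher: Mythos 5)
Your proof is correct and follows essentially the same route as the paper's: decompose $\widetilde\Lambda_t^{\mathrm{ood\_diff}}$ into nonnegatively weighted rank-one outer products of the difference vectors, obtain positive semi-definiteness from the sum of squares, and then use the spanning (full-rank) hypothesis at the index $i^*$ to rule out a nonzero null vector. The only differences are cosmetic refinements — you argue strict definiteness directly via orthogonality to a spanning set rather than by the paper's contradiction, and you exhibit the explicit constant $\lambda = c_{i^*}\lambda_{\min}(G)$ — neither of which changes the substance of the argument.
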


\begin{proof}
For the $\widetilde \Lambda_t^{\rm{ood\_diff}}$ matrix (i.e., the third part in Eq.~\eqref{eq:covariance-rorl}), we denote the covariance matrix for a specific $i$ as $\Phi^i_t$. Then we have $\widetilde \Lambda_t^{\rm{ood\_diff}}=\sum_{i=1}^{m} \Phi^i_t$. In the following, we discuss the condition of positive-definiteness of $\Phi^i_t$. For the simplicity of notation, we omit the superscript and subscript of $s_t^i$ and $a_t^i$ for given $i$ and $t$. Specifically, we define
\begin{equation}\nonumber
\Phi^i_t = \frac{1}{|\mathbb{B}_d(s_t^i,\epsilon)|}  \sum_{\hat s_j \sim \mathcal{D}_{\text{ood}}(s)} \big[\phi(\hat s_j,a) - \phi(s,a)\big]\big[\phi(\hat s_j,a) - \phi(s,a)\big]^\top,
\end{equation}
where $j\in\{1,\ldots,N\}$ indicates we sample $|\mathbb{B}_d(s_t^i,\epsilon)|=N$ perturbed states for each $s$. For a nonzero vector $y\in \mathbb{R}^d$, we have
\begin{equation}\label{eq:app-semi-definite}
\begin{aligned}
y^\top \Phi^i_t y &= y^\top\left(\frac{1}{N} \sum_{j=1}^N \big(\phi(\hat s_j,a)-\phi(s,a)\big)\big(\phi(\hat s_j,a)-\phi(s,a)\big)^\top\right) y \\
&= \frac{1}{N} \sum_{j=1}^N y^\top \big(\phi(\hat s_j,a)-\phi(s,a)\big)\big(\phi(\hat s_j,a)-\phi(s,a)\big)^\top y \\
&=\frac{1}{N} \sum_{j=1}^N \left(\big(\phi(\hat s_j,a)-\phi(s,a)\big)^\top y \right)^2 \geq 0,
\end{aligned}
\end{equation}
where the last inequality follows from the observation that $\big(\phi(\hat s_j,a)-\phi(s,a)\big)^\top y$ is a scalar. Then $\Phi^i_t$ is always positive \textbf{semi-definite}.

In the following, we denote $z_j=\phi(\hat s_j,a)-\phi(s,a)$. Then we need to prove that the condition to make $\Phi^i_t$ positive \textbf{definite} is ${\rm rank}[z_1,\ldots,z_N]=d$, where $d$ is the feature dimension. Our proof follows contradiction. 

In Eq.~\eqref{eq:app-semi-definite}, when $y^\top \Phi^i_t y=0$ with a nonzero vector $y$, we have $z_j^\top y=0$ for all $j=1,\ldots,N$. Suppose the set $\{z_1,\ldots,z_N\}$ spans $\mathbb{R}^d$, then there exist real numbers $\{\alpha_1,\ldots,\alpha_N\}$ such that $y=\alpha_1  z_1 +\dots+\alpha_N z_N$. But we have $y^\top y=\alpha_1  z_1^\top y + \dots +\alpha_N z_N^\top y=\alpha_1 \times 0+\ldots+\alpha_N \times 0=0$, yielding that $y=\mathbf{0}$, which forms a contradiction.

Hence, if the set $\{z_1,\ldots,z_N\}$ spans $\mathbb{R}^d$, which is equivalent to ${\rm rank}[z_1,\ldots,z_N]=d$, then $\Phi^i_t$ is positive \textbf{definite}. Under the given conditions, we know that $\exists k\in[1,m]$, for any nonzero vector $y\in \mathbb{R}^d$, $y^\top \Phi^k_t y > 0$. We have $y^\top \widetilde \Lambda_t^{\rm{ood\_diff}} y = \sum_{i=1}^{m} y^\top \Phi^i_t y \geq y^\top \Phi^k_t y > 0$. Therefore, $\widetilde \Lambda_t^{\rm{ood\_diff}}$ is positive definite, which concludes our proof.
\end{proof}

\paragraph{Remark.}
As a special case, when (\romannumeral1) the size of $\mathbb{B}_d(s_t^i,\epsilon)$ is sufficient, (\romannumeral2) the dimension of states is the same as the feature $\phi(s,a)$ and $\phi(s,a)=s$ and (\romannumeral3) each dimension of the state perturbation $\hat s_t^i - s_t^i$ is independent, the matrix $\widetilde \Lambda_t^{\text{ood\_diff}}$ satisfies:
\begin{equation*}
    \widetilde \Lambda_t^{\text{ood\_diff}} = \sum_{i=1}^{m} \frac{1}{|\mathbb{B}_d(s_t^i,\epsilon)|}  \sum_{\hat s_t^i \sim \mathbb{B}_d(s_t^i,\epsilon)} (\hat s_t^i - s_t^i)(\hat s_t^i - s_t^i)^\top  \approx  \frac{m \epsilon^2}{3} \cdot \mathrm{I}.
\end{equation*}

When we use neural networks as the feature extractor, the assumption in the above Theorem needs (\romannumeral1) the size of samples $\mathbb{B}_d(s_t^i,\epsilon)$ is sufficient, and (\romannumeral2) the neural network maintains useful variability for state-action features. To obtain the second constraint, we require that the Jacobian matrix of $\phi(s,a)$ has full rank. Nevertheless, when we use a network as the feature embedding, such a condition can generally be met since the neural network has high randomness and nonlinearity, which results in the feature embedding with sufficient variability. Generally, we only need to
enforce a bi-Lipschitz continuity for the feature embedding. We denote $x_1=(s_1,a)$ and $x_2=(s_2,a)$ as two different inputs. $x^k_1$ is the $k$-th dimension of $x_1$. The bi-Lipschitz constraint can be formed as
\begin{equation}
C_1\|x^k_1-x^k_2\|_\mathcal{X}\leq \|\phi(x_1)-\phi(x_2)\|_\Phi\leq C_2\|x^k_1-x^k_2\|_\mathcal{X}, \quad \forall k\in (1,|\mathcal{X}|),
\end{equation}
where $C_1<C_2$ are two positive constants. The lower-bound $C_1$ ensures the features space has enough variability for perturbed states, and the upper-bound can be obtained by Spectral regularization \cite{gogianu2021spectral} that makes the network easy to coverage. An approach to obtain bi-Lipschitz continuity is to regularize the norm of the gradients by using the gradient penalty as 
\[
\mathcal{L}_{\rm bilip}=\mathbb{E}_x\big[ \big(\min\big(\|
\nabla_{x^k} \phi(x)\|-C_1,0)\big)^2 + \big(\max\big(\|
\nabla_{x^k} \phi(x)\|-C_2,0)\big)^2
\big],\quad \forall k\in (1,|\mathcal{X}|).
\]

In experiments, we do not use explicit constraints (e.g., Spectral regularization) for the upper bound since the state has relatively low dimensions, and we find a small fully connected network does not resulting in a large $C_2$ empirically.

Recall the covariance matrix of PBRL is $\widetilde \Lambda_t^{\text{PBRL}}=\widetilde \Lambda_t^{\text{in}}+\widetilde \Lambda_t^{\text{ood}}$, and RORL has a covariance matrix as $\widetilde \Lambda_t=\widetilde \Lambda_t^{\text{PBRL}}+\widetilde \Lambda_t^{\text{ood\_diff}}$, we have the following corollary based on Theorem \ref{tm:PD_matrix}.

\begin{corollary*}[Corollary 1 restate]
\label{app-coro:positive-def}
Under the linear MDP assumptions and conditions in Theorem \ref{tm:PD_matrix}, we have $\widetilde \Lambda_t \succeq \widetilde \Lambda_t^{\rm PBRL}$. Further, the covariance matrix $\widetilde \Lambda_t$ of RORL is positive-definite: $\widetilde \Lambda_t \succeq \lambda \cdot \mathrm{I}$, where $\lambda > 0$.
\end{corollary*}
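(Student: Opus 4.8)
The plan is to obtain both assertions as immediate consequences of Theorem~\ref{app-tm:PD_matrix} and the basic fact that the Loewner order is additive under positive-semidefinite perturbations. Start from the decomposition $\widetilde \Lambda_t = \widetilde \Lambda_t^{\rm PBRL} + \widetilde \Lambda_t^{\text{ood\_diff}}$ implicit in Eq.~\eqref{eq:covariance-rorl}, where $\widetilde \Lambda_t^{\rm PBRL} = \widetilde \Lambda_t^{\rm in} + \widetilde \Lambda_t^{\rm ood}$. The comparison $\widetilde \Lambda_t \succeq \widetilde \Lambda_t^{\rm PBRL}$ is by definition equivalent to $\widetilde \Lambda_t - \widetilde \Lambda_t^{\rm PBRL} = \widetilde \Lambda_t^{\text{ood\_diff}} \succeq 0$, and this positive-semidefiniteness is exactly what the first half of the proof of Theorem~\ref{app-tm:PD_matrix} establishes: for any $y \in \RR^d$ one has $y^\top \Phi_t^i y = \tfrac1N \sum_j \big((\phi(\hat s_j,a)-\phi(s,a))^\top y\big)^2 \ge 0$, and summing over $i$ gives $y^\top \widetilde \Lambda_t^{\text{ood\_diff}} y \ge 0$. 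So the first inequality requires nothing beyond citing that computation.

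For the second claim I would first note that $\widetilde \Lambda_t^{\rm PBRL}$ is itself positive semidefinite, being a sum of rank-one outer products $\phi(\cdot,\cdot)\phi(\cdot,\cdot)^\top$, each of which is PSD. Next, under the full-rank hypothesis carried over from Theorem~\ref{app-tm:PD_matrix}, that theorem gives $\widetilde \Lambda_t^{\text{ood\_diff}} \succeq \lambda \cdot \mathrm{I}$ for some $\lambda > 0$ (one may take $\lambda$ to be the smallest eigenvalue of $\widetilde \Lambda_t^{\text{ood\_diff}}$, which is strictly positive precisely because the matrix is positive definite; the Remark following Theorem~\ref{app-tm:PD_matrix} even gives $\lambda = m\epsilon^2/3$ in the special case $\phi(s,a)=s$ with dense sampling). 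Adding the two estimates, for every unit vector $y$, $y^\top \widetilde \Lambda_t y = y^\top \widetilde \Lambda_t^{\rm PBRL} y + y^\top \widetilde \Lambda_t^{\text{ood\_diff}} y \ge 0 + \lambda$, i.e. $\widetilde \Lambda_t \succeq \lambda \cdot \mathrm{I}$, which is the desired positive-definiteness.

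There is essentially no hard step here: the corollary is bookkeeping on top of Theorem~\ref{app-tm:PD_matrix}. The only points that deserve care are (i) stating explicitly that ``$\succeq$'' denotes the Loewner order, so that $\widetilde \Lambda_t \succeq \widetilde \Lambda_t^{\rm PBRL}$ is unambiguous; (ii) making sure the full-rank hypothesis of Theorem~\ref{app-tm:PD_matrix} is assumed to hold; and (iii) recording the monotonicity consequence that will actually be used downstream, namely that $A \succeq B \succ 0$ implies $B^{-1} \succeq A^{-1}$ and hence $\phi^\top \widetilde \Lambda_t^{-1} \phi \le \phi^\top (\widetilde \Lambda_t^{\rm PBRL})^{-1} \phi$ pointwise, which is the inequality $\Gamma_t^{\rm lcb} < \Gamma_t^{\rm lcb\_PBRL}$ feeding into Corollary~\ref{cor::opt_gap_rorl}. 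I would include a one-line justification of (iii) (diagonalize $A^{-1/2} B A^{-1/2} \preceq I$, then conjugate) for completeness, though it may also simply be cited as a standard fact.
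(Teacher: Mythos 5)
Your proposal is correct and follows exactly the route the paper intends: the corollary is treated as an immediate consequence of Theorem~\ref{app-tm:PD_matrix}, via the decomposition $\widetilde \Lambda_t = \widetilde \Lambda_t^{\rm PBRL} + \widetilde \Lambda_t^{\text{ood\_diff}}$, positive semidefiniteness of $\widetilde \Lambda_t^{\rm PBRL}$ as a sum of outer products, and $\widetilde \Lambda_t^{\text{ood\_diff}} \succeq \lambda \cdot \mathrm{I}$ under the full-rank hypothesis. Your added remarks (making the Loewner order and the carried-over full-rank assumption explicit, and noting the inversion-monotonicity consequence used later) are sound clarifications rather than a different argument.
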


Recent theoretical analysis shows that an appropriate uncertainty quantification is essential for provable efficiency in offline RL \cite{jin2021pessimism,xie2021bellman,bai2021pessimistic}. Pessimistic Value Iteration \cite{jin2021pessimism} defines a general $\xi$-uncertainty quantifier as the penalty and achieves provable efficient pessimism in offline RL. We give the definition of a $\xi$-uncertainty quantifier as follows.

\newtheorem{definition}{Definition}
\begin{definition}[$\xi$-Uncertainty Quantifier \cite{jin2021pessimism}]
The set of penalization $\{\Gamma_t\}_{t\in[T]}$ forms a $\xi$-Uncertainty Quantifier if it holds with probability at least $1 - \xi$ that
\begin{equation*}
|\widehat \cT V_{t+1}(s, a) - \cT V_{t+1}(s, a)| \leq \Gamma_t(s, a)
\end{equation*}
for all $(s, a)\in\cS\times\cA$, where $\cT$ is the Bellman operator and $\widehat \cT$ is the empirical Bellman operator that estimates $\cT$ based on the data.
\end{definition}

In linear MDPs, Lower Confidence Bound (LCB)-penalty \cite{bandit-2011,lsvi-2020} is known to be a $\xi$-uncertainty quantifier for appropriately selected $\beta_t$ as $\Gamma^{\rm lcb}(s_t,a_t)=\beta_t\cdot\big[\phi(s_t,a_t)^\top\Lambda_t^{-1}\phi(s_t,a_t)\big]^{\nicefrac{1}{2}}$. Following the analysis of PBRL \cite{bai2021pessimistic}, since the bootstrapped uncertainty is an estimation of the LCB-penalty, the proposed RORL also form a valid $\xi$-uncertainty quantifier with the covariance matrix $\widetilde \Lambda_t \succeq \lambda \cdot \mathrm{I}$ given in Corollary \ref{coro:positive-def}.

\begin{theorem}
\label{app-thm:rorl-lcb}
For all the OOD datapoint $(\hat{s}, \hat{a}, \hat{y})\in \mathcal{D}_{\rm ood}$, if we set $\hat{y} = \mathcal{T} V_{t+1}(s^{\text{\rm ood}}, a^{\text{\rm ood}})$, it then holds for $\beta_t=\mathcal{O}\bigl(T\cdot \sqrt{d} \cdot \text{log}(T/\xi)\bigr)$ that
\begin{equation}
\label{app-eq:lcb-def}
\Gamma^{\rm lcb}_t(s_t,a_t)=\beta_t\big[\phi(s_t,a_t)^\top \widetilde \Lambda_t^{-1}\phi(s_t,a_t)\big]^{\nicefrac{1}{2}}
\end{equation}
forms a valid $\xi$-uncertainty quantifier, where $\widetilde \Lambda_t$ is the covariance matrix of RORL.
\end{theorem}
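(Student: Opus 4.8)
The plan is to follow the pessimistic value iteration template of \cite{jin2021pessimism}, as adapted to the bootstrapped/OOD-sampling setting in PBRL \cite{bai2021pessimistic}, the one genuinely new ingredient being control of the bias that the conservative smoothing term injects into the normal equations. First I would reduce the claim to a weighted bound on a weight difference. Under the linear MDP assumption \eqref{eq::pf_def_linearMDP}, for the (data-dependent) value function $V_{t+1}$ there is a weight vector $w_t^\star$ with $\|w_t^\star\|_2 \le 2T\sqrt d$ and $\cT V_{t+1}(s,a) = \phi(s,a)^\top w_t^\star$ for all $(s,a)$, whereas the empirical RORL backup is $\widehat\cT V_{t+1}(s,a) = \phi(s,a)^\top \widetilde w_t$ with $\widetilde w_t$ in closed form as in \eqref{eq::appendix_w_ood_solu}. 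By Cauchy--Schwarz in the $\widetilde\Lambda_t^{-1}$ inner product,
\[
\bigl|\widehat\cT V_{t+1}(s,a) - \cT V_{t+1}(s,a)\bigr| \;=\; \bigl|\phi(s,a)^\top(\widetilde w_t - w_t^\star)\bigr| \;\le\; \|\phi(s,a)\|_{\widetilde\Lambda_t^{-1}}\cdot\bigl\|\widetilde\Lambda_t(\widetilde w_t - w_t^\star)\bigr\|_{\widetilde\Lambda_t^{-1}},
\]
and since Corollary~\ref{app-coro:positive-def} gives $\widetilde\Lambda_t\succeq\lambda\mathrm I$, every inverse here is well defined; it then suffices to show the last factor is $\mathcal O(T\sqrt d\,\log(T/\xi))$ with probability at least $1-\xi$, which forces the stated $\beta_t$.

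Second, I would expand $\widetilde\Lambda_t(\widetilde w_t - w_t^\star)$ using \eqref{eq::appendix_w_ood_solu} together with the three-part decomposition $\widetilde\Lambda_t = \widetilde\Lambda_t^{\text{in}} + \widetilde\Lambda_t^{\text{ood}} + \widetilde\Lambda_t^{\text{ood\_diff}}$ of \eqref{eq:covariance-rorl}, which gives
\[
\begin{aligned}
\widetilde\Lambda_t(\widetilde w_t - w_t^\star) ={}& \sum_{i=1}^{m}\phi(s_t^i,a_t^i)\bigl(y_t^i - \phi(s_t^i,a_t^i)^\top w_t^\star\bigr) \\
&+ \sum_{(\hat s,\hat a,\hat y)\sim\cD_{\text{ood}}}\phi(\hat s,\hat a)\bigl(\hat y - \phi(\hat s,\hat a)^\top w_t^\star\bigr) \;-\; \widetilde\Lambda_t^{\text{ood\_diff}}\,w_t^\star .
\end{aligned}
\]
The structural point is that the smoothing loss $\bigl((\phi(s_t^i,a_t^i) - \phi(\hat s_t^i,a_t^i))^\top w\bigr)^2$ is a pure quadratic form in $w$, so it feeds $\widetilde\Lambda_t^{\text{ood\_diff}}$ into the covariance but contributes nothing to the right-hand side of the normal equations --- hence the bare $-\widetilde\Lambda_t^{\text{ood\_diff}}w_t^\star$ bias term, which is absent in plain LSVI and in PBRL. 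The middle sum vanishes because, by the hypothesis of the theorem, $\hat y = \cT V_{t+1}(s^{\text{ood}},a^{\text{ood}}) = \phi(\hat s,\hat a)^\top w_t^\star$ for every OOD datapoint.

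Third, I would bound the two remaining terms. For the first sum, each summand equals the mean-zero, $\mathcal O(T)$-bounded martingale difference $V_{t+1}(s_{t+1}^i) - \EE_{s'\sim\PP(\cdot\mid s_t^i,a_t^i)}[V_{t+1}(s')]$; since $\widetilde\Lambda_t^{\text{ood}}\succeq 0$ and $\widetilde\Lambda_t^{\text{ood\_diff}}\succeq\lambda\mathrm I$ (Theorem~\ref{app-tm:PD_matrix}) we have $\widetilde\Lambda_t\succeq\widetilde\Lambda_t^{\text{in}} + \lambda\mathrm I$, so $\|\cdot\|_{\widetilde\Lambda_t^{-1}}\le\|\cdot\|_{(\lambda\mathrm I + \widetilde\Lambda_t^{\text{in}})^{-1}}$ and the standard self-normalized (elliptical-potential) concentration inequality for linear least squares applies verbatim, combined with a uniform covering argument over the function class containing the estimated $V_{t+1}$; this is the step that produces the $\sqrt d$ and $\log(T/\xi)$ factors and sets $\beta_t = \mathcal O(T\sqrt d\,\log(T/\xi))$. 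For the smoothing bias, $\widetilde\Lambda_t^{\text{ood\_diff}}\preceq\widetilde\Lambda_t$ yields $\|\widetilde\Lambda_t^{\text{ood\_diff}}w_t^\star\|_{\widetilde\Lambda_t^{-1}}^2 \le (w_t^\star)^\top\widetilde\Lambda_t^{\text{ood\_diff}}w_t^\star = \sum_{i=1}^{m}\tfrac{1}{|\mathbb B_d(s_t^i,\epsilon)|}\sum_{\hat s_t^i}\bigl(\cT V_{t+1}(\hat s_t^i,a_t^i) - \cT V_{t+1}(s_t^i,a_t^i)\bigr)^2$, and under the (bi-)Lipschitz regularity of $\phi$ discussed after Theorem~\ref{app-tm:PD_matrix} each term is $\mathcal O(\epsilon^2 T^2 d)$, so this contribution is $\mathcal O(\epsilon\sqrt m\,T\sqrt d)$ and is absorbed into the constant of $\beta_t$ for the small perturbation radii used in training (it vanishes as $\epsilon\to 0$).

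Combining the above bounds gives, on an event of probability at least $1-\xi$, $|\widehat\cT V_{t+1}(s,a) - \cT V_{t+1}(s,a)| \le \beta_t\|\phi(s,a)\|_{\widetilde\Lambda_t^{-1}} = \Gamma_t^{\rm lcb}(s,a)$ simultaneously for all $(s,a)$ with $\beta_t = \mathcal O(T\sqrt d\,\log(T/\xi))$, which is exactly the $\xi$-uncertainty quantifier property; the suboptimality bound of Corollary~\ref{cor::opt_gap_rorl} then follows from the pessimistic value iteration analysis of \cite{jin2021pessimism}. I expect the main obstacle to be the covering/self-normalized step for the stochastic term --- controlling it uniformly over the data-dependent $V_{t+1}$ --- which is the technically heaviest part and is inherited from \cite{jin2021pessimism,bai2021pessimistic}; a secondary subtlety is making precise that the new smoothing-bias term stays lower order, which is why the statement implicitly needs a small $\epsilon$, small enough that $\widetilde\Lambda_t^{\text{ood\_diff}}$ has small operator norm while Theorem~\ref{app-tm:PD_matrix} still guarantees $\widetilde\Lambda_t^{\text{ood\_diff}}\succeq\lambda\mathrm I$.
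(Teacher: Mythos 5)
Your proposal follows essentially the same route as the paper's proof: write the error as $\phi(s,a)^\top(\widetilde w_t - w_t)$, decompose it via the three-part covariance $\widetilde\Lambda_t=\widetilde\Lambda_t^{\text{in}}+\widetilde\Lambda_t^{\text{ood}}+\widetilde\Lambda_t^{\text{ood\_diff}}$ into a TD-noise term handled by the standard self-normalized concentration, an OOD term killed by setting $\hat y=\cT V_{t+1}$, and the smoothing-induced bias $\widetilde\Lambda_t^{\text{ood\_diff}}w_t$. The only difference is in that last term, where the paper simply argues $\cT V_{t+1}(\hat s_t^i,a_t^i)\approx \cT V_{t+1}(s_t^i,a_t^i)$ so the term is $\approx 0$, whereas you bound it quantitatively via $\widetilde\Lambda_t^{\text{ood\_diff}}\preceq\widetilde\Lambda_t$ and Lipschitzness of $\phi$, which is a slightly more careful rendering of the same step (and makes explicit the implicit small-$\epsilon$ requirement).
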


\begin{proof}
The proof follows that of the analysis of PBRL \cite{bai2021pessimistic} in linear MDPs \cite{jin2021pessimism}. We define the empirical Bellman operator of RORL as $\widetilde\cT$, then
\begin{equation*}
\widetilde\cT V_{t+1}(s_t, a_t) = \phi(s_t, a_t)^\top \widetilde w_t,
\end{equation*}
where $\widetilde w_t$ follows the solution in Eq.~\eqref{eq::appendix_w_ood_solu}. Then it suffices to upper bound the following difference between the empirical Bellman operator and Bellman operator
\begin{equation*}
\cT V_{t+1}(s, a) - \widetilde\cT V_{t+1}(s, a) = \phi(s, a)^\top (w_t - \widetilde w_t).
\end{equation*}
Here we define $w_t$ as follows
\begin{equation}
\label{eq::pf_def_wt_pess}
w_t = \theta + \int_{\cS} V_{t+1}(s_{t+1})\psi(s_{t+1}) \text{d} s_{t+1},
\end{equation}
where $\theta$ and $\psi$ are defined in Eq.~(\ref{eq::pf_def_linearMDP}). 
It then holds that
\begin{align}
\label{eq::pf_ood_0}
\cT V_{t+1}(s, a) - \widetilde \cT V_{t+1}(s, a) &= \phi(s, a)^\top (w_t - \widetilde w_t)\notag\\
&= \phi(s, a)^\top w_t - \phi(s, a)^\top \widetilde \Lambda^{-1}_t\sum^m_{i = 1}\phi(s^i_t, a^i_t)\bigl( r(s^i_t, a^i_t) + V^i_{t+1}(s^i_{t+1})\bigr) \notag\\
&\quad - \phi(s, a)^\top \widetilde \Lambda^{-1}_t\sum_{(\hat{s}, \hat{a}, \hat{y})\in \cD_{\text{ood}}}\phi(\hat{s}, \hat{a})\hat{y}. 
\end{align}
where we plug the solution of $\widetilde w_t$ in Eq.~\eqref{eq::appendix_w_ood_solu}. Meanwhile, by the definitions of $\widetilde \Lambda_t$ and $w_t$ in Eq.~\eqref{eq:covariance-rorl} and Eq.~\eqref{eq::pf_def_wt_pess}, respectively, we have
\begin{equation}
\begin{aligned}
\label{eq::pf_ood_1}
\phi&(s, a)^\top w_t = \phi(s, a)^\top \widetilde \Lambda_t^{-1} \widetilde \Lambda_t w_t\\
&=\phi(s, a)^\top \widetilde \Lambda_t^{-1} \biggl(\sum^m_{i = 1}\phi(s^i_t, a^i_t) \cT V_{t+1}(s_t, a_t) + \sum_{(\hat{s}, \hat{a}, \hat{y})\in \cD_{\text{ood}}}\phi(\hat{s}, \hat{a}) \cT V_{t+1}(\hat{s}, \hat{a})+\\
&\sum_{i=1}^{m} \frac{1}{|\mathbb{B}_d(s_t^i,\epsilon)|}  \sum_{\hat s_t^i \sim \mathcal{D}_{\text{ood}}(s_t^i)} [\phi(\hat s_t^i,a_t^i) - \phi(s_t^i,a_t^i)]\big[\phi(\hat s_t^i,a_t^i) - \phi(s_t^i,a_t^i)\big]^\top w_t \biggr).
\end{aligned}
\end{equation}
Plugging Eq.~\eqref{eq::pf_ood_1} into Eq.~\eqref{eq::pf_ood_0} yields
\begin{equation}
\label{eq::pf_ood_2}
\cT V_{t+1}(s, a) - \widetilde \cT V_{t+1}(s, a) = \text{(i)} + \text{(ii)}+\text{(iii)},
\end{equation}
where we define
\begin{align*}
\text{(i)} &= \phi(s, a)^\top \widetilde \Lambda^{-1}_t\sum^m_{i = 1}\phi(s^i_t, a^i_t)\bigl( \cT V_{t+1}(s^i_t, a^i_t) - r(s^i_t, a^i_t) - V^i_{t+1}(s^i_{t+1})\bigr),\\
\text{(ii)} &= \phi(s, a)^\top \widetilde \Lambda_t^{-1}\sum_{(\hat{s}, \hat{a}, \hat{y})\in \cD_{\text{ood}}}\phi(\hat{s}, \hat{a}) \bigl(\cT V_{t+1}(\hat{s}, \hat{a}) - \hat{y}\bigr),\\
\text{(iii)} &= \phi(s, a)^\top \widetilde \Lambda_t^{-1} \sum_{i=1}^{m} \frac{1}{|\mathbb{B}_d(s_t^i,\epsilon)|}  \sum_{\hat s_t^i \sim \mathcal{D}_{\text{ood}}(s_t^i)} \Big[\Big(\phi(\hat s_t^i,a_t^i) \phi(\hat s_t^i,a_t^i)^\top w_t - \phi(\hat s_t^i,a_t^i) \phi(s_t^i,a_t^i)^\top w_t \Big) \\
& \qquad \qquad \qquad \qquad + \Big(\phi(s_t^i,a_t^i) \phi(s_t^i,a_t^i)^\top w_t - \phi(s_t^i,a_t^i) \phi(\hat s_t^i,a_t^i)^\top w_t \Big)\Big].
\end{align*}
Following the standard analysis based on the concentration of self-normalized process \cite{bandit-2011, azar2017minimax, wang2020reward, lsvi-2020, jin2021pessimism} and the fact that $\Lambda_{\text{ood}} \succeq \lambda\cdot I$, it holds that
\begin{equation}
|\text{(i)}| \leq \beta_t \cdot\big[\phi(s_t,a_t)^\top\Lambda_t^{-1}\phi(s_t,a_t)\big]^{\nicefrac{1}{2}},
\end{equation}
with probability at least $1 - \xi$, where $\beta_t = \mathcal{O}\bigl(T\cdot \sqrt{d} \cdot \text{log}(T/\xi)\bigr)$. Meanwhile, by setting $y = \cT V_{t+1} (s^{\text{ood}}, a^{\text{ood}})$, it holds that $\text{(ii)} = 0$. For $\text{(iii)}$, we have 
\begin{equation}
\begin{aligned}
&\Big(\phi(\hat s_t^i,a_t^i) \phi(\hat s_t^i,a_t^i)^\top w_t - \phi(\hat s_t^i,a_t^i) \phi(s_t^i,a_t^i)^\top w_t \Big) + \Big(\phi(s_t^i,a_t^i) \phi(s_t^i,a_t^i)^\top w_t - \phi(s_t^i,a_t^i) \phi(\hat s_t^i,a_t^i)^\top w_t \Big)\\
& = \phi(\hat s_t^i,a_t^i) \Big(\cT V_{t+1}(\hat s_t^i,a_t^i) - \cT V_{t+1}(s_t^i,a_t^i) \Big) + \phi(s_t^i,a_t^i) \Big(\cT V_{t+1}(s_t^i,a_t^i) - \cT V_{t+1}(\hat s_t^i,a_t^i) \Big) \\
& = \big(\phi(\hat s_t^i,a_t^i) - \phi(s_t^i,a_t^i)\big) \big(\cT V_{t+1}(\hat s_t^i,a_t^i) - \cT V_{t+1}(s_t^i,a_t^i) \big)
\end{aligned}
\end{equation}
Since we enforce smoothness for the value function, we have $\cT V_{t+1}(\hat s_t^i,a_t^i) \approx \cT V_{t+1}(s_t^i,a_t^i)$. Thus $\text{(iii)}\approx 0$. To conclude, we obtain from Eq.~\eqref{eq::pf_ood_2} that 
\begin{equation}
|\cT V_{t+1}(s, a) - \widetilde \cT V_{t+1}(s, a)| \leq \beta_t \cdot\big[\phi(s_t,a_t)^\top\Lambda_t^{-1}\phi(s_t,a_t)\big]^{\nicefrac{1}{2}}
\end{equation}
for all $(s, a)\in\cS\times\cA$ with probability at least $1 - \xi$.
\end{proof}

\subsection{Suboptimality Gap}

Theorem \ref{app-thm:rorl-lcb} allows us to further characterize the optimality gap based on the pessimistic value iteration \cite{jin2021pessimism}. First, we give the following lemma.

\begin{lemma}\label{lemma:1}
Given two positive definite matrix A and B, it holds that: \begin{equation}
\frac{x^\top A^{-1} x}{x^\top (A+B)^{-1} x} > 1.
\end{equation}
\end{lemma}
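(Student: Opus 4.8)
The plan is to prove the strict matrix-ordering consequence $x^\top A^{-1} x > x^\top (A+B)^{-1} x$ for any nonzero $x$, from which the stated ratio inequality follows by dividing (the denominator $x^\top (A+B)^{-1} x$ is positive since $A+B$ is positive definite). The cleanest route goes through the standard fact that if $M \succeq N \succ 0$ then $N^{-1} \succeq M^{-1}$, applied to $M = A+B$ and $N = A$. Indeed $B \succ 0$ gives $A + B \succ A$, hence $A^{-1} \succ (A+B)^{-1}$, which is exactly the claimed inequality once evaluated in the quadratic form at $x \neq 0$. The only subtlety is that the lemma asserts a \emph{strict} inequality, so I need the strict version of inverse-monotonicity, not just the non-strict one.

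First I would recall the proof of monotonicity of matrix inversion. Since $A \succ 0$, write $A^{-1/2}$ for its symmetric positive-definite square root, and consider $A^{-1/2}(A+B)A^{-1/2} = I + A^{-1/2} B A^{-1/2}$. Because $B \succ 0$, the congruence $A^{-1/2} B A^{-1/2}$ is also positive definite, so $I + A^{-1/2} B A^{-1/2} \succeq I + \mu I = (1+\mu) I$ where $\mu > 0$ is the smallest eigenvalue of $A^{-1/2} B A^{-1/2}$. Inverting this (inversion reverses the Loewner order on commuting-with-$I$, or just: $(I+S)^{-1} \preceq (1+\mu)^{-1} I \prec I$ for $S \succeq \mu I \succ 0$), we get $A^{1/2}(A+B)^{-1}A^{1/2} \preceq (1+\mu)^{-1} I \prec I$. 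Conjugating back by $A^{-1/2}$ yields $(A+B)^{-1} \preceq (1+\mu)^{-1} A^{-1} \prec A^{-1}$, i.e. $A^{-1} - (A+B)^{-1} \succ 0$.

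Then I would finish by evaluating the quadratic form: for any $x \neq 0$,
\[
x^\top A^{-1} x - x^\top (A+B)^{-1} x = x^\top \big(A^{-1} - (A+B)^{-1}\big) x > 0,
\]
and since $A+B \succ 0$ also gives $x^\top (A+B)^{-1} x > 0$, dividing through yields $\frac{x^\top A^{-1} x}{x^\top (A+B)^{-1} x} > 1$, as claimed.

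I do not expect a genuine obstacle here; the lemma is a textbook fact. The one place to be careful is getting \emph{strictness} right: simply citing $A^{-1} \succeq (A+B)^{-1}$ would not be enough, so I use the strictly positive eigenvalue $\mu$ of the congruence $A^{-1/2} B A^{-1/2}$ to upgrade the inequality to a strict one. An alternative, essentially equivalent, derivation uses the Sherman–Morrison–Woodbury identity $(A+B)^{-1} = A^{-1} - A^{-1}(A^{-1}+B^{-1})^{-1}A^{-1}$, so that $x^\top A^{-1} x - x^\top (A+B)^{-1} x = \|(A^{-1}+B^{-1})^{-1/2} A^{-1} x\|_2^2$, which is strictly positive because $A^{-1} x \neq 0$ and $(A^{-1}+B^{-1})^{-1}$ is positive definite; I would mention this as the more computational but equally short route.
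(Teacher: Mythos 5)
Your proof is correct, and it reaches the conclusion by a somewhat different route than the paper. The paper argues in one line via the generalized Rayleigh quotient: $\frac{x^\top A^{-1}x}{x^\top (A+B)^{-1}x} \geq \lambda_{\min}\big((A+B)A^{-1}\big) = 1 + \lambda_{\min}(BA^{-1}) > 1$, where positivity of the spectrum of $BA^{-1}$ follows from $A^{-1}, B \succ 0$. You instead establish the strict Loewner inequality $A^{-1} \succ (A+B)^{-1}$ by conjugating with $A^{-1/2}$ and using $\mu = \lambda_{\min}(A^{-1/2}BA^{-1/2}) > 0$, then evaluate the quadratic form at $x \neq 0$. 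The two arguments hinge on the same spectral quantity, since $BA^{-1}$ is similar to $A^{-1/2}BA^{-1/2}$, but yours buys two things: it makes explicit why the relevant eigenvalue is strictly positive (the paper asserts positivity of the eigenvalues of the non-symmetric product $BA^{-1}$ without the similarity step), and it delivers the reusable operator-level statement $A^{-1} - (A+B)^{-1} \succ 0$ rather than only the scalar ratio bound; you were also right to flag that the non-strict monotonicity $A^{-1} \succeq (A+B)^{-1}$ alone would not suffice. The paper's route is shorter if one takes the min-characterization of the generalized Rayleigh quotient as known. Your Woodbury-based alternative, $x^\top A^{-1}x - x^\top(A+B)^{-1}x = \|(A^{-1}+B^{-1})^{-1/2}A^{-1}x\|_2^2 > 0$, is likewise valid and arguably the most self-contained of the three. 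One cosmetic remark: the lemma as stated omits the hypothesis $x \neq 0$, which both proofs need for the denominator (and the strictness) to make sense; your explicit restriction to nonzero $x$ is the right reading.
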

\begin{proof}
Leveraging the properties of generalized Rayleigh quotient, we have
\begin{equation}
\frac{x^\top A^{-1} x}{x^\top (A+B)^{-1} x} \geq \lambda_{\text{min}}\big((A+B)A^{-1}\big) = \lambda_{\text{min}}\big( \mathrm{I} +BA^{-1}\big) = 1 + \lambda_{\text{min}}\big(BA^{-1}\big).
\end{equation}
Since $B$ and $A^{-1}$ are both positive definite, the eigenvalues of $BA^{-1}$ are all positive: $\lambda_{\text{min}}\big(BA^{-1}\big) > 0$. This ends the proof.
\end{proof}

Then, according to the definition of LCB-penalty in Eq.~\eqref{app-eq:lcb-def}, since $\widetilde \Lambda_t=\widetilde \Lambda_t^{\text{PBRL}}+\widetilde \Lambda_t^{\text{ood\_diff}}$ with $\widetilde \Lambda_t^{\text{ood\_diff}} \succeq \lambda \mathrm{I}$. we have the relationship of the LCB-penalty between RORL and PBRL as follows.

\begin{corollary}
\label{app-coro-lcb-two}
Suppose $\Lambda_t^{\text{PBRL}}$ is positive definite. The RORL-induced LCB-penalty term is less than the PBRL-induced LCB-penalty, as
$\Gamma_t^{\rm lcb}(s_t,a_t) = \beta_t \big[\phi(s_t,a_t)^\top \widetilde \Lambda_t^{-1} \phi(s_t,a_t) \big]^{1/2} < \Gamma_t^{\rm lcb\_PBRL}(s_t,a_t)$.
\end{corollary}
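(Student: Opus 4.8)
The plan is to reduce the claim to a single application of Lemma~\ref{lemma:1}. Recall from Eq.~\eqref{eq:covariance-rorl} that $\widetilde\Lambda_t = \widetilde\Lambda_t^{\rm PBRL} + \widetilde\Lambda_t^{\rm ood\_diff}$, that Theorem~\ref{app-tm:PD_matrix} gives $\widetilde\Lambda_t^{\rm ood\_diff}\succeq\lambda\cdot\mathrm{I}\succ 0$ under the full-rank assumption, and that Corollary~\ref{app-coro:positive-def} gives $\widetilde\Lambda_t\succeq\lambda\cdot\mathrm{I}\succ 0$. For the penalty $\Gamma_t^{\rm lcb\_PBRL}$ to be defined at all, $\widetilde\Lambda_t^{\rm PBRL}$ must be invertible, i.e., positive definite; I take this as a standing hypothesis (it is exactly PBRL's assumption $\widetilde\Lambda_t^{\rm ood}\succeq\lambda\mathrm{I}$, and in any case holds once every covariance matrix carries the customary ridge term $\lambda_0\mathrm{I}$). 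Then $A:=\widetilde\Lambda_t^{\rm PBRL}$ and $B:=\widetilde\Lambda_t^{\rm ood\_diff}$ are both positive definite, so Lemma~\ref{lemma:1} is applicable with $x:=\phi(s_t,a_t)$.

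Fixing $(s_t,a_t)$ with $\phi(s_t,a_t)\neq\mathbf{0}$ (if $\phi(s_t,a_t)=\mathbf{0}$ both penalties vanish and there is nothing to prove), Lemma~\ref{lemma:1} gives
\[
\frac{\phi(s_t,a_t)^\top (\widetilde\Lambda_t^{\rm PBRL})^{-1}\phi(s_t,a_t)}{\phi(s_t,a_t)^\top\widetilde\Lambda_t^{-1}\phi(s_t,a_t)} > 1 ,
\]
where I used $\widetilde\Lambda_t = A+B$ and the fact that the denominator is strictly positive since $\widetilde\Lambda_t\succ 0$ and $x\neq\mathbf{0}$. Hence $\phi(s_t,a_t)^\top\widetilde\Lambda_t^{-1}\phi(s_t,a_t) < \phi(s_t,a_t)^\top(\widetilde\Lambda_t^{\rm PBRL})^{-1}\phi(s_t,a_t)$, and since both sides are strictly positive, $u\mapsto\sqrt{u}$ is strictly increasing on $(0,\infty)$, and $\beta_t>0$, taking square roots and multiplying by $\beta_t$ yields $\Gamma_t^{\rm lcb}(s_t,a_t)=\beta_t[\phi^\top\widetilde\Lambda_t^{-1}\phi]^{1/2} < \beta_t[\phi^\top(\widetilde\Lambda_t^{\rm PBRL})^{-1}\phi]^{1/2}=\Gamma_t^{\rm lcb\_PBRL}(s_t,a_t)$, which is the claim.

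The only subtle point — and the step I would scrutinize — is the positive-definiteness (equivalently, invertibility) of $\widetilde\Lambda_t^{\rm PBRL}=\widetilde\Lambda_t^{\rm in}+\widetilde\Lambda_t^{\rm ood}$, which Section~\ref{sec:theroy} explicitly warns cannot be guaranteed from the offline data alone. The clean resolution, at no cost, is to add a ridge term $\lambda_0\mathrm{I}$ to each covariance before inverting: this preserves the decomposition $\widetilde\Lambda_t=\widetilde\Lambda_t^{\rm PBRL}+\widetilde\Lambda_t^{\rm ood\_diff}$, makes $A=\widetilde\Lambda_t^{\rm PBRL}$ positive definite so Lemma~\ref{lemma:1} applies verbatim, and leaves $B=\widetilde\Lambda_t^{\rm ood\_diff}\succeq\lambda\cdot\mathrm{I}$ strictly positive definite by Theorem~\ref{app-tm:PD_matrix}, so the final inequality stays strict. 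If instead one adopts PBRL's assumption, $A$ is already positive definite and the argument is unchanged; the essential content is simply that $\widetilde\Lambda_t^{\rm ood\_diff}$ contributes a strictly positive increment to the covariance, which is precisely what Theorem~\ref{app-tm:PD_matrix} delivers.
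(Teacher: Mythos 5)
Your proof is correct and follows essentially the same route as the paper: decompose $\widetilde\Lambda_t = \widetilde\Lambda_t^{\rm PBRL} + \widetilde\Lambda_t^{\rm ood\_diff}$, invoke Lemma~\ref{lemma:1} with $A=\widetilde\Lambda_t^{\rm PBRL}$, $B=\widetilde\Lambda_t^{\rm ood\_diff}$, $x=\phi(s_t,a_t)$, and conclude via monotonicity of the square root. Your explicit flagging of the positive-definiteness of $\widetilde\Lambda_t^{\rm PBRL}$ (via PBRL's assumption or a ridge term) only makes precise a hypothesis the paper uses implicitly when applying the lemma.
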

\begin{proof}
Since $\widetilde \Lambda_t=\widetilde \Lambda_t^{\text{PBRL}}+\widetilde \Lambda_t^{\text{ood\_diff}}$ and $\widetilde \Lambda_t^{\text{ood\_diff}}\succeq \lambda I$, we have 
\begin{equation}
\frac{\phi(s_t,a_t)^\top \widetilde \Lambda_t^{-1} \phi(s_t,a_t)}{\phi(s_t,a_t)^\top  (\widetilde \Lambda_t^{\text{PBRL}})^{-1} \phi(s_t,a_t)}=\frac{\phi(s_t,a_t)^\top  (\widetilde \Lambda_t^{\text{PBRL}}+\Lambda_t^{\text{ood\_diff}})^{-1} \phi(s_t,a_t)}{\phi(s_t,a_t)^\top  (\widetilde\Lambda_t^{\text{PBRL}})^{-1} \phi(s_t,a_t)}<1.
\end{equation}
where the inequality directly follows Lemma \ref{lemma:1}. Then we have 
\begin{equation}
\phi(s_t,a_t)^\top \widetilde \Lambda_t^{-1} \phi(s_t,a_t)<\phi(s_t,a_t)^\top  (\widetilde \Lambda_t^{\text{PBRL}})^{-1} \phi(s_t,a_t).
\end{equation}
\end{proof}

Theorem \ref{app-thm:rorl-lcb} and Corollary \ref{app-coro-lcb-two} allow us to further characterize the optimality gap of the pessimistic value iteration. In particular, we have the following suboptimality gap under linear MDP assumptions.
\begin{corollary*}[Corollary \ref{cor::opt_gap_rorl} restate]
\label{app-cor::opt_gap_rorl}
Under the same conditions as Theorem \ref{app-thm:rorl-lcb}, it holds that
${\rm SubOpt} (\pi^*, \hat \pi) \leq \sum_{t=1}^{T} \mathbb{E}_{\pi^*} \big[ \Gamma_t^{\rm lcb}(s_t,a_t) \big] < \sum_{t=1}^{T} \mathbb{E}_{\pi^*} \big[ \Gamma_t^{\rm lcb\_PBRL}(s_t,a_t) \big]$.
\end{corollary*}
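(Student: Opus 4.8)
The plan is to split the chained statement into its two constituent inequalities, since each follows from machinery already assembled in the excerpt. The first inequality, ${\rm SubOpt}(\pi^*,\hat\pi) \leq \sum_{t=1}^{T} \mathbb{E}_{\pi^*}[\Gamma_t^{\rm lcb}(s_t,a_t)]$, is exactly the suboptimality guarantee of pessimistic value iteration from Jin et al.~\cite{jin2021pessimism}: whenever the penalties $\{\Gamma_t\}_{t\in[T]}$ constitute a valid $\xi$-uncertainty quantifier, the greedy policy $\hat\pi$ with respect to the pessimistically penalized $Q$-estimates satisfies this bound with probability at least $1-\xi$. Theorem~\ref{app-thm:rorl-lcb} has already established precisely this hypothesis for the RORL penalty $\Gamma_t^{\rm lcb}$ built from the covariance $\widetilde\Lambda_t$ (under the stated choice $\hat y = \mathcal{T}V_{t+1}(s^{\rm ood},a^{\rm ood})$ for the OOD targets). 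Thus the first step is to invoke that theorem to certify the quantifier property and then apply the PEVI suboptimality decomposition verbatim.

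For the second inequality, $\sum_{t=1}^{T}\mathbb{E}_{\pi^*}[\Gamma_t^{\rm lcb}(s_t,a_t)] < \sum_{t=1}^{T}\mathbb{E}_{\pi^*}[\Gamma_t^{\rm lcb\_PBRL}(s_t,a_t)]$, I would appeal directly to Corollary~\ref{app-coro-lcb-two}, which gives the strict pointwise bound $\Gamma_t^{\rm lcb}(s_t,a_t) < \Gamma_t^{\rm lcb\_PBRL}(s_t,a_t)$ for every $(s_t,a_t)$. Because this inequality is strict everywhere---owing to $\widetilde\Lambda_t = \widetilde\Lambda_t^{\rm PBRL} + \widetilde\Lambda_t^{\rm ood\_diff}$ with $\widetilde\Lambda_t^{\rm ood\_diff}\succeq\lambda\mathrm{I}\succ 0$ contributing strictly positive curvature via Lemma~\ref{lemma:1}---taking the expectation under $\pi^*$ and summing over $t\in[T]$ preserves strictness. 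Concatenating the two steps yields the full chain.

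The step requiring the most care is the first one, namely justifying that the policy $\hat\pi$ output by the RORL/LSVI procedure in Eq.~\eqref{eq::appendix_w_ood_solu} is genuinely the pessimistic greedy policy to which the Jin et al. bound applies, rather than merely that $\Gamma_t^{\rm lcb}$ happens to be a valid quantifier. This is the conceptual bridge between the empirical bootstrapped uncertainty and the theoretical LCB penalty: I would argue, following the PBRL analysis~\cite{bai2021pessimistic}, that the bootstrapped standard deviation estimates the LCB term $\beta_t[\phi^\top\widetilde\Lambda_t^{-1}\phi]^{1/2}$, so the penalized value iteration in RORL instantiates PEVI with $\Gamma_t = \Gamma_t^{\rm lcb}$. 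A minor additional point is to confirm the strict inequality survives the expectation: since $\pi^*$ induces a distribution over $(s_t,a_t)$ and the gap is strict at every point with $\phi(s_t,a_t)\neq 0$, the expectation is strict provided $\phi(s_t,a_t)\neq 0$ on a positive-measure set, which holds under the nondegenerate feature embedding assumed in Theorem~\ref{app-tm:PD_matrix}. Everything else reduces to bookkeeping already carried out in the preceding lemmas and corollaries.
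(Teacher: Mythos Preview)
Your proposal is correct and follows essentially the same route as the paper: the first inequality is obtained by invoking the PEVI suboptimality bound of Jin et al.~\cite{jin2021pessimism} once Theorem~\ref{app-thm:rorl-lcb} certifies $\Gamma_t^{\rm lcb}$ as a $\xi$-uncertainty quantifier, and the second inequality is read off pointwise from Corollary~\ref{app-coro-lcb-two} and then summed and averaged. Your added remarks about the bootstrapped-to-LCB bridge and about strictness surviving the expectation are more careful than what the paper itself spells out, but they do not depart from its argument.
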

We refer to Jin et al \cite{jin2021pessimism} for a detailed proof of the first inequality. The second inequality is directly induced by $\Gamma_t^{\rm lcb}(s_t,a_t) < \Gamma_t^{\rm lcb\_PBRL}(s_t,a_t)$ in Corollary \ref{app-coro-lcb-two}. The optimality gap is information-theoretically optimal under the linear MDP setup with finite horizon \cite{jin2021pessimism}. 
Therefore, RORL enjoys a tighter suboptimality bound than PBRL \cite{bai2021pessimistic} in linear MDPs.

\section{Implementation Details and Experimental Settings}
\label{ap-implementation}
In this section, we provide detailed implementation and experimental settings.

\subsection{Implementation Details}
\label{ap-implementation_details}
\paragraph{SAC-10}
Our SAC-10 implementation is based on~\cite{an2021uncertainty}, which is open-source. We keep the default parameters as EDAC \cite{an2021uncertainty} except for the ensemble size set to 10 in our paper. In addition, we normalize each dimension of observations to a standard normal distribution for consistency with RORL. The hyper-parameters are listed in Table \ref{tab:hyper-SAC10}.

\begin{table}[h!]
\small
  \caption{Hyper-parameters of SAC-10}
  \vspace{3pt}
  \label{tab:hyper-SAC10}
  \centering
  \begin{tabular}{p{0.55\columnwidth}p{0.4\columnwidth}}
    \toprule
    Hyper-parameters & Value\\
    \midrule
    The number of bootstrapped networks $K$          & 10  \\
    Policy network  & FC(256,256,256) with ReLU activations\\
    $Q$-network  & FC(256,256,256) with ReLU activations\\
    Target network smoothing coefficient $\tau$ for every training step  & 5e-3 \\
    Discount factor $\gamma$ & 0.99 \\
    Policy learning rate & 3e-4 \\
    $Q$ network learning rate & 3e-4 \\
	Optimizer & Adam  \\
	Automatic Entropy Tuning & True \\
	batch size & 256 \\
     \bottomrule
  \end{tabular}
\end{table}

\paragraph{EDAC} 
Our EDAC implementation is based on the open-source code of the original paper \cite{an2021uncertainty}. In the benchmark results, we directly report results from the paper which are the previous SOTA performance on the D4RL Mujoco benchmark. As for other experiments, we also normalize the observations and use 10 ensemble $Q$ networks for consistency with RORL, and set the gradient diversity term $\eta=1$ by default.

\paragraph{RORL} 
We implement RORL based on SAC-10 and keep the hyper-parameters the same. The differences are the introduced policy and $Q$ network smoothing techniques and the additional value underestimation on OOD state-action pairs. In Eq.~\eqref{eq:roal-Q}, the coefficient $\beta_{\rm Q}$ for the $Q$ network smoothing loss $\mathcal{L}_{\rm smooth}$ is set to $0.0001$ for all tasks, and the coefficient $\beta_{\rm ood}$ for the OOD loss $\mathcal{L}_{\rm ood}$ is tuned within $\{0.0,0.1,0.5 \}$. Besides, the coefficient $\beta_{\rm P}$ of the policy smoothing loss in Eq.~\eqref{eq:roal-policy} is searched in $\{0.1, 1.0 \}$. When training the policy and value functions in RORL, we randomly sample $n$ perturbed observations from a $l_{\infty}$ ball of norm $\epsilon$ and select the one that maximizes $D_{\rm J}(\pi_{\theta}(\cdot |s)\|\pi_{\theta}(\cdot|\hat s))$ or $\mathcal{L}_{\rm smooth}$, respectively. We denote the perturbation scales for the $Q$ value functions, the policy, and the OOD loss as $\epsilon_{\rm Q}$, $\epsilon_{\rm P}$ and $\epsilon_{\rm ood}$. The number of sampled perturbed observations $n$ is tuned within $\{10, 20\}$. The OOD loss underestimates the values for $n$ perturbed states $\hat s \sim \mathbb{B}_d(s,\epsilon)$ with actions sampled from the current policy $\hat a \sim \pi_\theta(\hat{s})$. For each $\hat s$, we sample a single $\hat a$ for the OOD loss. Regarding the $Q$ smoothing loss in Eq.~\eqref{eq:conservative_smoothing_Q}, the parameter $\tau$ is set to $0.2$ in all tasks for conservative value estimation. All the hyper-parameters used in RORL for the benchmark experiments and adversarial experiments are listed in Table \ref{tab:bench_params} and Table \ref{tab:attack_params} respectively. Note that for halfcheetah tasks, 10 ensemble $Q$ networks already enforce sufficient pessimism for OOD state-action pairs, thus we do not need additional OOD loss for these tasks.

As for the OOD loss $\mathcal{L}_{\rm ood}$ in Eq.~\eqref{eq:ood_loss}, we remark that the pseudo-target $\widehat{\mathcal{T}}_{\rm ood}Q_{\phi_i}(\hat{s},\hat{a})$ for the OOD state-action pairs $(\hat s, \hat a)$ can be implemented in two ways: $\widehat{\mathcal{T}}_{\rm ood}Q_{\phi_i}(\hat{s},\hat{a}):=Q_{\phi_i}(\hat{s},\hat{a})- \lambda u(\hat{s},\hat{a})$ and $\widehat{\mathcal{T}}_{\rm ood}Q_{\phi_i}(\hat{s},\hat{a}):= {\rm min}_{i=1,\ldots, K} Q_{\phi_i}(\hat{s},\hat{a})$. We refer to the two targets as the ``minus target'' and the ``min target'', and compare them in Appendix \ref{ap:min_target}. Intuitively, the ``minus target'' introduces an additional parameter $\lambda$ but is more flexible to tune for different environments and different types of data. In contrast, the ``min target'' requires tuning the number of ensemble $Q$ networks and cannot enforce appropriate conservatism for all tasks given only 10 ensemble $Q$ networks.
Following PBRL \cite{bai2021pessimistic}, we also decay the OOD regularization coefficient $\lambda$ with decay pace $d$ for each training step to stabilize $\mathcal{L}_{\rm ood}$, because we need strong OOD regularization at the beginning of training and need to avoid too large OOD loss that leads the value function to be fully negative. $\lambda$ and $d$ are also listed in the two tables.

\begin{table}[h]
	\centering
	\small
	\caption{Hyper-parameters of RORL for the benchmark results}
	\vspace{0.2em}
	\label{tab:bench_params}
	\begin{adjustbox}{max width=\columnwidth}
		\begin{tabular}{l|c|c|c|c|c|c|c|c|c}
			\toprule
			\textbf{Task Name} & $\beta_{\rm Q}$ & $\beta_{\rm P}$ & $\beta_{\rm ood}$ & $\epsilon_{\rm Q}$  & $\epsilon_{\rm P}$  & $\epsilon_{\rm ood}$ & $\tau$ & $n$ & $\lambda\ (d)$ \\
			\midrule
			halfcheetah-random & \multirow{5}{*}{0.0001} &  \multirow{5}{*}{0.1} & \multirow{5}{*}{0.0} & 0.001 & 0.001 & \multirow{5}{*}{0.00} & \multirow{5}{*}{0.2} & 20 & \multirow{5}{*}{0}  \\
			halfcheetah-medium &    &   &    & 0.001 & 0.001 &   &   & 10 &  \\
			halfcheetah-medium-expert &    &    &    & 0.001 & 0.001 &    &   & 10 &  \\
			halfcheetah-medium-replay &    &   &   & 0.001 & 0.001 &    &   & 10 &  \\
			halfcheetah-expert &    &   &   & 0.005 & 0.005 &     &   & 10 &  \\
			\midrule
			hopper-random & \multirow{5}{*}{0.0001} & \multirow{5}{*}{0.1} & \multirow{5}{*}{0.5} & \multirow{5}{*}{0.005} & \multirow{5}{*}{0.005} & \multirow{5}{*}{0.01} & \multirow{5}{*}{0.2}  & \multirow{5}{*}{20} & 1$\rightarrow 0.5$ ($1e^{-6}$) \\
			hopper-medium &   &   &   &   &   &  &   &  & 2$\rightarrow 0.1$ ($1e^{-6}$) \\
			hopper-medium-expert &   &  &   &   &    &  &   &   & 3$\rightarrow 1.0$ ($1e^{-6}$) \\
			hopper-medium-replay &   &  &   &   &    &  &   &   & 0.1$\rightarrow 0$ ($1e^{-6}$) \\
			hopper-expert &   &  &   &   &    &  &   &   & 4$\rightarrow 1$ ($1e^{-6}$) \\
			\midrule
			walker2d-random & \multirow{5}{*}{0.0001}  & \multirow{5}{*}{1.0} & 0.5 & 0.005 & 0.005 & \multirow{5}{*}{0.01}  & \multirow{5}{*}{0.2} & \multirow{5}{*}{20} & 5.0$\rightarrow 0.5$ ($1e^{-5}$) \\
			walker2d-medium &   &  & 0.1 & 0.01 & 0.01 &    &   &  & 0.1$\rightarrow 0.1$ (0.0) \\
			walker2d-medium-expert &    &  & 0.1 & 0.01 & 0.01 &    &   &  & 0.1$\rightarrow 0.1$ (0.0) \\
			walker2d-medium-replay &   &  & 0.1 & 0.01 & 0.01 &    &   &  & 0.1$\rightarrow 0.1$ ($0.0$) \\
			walker2d-expert &    &   & 0.5 & 0.005 & 0.005 &    &   &  & 1.0$\rightarrow 0.7$ ($1e^{-6}$) \\
			\bottomrule
		\end{tabular}
	\end{adjustbox}
\end{table}

\begin{table}[h]
	\centering
	\small
	\caption{Hyper-parameters of RORL for the adversarial attack results}
	\vspace{0.2em}
	\label{tab:attack_params}
	\begin{adjustbox}{max width=\columnwidth}
		\begin{tabular}{l|c|c|c|c|c|c|c|c|c}
			\toprule
			\textbf{Task Name} & $\beta_{\rm Q}$ & $\beta_{\rm P}$ & $\beta_{\rm ood}$ & $\epsilon_{\rm Q}$  & $\epsilon_{\rm P}$  & $\epsilon_{\rm ood}$ & $\tau$ & $n$ & $\lambda\  (d)$ \\
			\midrule
			halfcheetah-medium &  \multirow{3}{*}{0.0001}  &  1  &  0.0 & 0.03 & 0.05 & 0.00  & \multirow{3}{*}{0.2}  & \multirow{3}{*}{20} & 0 \\
			walker2d-medium &   &  0.5 & 0.5 & 0.03 &  0.07 &  0.03  &   &  & 1$\rightarrow 0.1$ ($1e^{-6}$) \\
			hopper-medium &   &  0.1 & 0.5 & 0.01 & 0.01  &  0.03  &   &  & 2$\rightarrow 0.1$ ($1e^{-6}$) \\
			\bottomrule
		\end{tabular}
	\end{adjustbox}
\end{table}

\subsection{Experimental Settings}
\label{ap:experiment-setting}
For all experiments, we train algorithms for 3000 epochs (1000 training steps per epoch, i.e., 3 million steps in total) following EDAC \cite{an2021uncertainty}. We use small perturbation scales to train the $Q$ networks and the policy network for the benchmark experiments and relatively large scales for the adversarial attack experiments as listed in Table \ref{tab:bench_params} and Table \ref{tab:attack_params}. 

In the benchmark results, we evaluate algorithms for 1000 steps in clean environments (without adversarial attack) at the end of each epoch. The reported results are normalized to d4rl scores that measure how the performance compared with the expert score and the random score: ${\rm the normalized\ score}=100 \times \frac{\rm score - random\ score}{\rm expert\ score - random\ score}$. Besides, the benchmark results are averaged over 4 random seeds. Regarding the adversarial attack experiments, we evaluate algorithms in perturbed environments that performing ``random'', ``action diff'', and ``min Q'' attack with zeroth-order and mixed-order optimizations as discussed in Section \ref{sec:adver_attak}. Similar to prior work \cite{zhang2020robust}, agents receive observations with malicious noise and the environments do not change their internal transition dynamics. We evaluate each algorithm for 10 trajectories (1000 steps per trajectory) and average their returns over 4 random seeds.

\subsection{Visualization Settings of CQL}
\label{ap:visual_cql}
For visualizing the relationship between the $Q$-function and the state space (i.e., Figure ~\ref{fig:cql-attack} and Figure ~\ref{fig:cql-attack-minQ}), we sample 2560 adversarial transitions from the offline dataset for each attack $\epsilon$ and calculate the corresponding $Q$-function. Since the state has relatively high dimensions (i.e., 11 or 17), we perform PCA dimensional reduction to reduce the state to 4 dimensions. We find the $Q$-function generally has a strong correlation to one or two dimensions of the state after dimensional reduction. For other dimensions, the relationship between the $Q$-value and the PCA-reduced state often has one or two peaks, which has less variety in the curve.

\begin{figure*}[ht]
\centering
\subfigure[$Q$-function of CQL]{\includegraphics[width=0.33\textwidth]{ 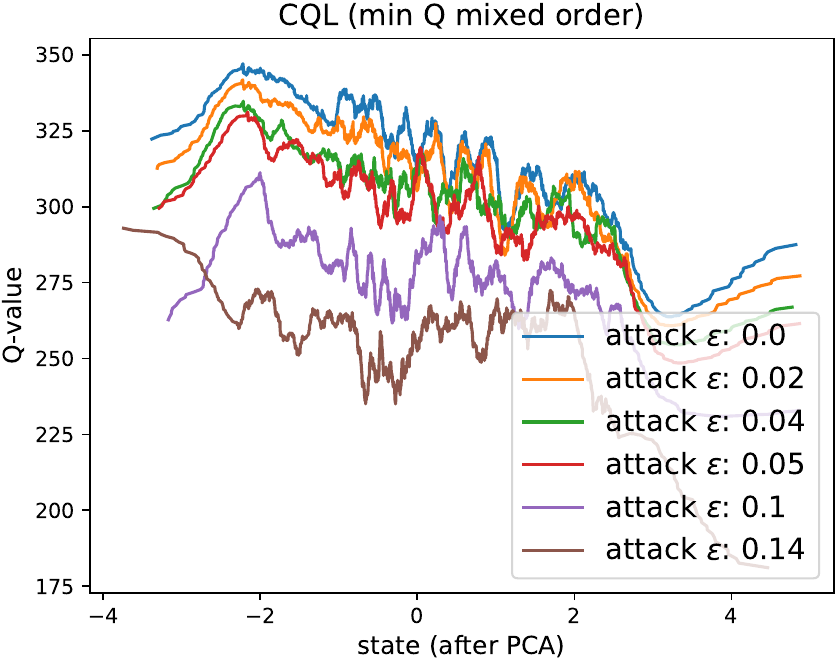}\label{fig:cql-non-smooth-minQ}}
\subfigure[$Q$-function of CQL-smooth]{\includegraphics[width=0.33\textwidth]{ 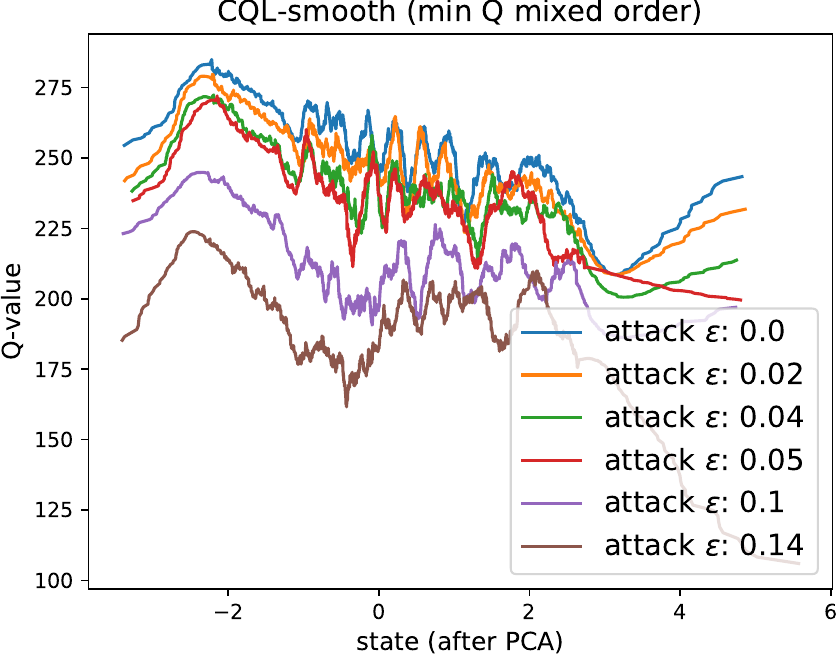}\label{fig:cql-smooth-minQ}}
\subfigure[Final performance]{\includegraphics[width=0.3\textwidth]{ 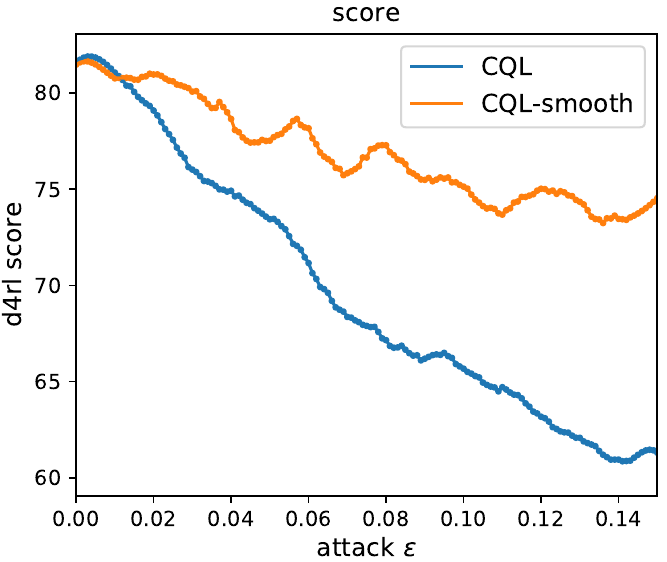}\label{fig:cql-score-minQ}}
\caption{(a)(b) The $Q$-functions of $\hat{s}$ with `min $Q$ mixed order' adversarial noises in CQL and CQL-smooth, respectively. The same moving average factor is used in plotting both figures. (c) The performance evaluation of CQL and CQL-smooth with different perturbation scales. We use 100 different $\epsilon\in[0.0,0.15]$ for the evaluation.}
\label{fig:cql-attack-minQ}
\end{figure*}

\section{Additional Experimental Results}
\label{ap:addtional_exp}
In this section, we present additional ablation studies and adversarial experiments.


\begin{figure*}[h]
\centering
\subfigure[Average epoch time of RORL's components]{\includegraphics[width=0.48\textwidth, trim=20 5 20 20, clip]{ 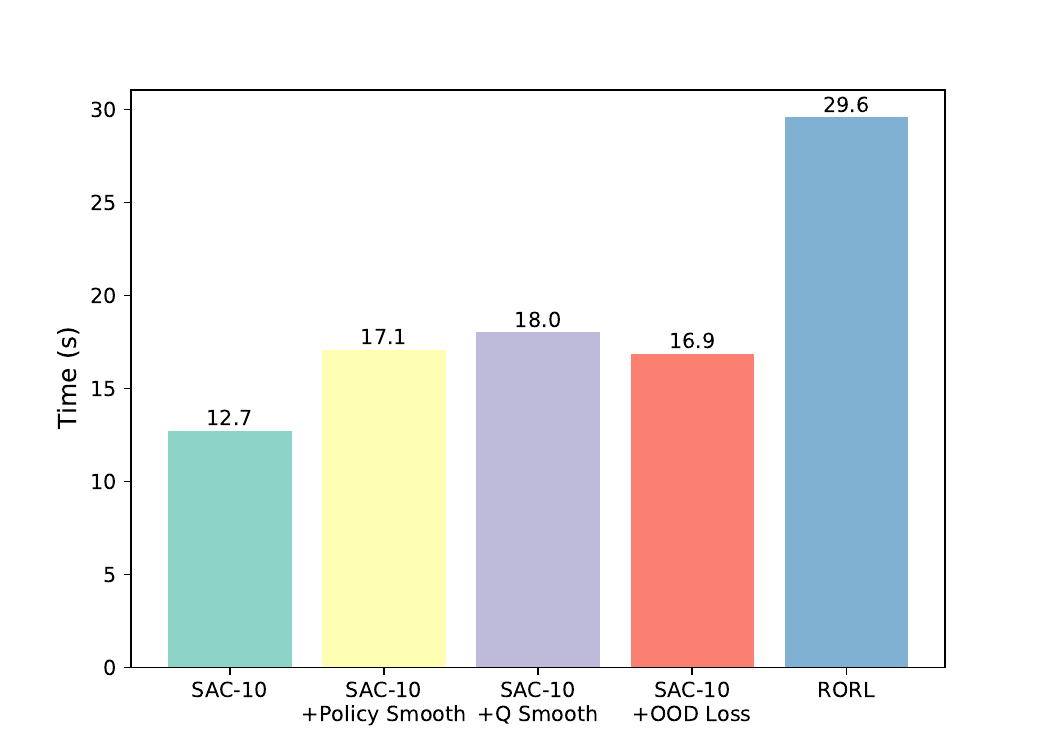}\label{fig:time_rorl}}
\subfigure[Memory usage of RORL's components]{\includegraphics[width=0.48\textwidth, trim=40 40 40 40, clip]{ 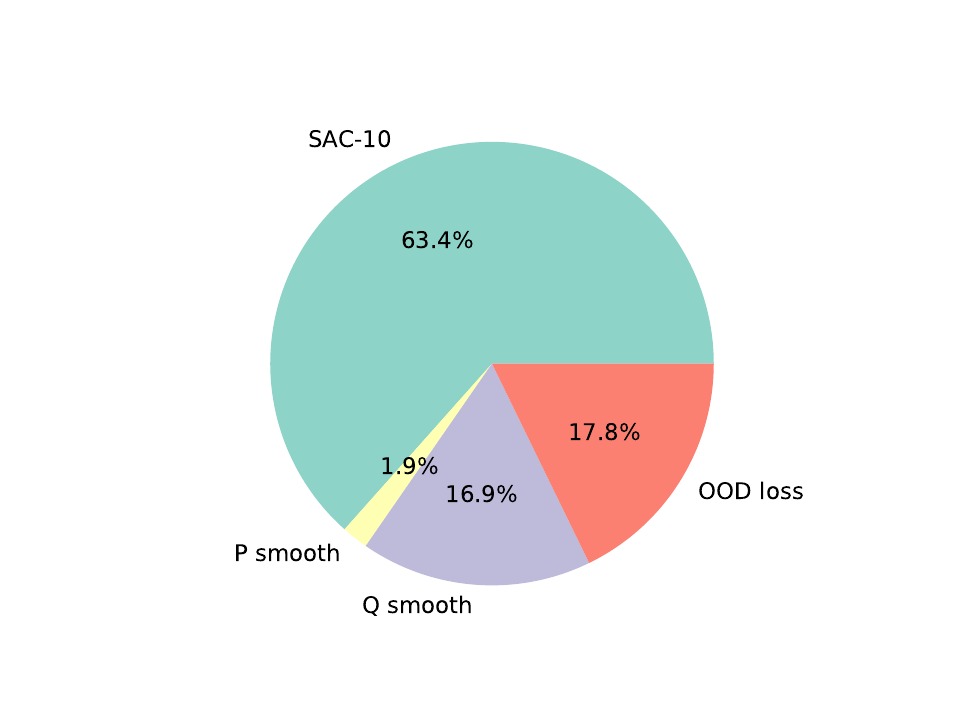}\label{fig:memory_rorl}}
\caption{Visualization of the average epoch time and memory usage for RORL and its components. }
\label{fig:time_memory}
\end{figure*}

\subsection{Computational Cost Comparison}
\label{ap:computational_cost}
In this subsection, we compare the computational cost of RORL with prior works on a single machine with one GPU (Tesla V100 32G) and one CPU (Intel Xeon Platinum 8255C @ 2.50GHz). For each method, we measure the average epoch time (i.e., 1$\times 10^3$ training steps) and the GPU memory usage on the hopper-medium-v2 task. For CQL, PBRL, SAC-$N$, and EDAC, we evaluate the computational cost based on their official code.

As shown in Table \ref{tab:compute_cost}, RORL runs slightly faster than CQL, mainly because CQL needs the OOD action sampling and the logsumexp approximation. For ensemble-based baselines, RORL runs much faster than PBRL, requiring only 28.7$\%$ of PBRL's epoch time. PBRL is so slow because it uses 10 ensemble $Q$ networks for uncertainty measure and needs OOD action sampling for value underestimation. In RORL, we also include the OOD state-action sampling and additional adversarial training procedures, but we implement these procedures efficiently based on GPU operation and parallelism. Even so, RORL is still slower than SAC-10 and EDAC. But as demonstrated in our experiments, RORL enjoys significantly better robustness than EDAC and SAC-10 under different types of perturbations. As for the GPU memory consumption, RORL uses comparable memory
to PBRL and EDAC, with only $16.7\%$ more memory usage.

Furthermore, we analyze the computational cost of RORL's components ($Q$ smoothing, policy smoothing, and the OOD loss). Specifically, we measure the average epoch time of \emph{SAC-$10$+Policy Smooth}, \emph{SAC-$10$+Q Smooth}, \emph{SAC-$10$+OOD Loss} in Figure \ref{fig:time_rorl}, and calculate the corresponding memory usage of each component in Figure \ref{fig:memory_rorl}. For the training time, \emph{SAC-$10$+Q Smooth} runs the slowest and \emph{SAC-$10$+Policy Smooth} runs slightly slower than \emph{SAC-$10$+OOD Loss}. This is mainly because sampling the worst-case perturbation occupies the most time. In addition, since we use an ensemble of 10 $Q$ networks, the memory usage of the $Q$ smoothing loss and the OOD loss (both need to pass $n$ perturbed states to $10$ $Q$ networks) is larger than the policy smoothing loss.

\begin{figure}[htb]
\centering
\subfigure[]{\includegraphics[width=0.47\textwidth]{ 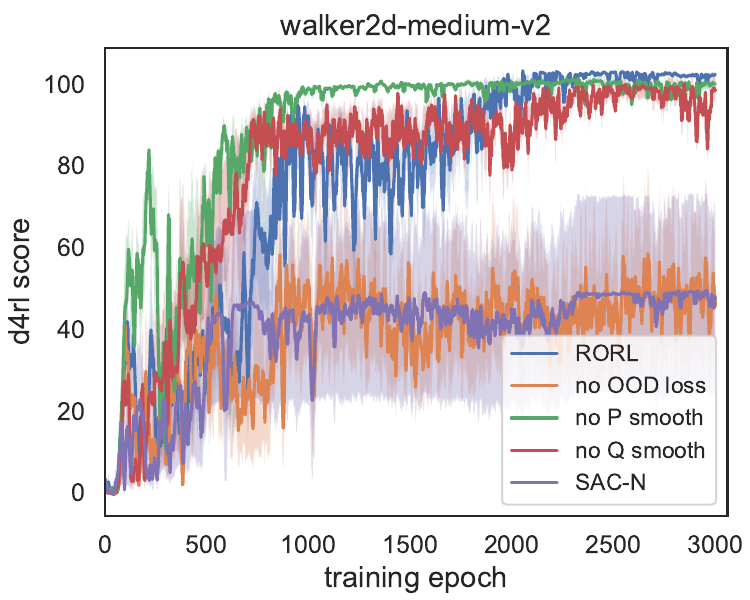}}
\hspace{1.0em}
\subfigure[]{\includegraphics[width=0.47\textwidth]{ 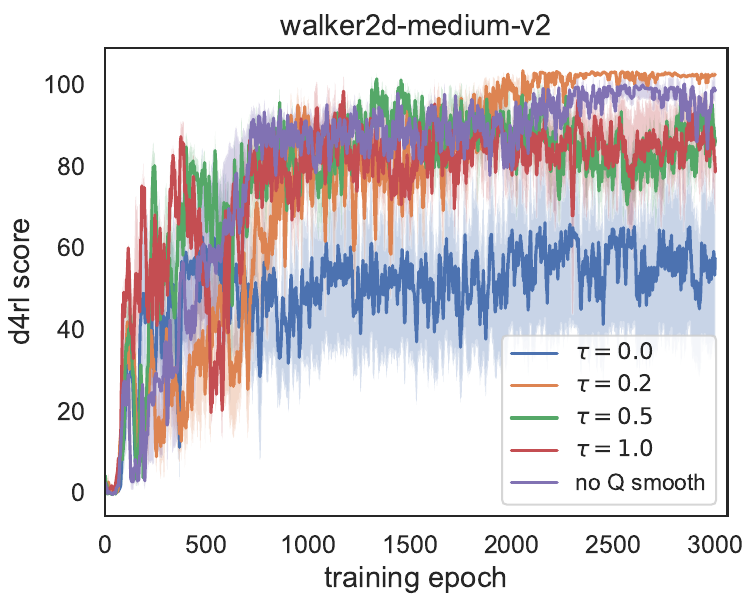}}
\caption{(a) Ablation studies of three introduced loss. The ``P smooth'' and the ``$Q$ smooth'' refer to the policy smoothing loss and the $Q$ network smoothing loss. (b) Ablations studies of the hyper-parameter $\tau$ in the benchmark experiments.}
\vspace{-1em}
\label{fig:ablation_rorl_tau}
\end{figure}

\subsection{Ablations on Benchmark Results}
\label{ap:ablation_bench}
In the benchmark experiments, RORL outperforms other baselines, especially in walker2d tasks. We conduct ablation studies on this task to verify the effectiveness of RORL's components. In Figure \ref{fig:ablation_rorl_tau} (a), we can find that each introduced loss (i.e., the OOD loss, the policy smoothing loss and the $Q$ smoothing loss) influences the performance on the walker2d-medium-v2 task. Specifically, the OOD loss affects the most, without which the performance would drop close to SAC-N's performance. In addition, the $Q$ smoothing loss is helpful for stabilizing the training and final performance in clean environments. 

In Figure \ref{fig:ablation_rorl_tau} (b), we evaluate the performance of RORL with varying $\tau$.
The results suggest that $\tau$ is an important factor that balances the learning of in-distribution and out-of-distribution $Q$ values. In Eq.~\eqref{eq:conservative_smoothing_Q}, we want to assign larger weights ($1-\tau$) on the $\delta(s, \hat s,a)_{+}^2$ and smaller weights ($\tau$) on the $\delta (s, \hat s,a)_{-}^2$ to underestimate the values of OOD states, where $\delta (s, \hat s, a)=Q_{\phi_i}(\hat s,a) - Q_{\phi_i}(s,a)$. On the contrary, a too small $\tau$ can also lead to overestimation of in-distribution state-action pairs. In Figure \ref{fig:ablation_rorl_tau} (b), $\tau=0$ leads to poor performance while larger $\tau=0.5,1.0$ also result in performance worse than RORL without $Q$ smoothing. Empirically, we find $\tau=0.2$ works well across different tasks and set $\tau=0.2$ by default for all experiments.

In the above analysis, we know that the OOD loss is a key component in RORL. We further study the impact of the OOD loss and $\epsilon_{\rm ood}$ on the performance and the value estimation. As shown in Figure \ref{fig:ood_loss_rorl} (a), when $\epsilon_{\rm ood}=0$, the performance of RORL drops significantly, which illustrates the effectiveness of underestimating values of OOD states since the smoothness of RORL may overestimate these values. From Figure \ref{fig:ood_loss_rorl} (b), we can verify that the OOD loss with $\epsilon_{\rm ood}>0$ contributes to the value underestimation.

\begin{figure}[h]
\centering
\subfigure[D4rl scores ]{\includegraphics[width=0.47\textwidth]{ 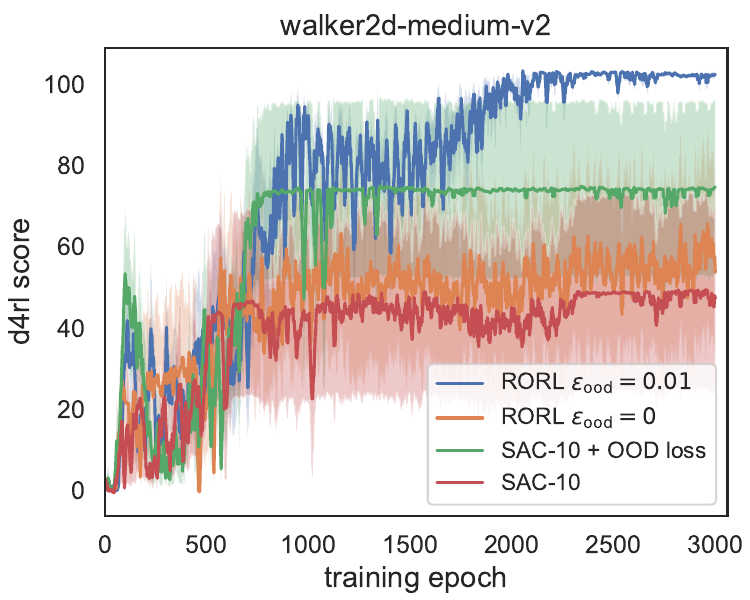}}
\hspace{1.0em}
\subfigure[Estimated values ($\rm log$)]{\includegraphics[width=0.47\textwidth]{ 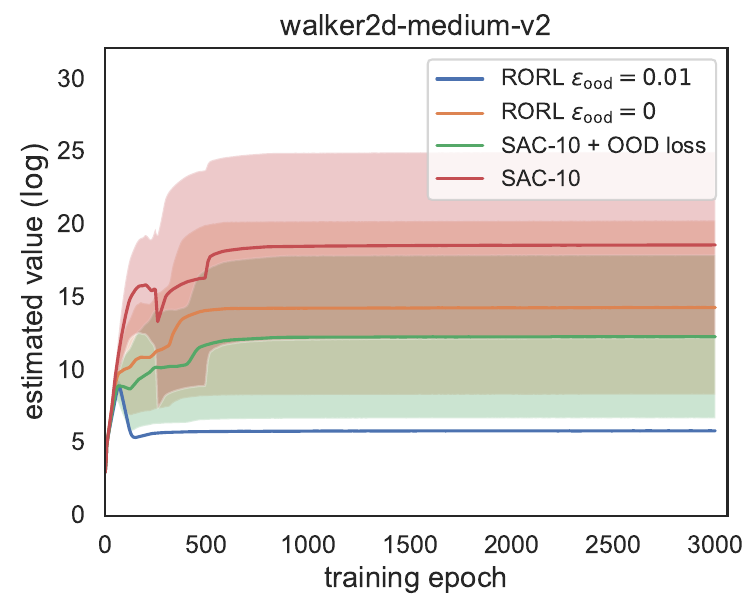}}
\caption{The ablations of the OOD loss $\mathcal{L}_{\rm ood}$ and the hyper-parameter $\epsilon_{\rm ood}$ on the benchmark experiments. }
\vspace{-1em}
\label{fig:ood_loss_rorl}
\end{figure}

\subsection{Robustness Measures}
\label{ap:addition_exp}
In prior works \cite{shen2020deep,zhang2020robust}, the authors only demonstrate the robustness of algorithms via comparing the return curves with different attack scales. To better measure the robustness of RL algorithms, we consider the \emph{robust score} as the areas under the perturbation curve in Figure \ref{fig:attack_rorl_new}. Since the returns in the figure have been normalized as introduced in Appendix \ref{ap:experiment-setting}, we can simply calculate the \emph{robust score} for each attack strategy as:
$$\text{robust score} = \frac{1}{N} \sum_{i\in [1,N]} Rs[i]$$
where $Rs$ is the list of returns under $N$ monotonically increasing attack scales. The introduced \emph{robust score} treats different attack scales equally. However, in many real scenarios, we would pay more attention to larger-scale disturbances. To this end, we also define a \emph{weighted robust score} as:
$$\text{weighted robust score} = \frac{2}{(1+N)\times N} \sum_{i\in [1,N]} i \times Rs[i]$$
where the weights are assigned according to the scale order. In Table \ref{tab:robust_score} and Table \ref{tab:weighted_robust_score}, RORL consistently outperforms EDAC and SAC-$10$ on the two robustness metrics. For walker2d and hopper tasks, RORL surpasses EDAC by more than 10 points on both the \emph{robust score} and the \emph{weighted robust score}.

\begin{table}[tb]
    \caption{Robust scores under attack on halfcheetah-medium-v2, walker2d-medium-v2, and hopper-medium-v2 tasks.}\label{tab:robust_score}
    \centering
    \begin{adjustbox}{max width=\columnwidth}
    \begin{tabular}{llcccccc}
    \toprule
\multirow{2}{*}{Task}   &   &   \multirow{2}{*}{Random} &  \multirow{2}{*}{Action Diff}  &   Action  Diff  &  \multirow{2}{*}{Min Q}  &   Min  $Q$   &  \multirow{2}{*}{Average} \\
   &  {} &   &   &     Mixed  Order &    &   Mixed  Order &   \\
    \midrule
  \multirow{3}{*}{\text{halfcheetah-m}}   & RORL   &     58.6 &     49.5 &   38.0 &      43.5 &      28.2 &      43.6 \\
    &  EDAC   &     59.2 &      44.5 &     33.0 &      38.1     &    25.0  &      40.0 \\
     & SAC-10 &       60.1 &     45.6 &  34.2 &     39.8 &      25.7 &      41.1 \\
     \midrule

   \multirow{3}{*}{\text{walker2d-m}} &  RORL   &    94.1 &     91.0  & 56.9  &     71.0 &    43.3 &   71.2 \\
    & EDAC   &    95.1 &     68.3 &    37.2 &    62.1 &    35.9 &   59.7 \\
    & SAC-10 &   48.2 &    37.0 &     23.0 & 29.2 &      18.5 &      31.2 \\
    \midrule
   \multirow{3}{*}{\text{hopper-m}} & RORL   &   84.8 &    78.4 &    53.9 &    51.5 &  34.7 &   60.7 \\
    & EDAC   &  72.2 &  69.7 &  45.5 &  38.3 &  23.7 &      49.9 \\
    & SAC-10 &   0.79 &   0.82 &   0.89 &   0.88 &               1.36 &     0.95 \\
    \bottomrule
    \end{tabular}
    \end{adjustbox}
\end{table}

\begin{table}[tb]
    \caption{Weighted robust scores under attack on halfcheetah-medium-v2, walker2d-medium-v2, and hopper-medium-v2 tasks.}\label{tab:weighted_robust_score}
    \centering
    \begin{adjustbox}{max width=\columnwidth}
    \begin{tabular}{llcccccc}
    \toprule
\multirow{2}{*}{Task}   &   &   \multirow{2}{*}{Random} &  \multirow{2}{*}{Action Diff}  &   Action  Diff  &  \multirow{2}{*}{Min Q}  &   Min  $Q$   &  \multirow{2}{*}{Average} \\
   &  {} &   &   &     Mixed  Order &    &   Mixed  Order &   \\
    \midrule
  \multirow{3}{*}{\text{halfcheetah-m}}   & RORL   &     57.4 &     44.5 &    29.7 &     37.0 &     17.7 &      37.2 \\
   &  EDAC   &       57.0 &      37.0 &                        23.9 &      28.7 &                  14.4 &      32.2 \\
  &  SAC-10 &  57.9 &      38.3 &         25.1 &      30.8 &       14.9 &      33.4 \\
     \midrule

   \multirow{3}{*}{\text{walker2d-m}} &  RORL   &  94.1 &    89.1 &       39.1 &      61.8 &        26.7 &      62.2 \\
    & EDAC   &    95.1 &    52.9 &        18.7 &      45.7 &      18.8 &      46.2 \\
   &  SAC-10 &       47.7 &  30.1 &      13.8 &      21.3 &   10.7 &  24.7 \\
    \midrule
   \multirow{3}{*}{\text{hopper-m}} & RORL   &   76.0 &   68.0 &       36.1 &      37.4 &     21.1 &      47.7 \\
   & EDAC   &    61.7 &     61.4 &      30.8 &      21.7 &    9.6 &      37.0 \\
   & SAC-10 &        0.80 &    0.84 &                   0.93 &       0.91 &    1.67 &       1.03 \\
    \bottomrule
    \end{tabular}
    \end{adjustbox}
\end{table}

\begin{table}[t]
\caption{The robust scores of ablation studies on the walker2d-medium-v2 task}
\label{tb:ablations_new}
    \centering
    \begin{adjustbox}{max width=\columnwidth}
    \begin{tabular}{lcccccc}
\toprule
{} &   random &   action diff &   action diff mixed order &   min $Q$ &   min $Q$ mixed order &  Average Score \\
\midrule
RORL               &            94.1 &                 90.9 &                             56.9 &           71.0 &                       43.3 &           71.2 \\
no OOD             &            68.4 &                 62.0 &                             37.6 &           35.9 &                       22.0 &           45.2 \\
no P smooth        &            92.8 &                 78.7 &                             48.6 &           67.1 &                       39.3 &           65.3 \\
no $Q$ smooth        &            92.7 &                 91.1 &                             57.3 &           62.2 &                       40.2 &           68.7 \\
$\epsilon_{\rm ood}$=0 &            74.1 &                 70.5 &                             44.1 &           46.3 &                       26.9 &           52.4 \\
\bottomrule
\end{tabular}
    \end{adjustbox}
\end{table}

\subsection{Ablations of Components in the Adversarial Experiments}
In Section \ref{sec:ablation}, we conducted ablations of RORL's major components in the adversarial settings. In this subsection, we provide robust scores of the ablation results over 4 random seeds in Table \ref{tb:ablations_new}. Besides, results of $\epsilon_{\rm ood} = 0$ are also included to demonstrate the effectiveness of penalizing values of OOD states. From Table \ref{tb:ablations_new}, we can conclude that the OOD loss is the most essential component of RORL, and only penalizing in-distribution states is insufficient for adversarial perturbations. To summarize, the order of the importance of each component is: OOD loss $>$  $\epsilon_{\rm ood}$ $>$ policy smoothing loss $>$ $Q$ smoothing loss. The conclusion may be different for different tasks, for example we found that the halfcheetah task does not even need the OOD loss because the SAC-10 framework already provides it with sufficient pessimism.

\subsection{Ablations on the Number of Ensemble $Q$ Networks}
We conduct the adversarial attack experiments with different number of bootstrapped $Q$ networks in RORL. As shown in Figure \ref{fig:attack_numQ_ablation}, the robustness of RORL improves as the ensemble size $K$ increases. For $K=6,8,10$, RORL has similar initial performance but $K=10$ considerably outperforms others as the attack scale increases. Therefore, we set $K=10$ by default in our paper.

\begin{figure}[htb]
    \centering
    \includegraphics[width=1\linewidth]{ 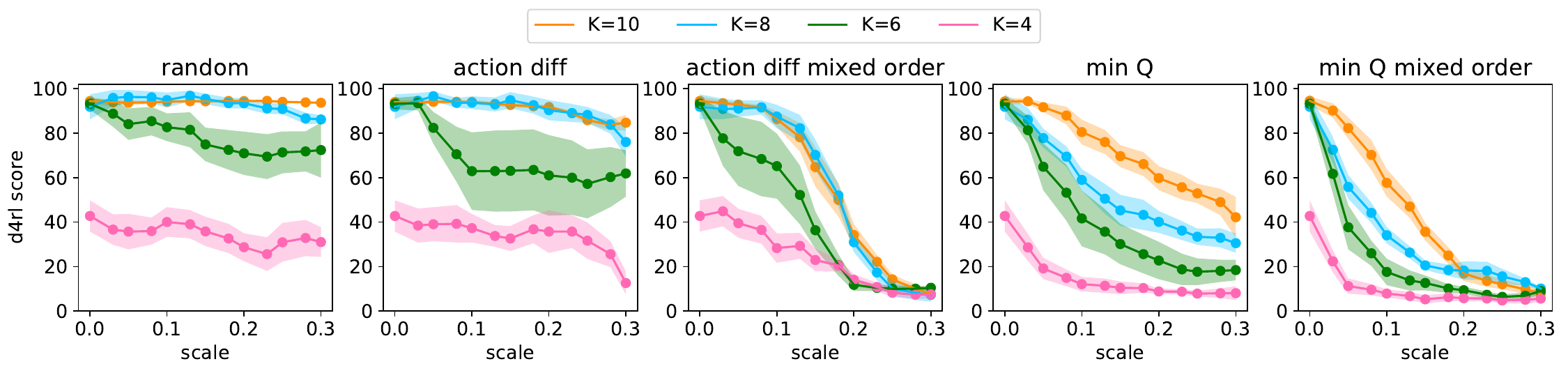}
    \caption{Ablations on the number of $Q$ networks on the walker2d-medium-v2 dataset.}
    \label{fig:attack_numQ_ablation}
\end{figure}

\subsection{Ablations of \texorpdfstring{$\tau$}{tau} for the Adversarial Experiments}
In this subsection, we study the performance under attacks with varying $\tau\in \{0.0, 0.2, 0.5, 1.0\}$. From the results in Figure \ref{fig:attack_tau}, we find $\tau=0.2$ slightly outperforms the others on 4 out of the 5 attack types. The results are also consistent with the ablation studies of the benchmark experiments in Appendix \ref{ap:ablation_bench}. Accordingly, we set $\tau=0.2$ by default for all experiments in our paper.

\begin{figure}[htb]
    \centering
    \includegraphics[width=1\linewidth]{ 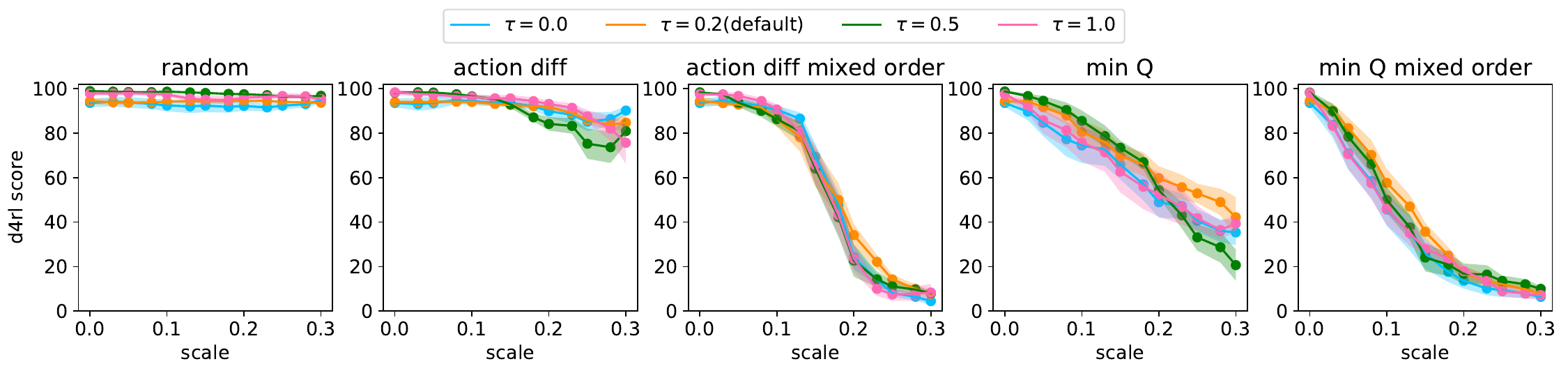}
    \caption{Comparison of different $\tau$ in the adversarial experiments on the walker2d-medium-v2 dataset.}
    \label{fig:attack_tau}
\end{figure}

\subsection{Ablations on the Number of Sampled Perturbed Observations}
\label{ap:num_samples}

We ablate the number of sampled perturbed observations in Figure \ref{fig:ablation_sample}. From the figure, we can conclude that the robustness of RORL improves as the number of samples $n$ increases. At the same time, the computational cost also increases as $n$ increases. Therefore, we can choose $n$ according to the computational budget. Interestingly, RORL with $n=1$ already outperforms SAC-10 by a large margin, which could be an appropriate option when computing resources are limited.

\begin{figure}[htb]
    \centering
    \includegraphics[width=1\textwidth]{ 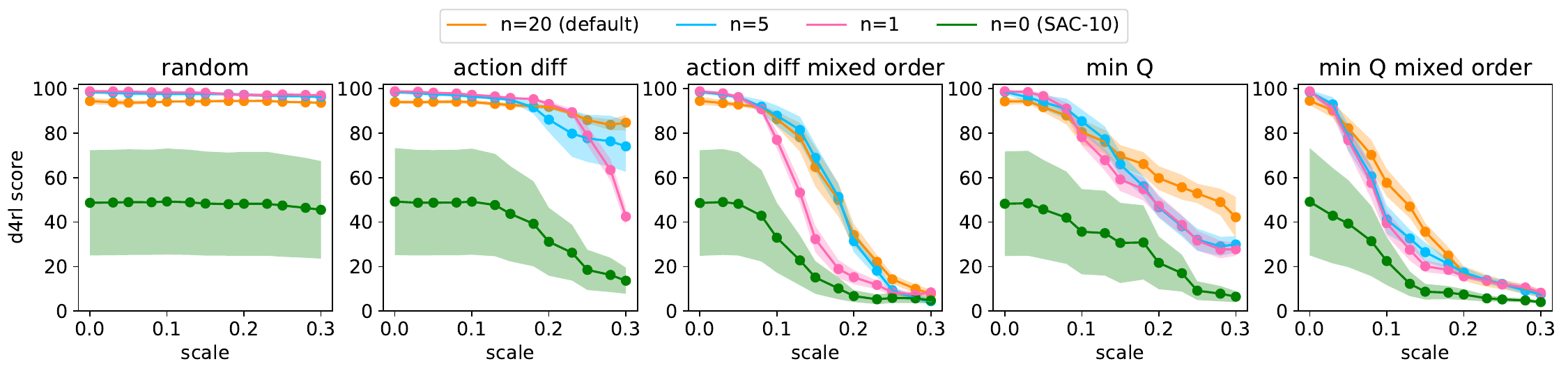}
    \caption{Ablations on the number of sampled perturbed observations. The comparison is made on the walker2d-medium-v2 task. }
    \label{fig:ablation_sample}
\end{figure}

\subsection{Adversarial Attack with Different $Q$ Functions}
In our experiments, it is assumed that the 'min $Q$' and the 'min $Q$ mixed order' attackers have access to the corresponding $Q$ value functions of the attacked agent. Generally, the assumption is strong for many real-world scenarios. In addition, the comparison does not take into account the impact of attacking with different $Q$ functions. Intuitively, conservative and smoothed $Q$ functions make it easier for attackers to find the most impactful perturbation to degrade the performance. To investigate the impact of different $Q$ functions, we swap the attacker's $Q$-function, i.e. \textbf{using RORL's $Q$-functions to attack EDAC and using EDAC's $Q$-functions to attack RORL}. In Figure \ref{fig:change_Q}, we can conclude:
\begin{itemize}
    \item [(1)] RORL outperforms EDAC with a wider margin when using the same $Q$ functions. Surprisingly, the difference of normalized scores increases from 37.3 to 51.2 for walker2d-medium-v2 task with the largest 'min Q' attack.
    
    \item [(2)] The value function of EDAC may still not be smooth and can mislead the attackers. In contrast, RORL successfully learns smooth value functions, which may facilitate further research on stronger attack strategies for robust offline RL.
\end{itemize}

\begin{figure}[h]
\centering
\subfigure['Min $Q$' attack with different $Q$ functions]{\includegraphics[width=0.8\textwidth]{ 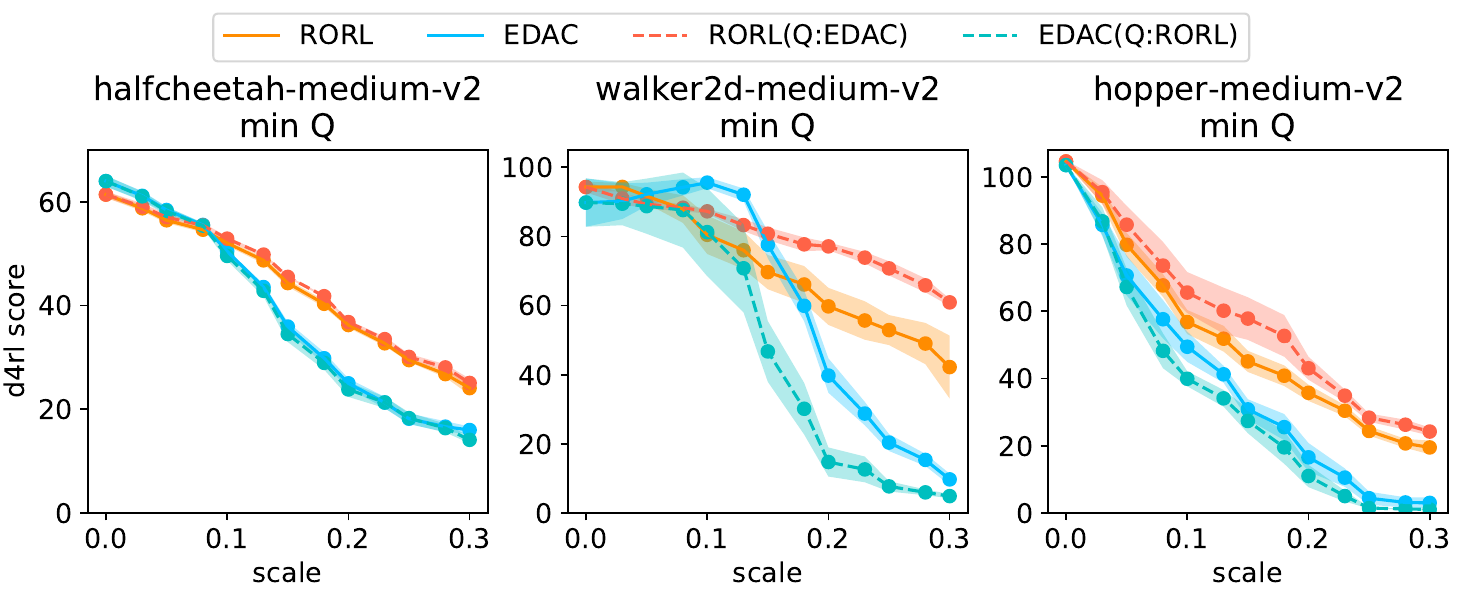}}

\subfigure['Min $Q$ mixed order' attack with different $Q$ functions]{\includegraphics[width=0.8\textwidth]{ 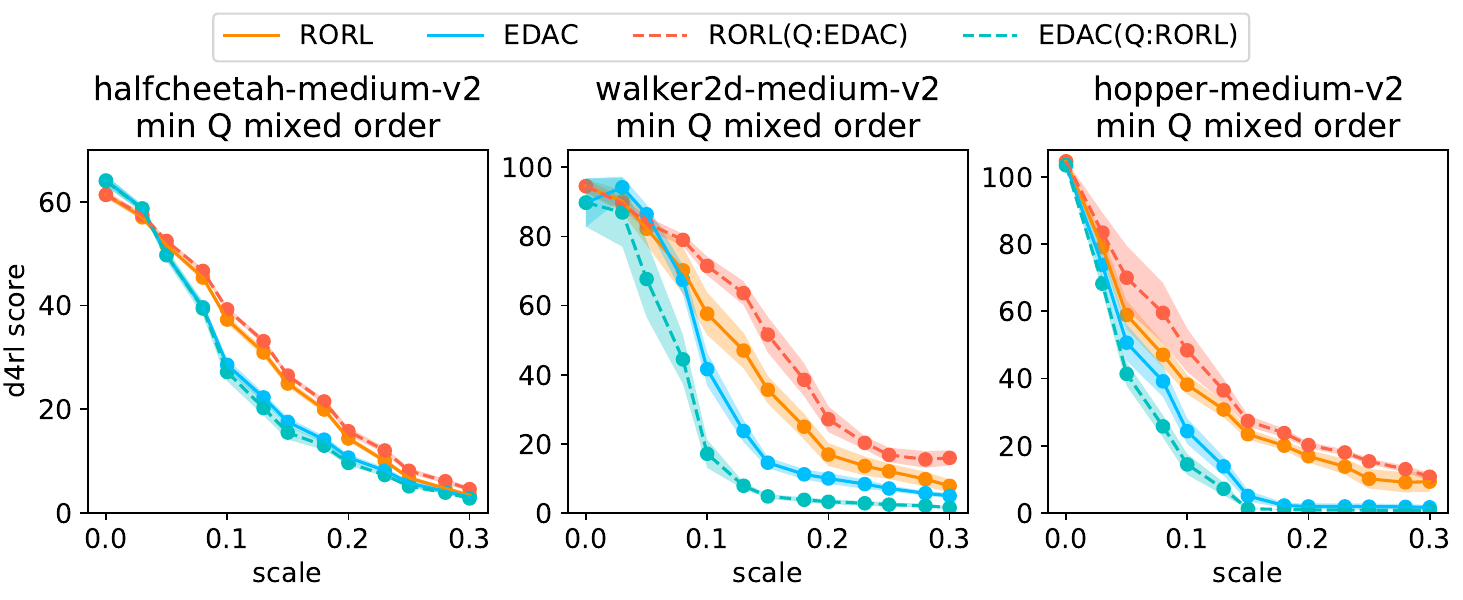}}
\caption{Performance under the 'min $Q$' and the 'min $Q$ mixed order' adversarial attacks with different $Q$ functions. Curves are averaged over 4 random seeds. \emph{RORL(Q:EDAC)} refers to attacking RORL with EDAC's $Q$ functions, and \emph{EDAC(Q:RORL)} refers to attacking EDAC with RORL's $Q$ functions. When the attacker uses the same $Q$ functions, RORL outperforms EDAC with a wider margin.}
\vspace{-1em}
\label{fig:change_Q}
\end{figure}

\subsection{Comparison with EDAC+Smoothing}
We also compare EDAC with both policy smoothing and $Q$ smoothing, which leverages the gradient penalty rather than our OOD loss to enforce pessimism on OOD state-action pairs. The hyper-parameters are kept the same with EDAC and RORL, except $\tau=0.5$ in EDAC+Smoothing. As shown in Figure \ref{fig:edac_smoothing}, the smoothing technique slightly improves the robustness of EDAC under large-scale (0.2$\sim$0.3) adversarial perturbations, but it significantly decreases the overall performance under attack. The results imply that directly using smoothing techniques without explicit OOD penalization can even worsen the robust scores of previous SOTA offline RL algorithm.

\begin{figure}[tb]
    \centering
    \includegraphics[width=1\textwidth]{ 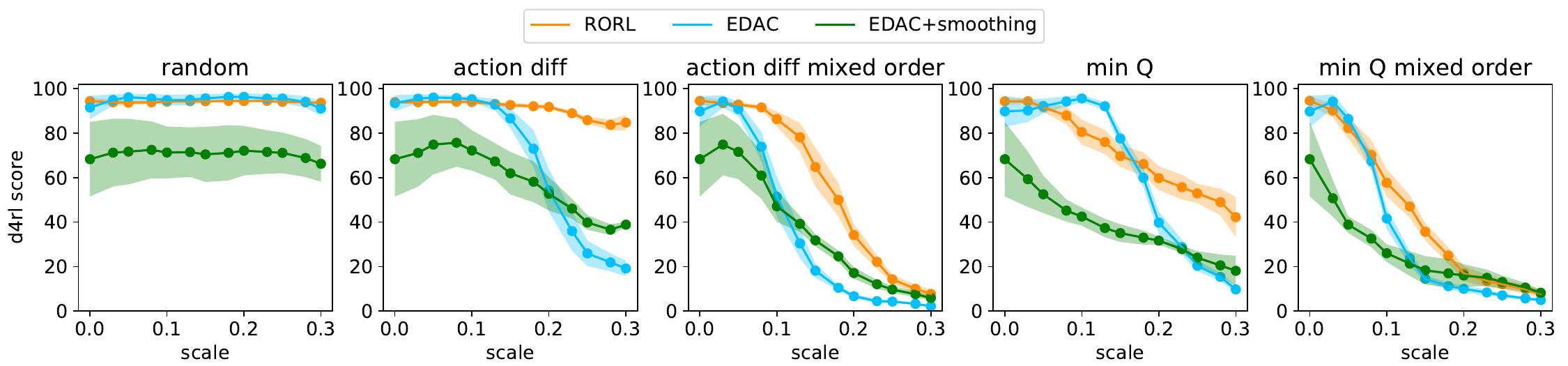}
    \caption{Comparison with EDAC+Smoothing under adversarial attacks on the walker2d-medium-v2 task. The curves are averaged over 4 seeds and smoothed with a window size of 3.}
    \label{fig:edac_smoothing}
\end{figure}

\begin{figure}[tb]
\centering
\subfigure[Performance under attack on halfcheetah-medium-v2 dataset]{\includegraphics[width=1\textwidth]{ 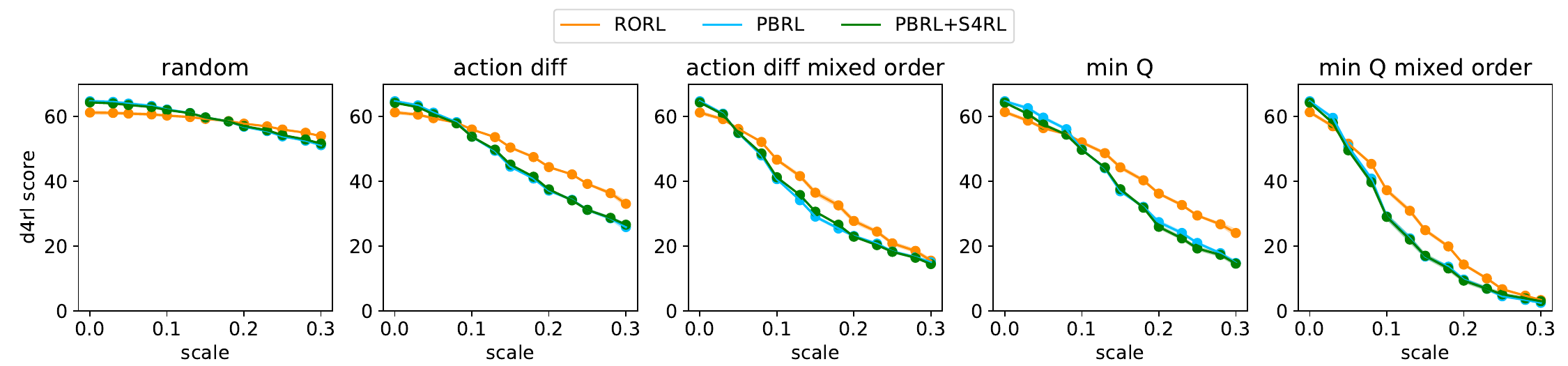}}

\subfigure[Performance under attack on walker2d-medium-v2 dataset]{\includegraphics[width=1\textwidth]{ 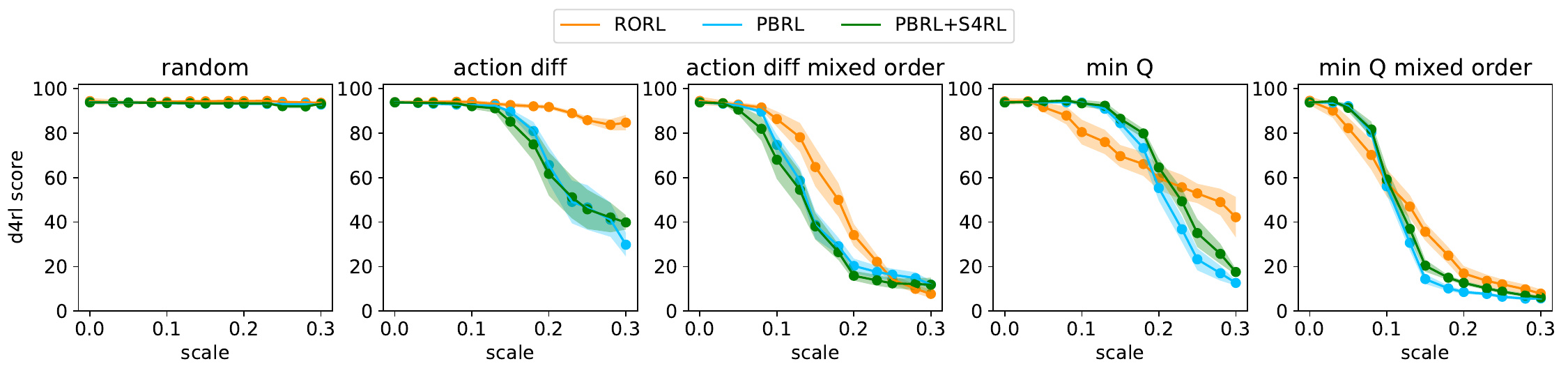}}
\caption{Comparison of PBRL and PBRL+S4RL under attack scales range [0, 0.3] of different types of attack. The curves are averaged over 4 seeds and smoothed
with a window size of 3. The shaded region represents half a standard deviation.}
\vspace{-1em}
\label{fig:PBRL_S4RL}
\end{figure}

\subsection{Comparison with PBRL + S4RL}
We also include comparison with PBRL and PBRL+S4RL to verify if RORL is more robust than data augmentation for offline RL \cite{sinha2022s4rl}. The main differences between RORL and S4RL are three folds:
\begin{itemize}
    \item [(1)] S4RL only implicitly smooths the value functions while RORL explicitly smooths them, which is more efficient and enjoys theoretical guarantees.
    \item [(2)] S4RL does not consider the impact of overestimation on OOD states brought by the data augmentation, which can be harmful for offline RL. In contrast, RORL further underestimates values for OOD states, which essentially alleviates the potential overestimation.
    \item [(3)] In addition, S4RL selects adversarially perturbed states according to the gradient of $Q(s, \pi(s))$, aiming to choose the direction where the $Q$-value deviates the most. Different from S4RL, RORL samples perturbed states to maximize a conservative smoothing loss $\mathcal{L}\big(Q_{\phi_i}(\hat s,a),Q_{\phi_i}(s,a)\big)$ and a policy smoothing loss $\max_{\hat s\in \mathbb{B}_d(s,\epsilon)} D_{\rm J} \big(\pi_{\theta}(\cdot|s)\|\pi_{\theta}(\cdot|\hat s)\big)$ defined in Section \ref{sec:method}.
\end{itemize}

The empirical results on halfcheetah-medium-v2 and walker2d-medium-v2 are shown in Figure \ref{fig:PBRL_S4RL}. We can observe that S4RL only slightly improves the robustness of PBRL on the walker2d-medium-v2 task and has little impact on the halfcheetah-medium-v2 task. In contrast, RORL exhibits higher robustness across different tasks and attack types.

\begin{figure*}[t]
\centering
\subfigure[Performance under attack on halfcheetah-medium-v2 dataset]{\includegraphics[width=1\textwidth]{ 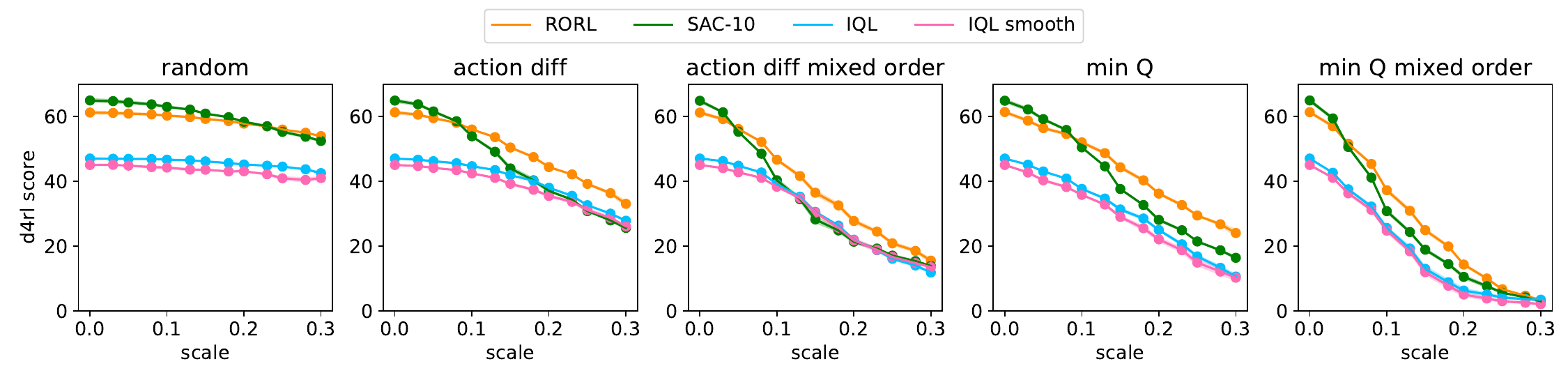}\label{fig:attack_halfcheetah_iql}}
\subfigure[Performance under attack on walker2d-medium-v2 dataset]{\includegraphics[width=1\textwidth]{ 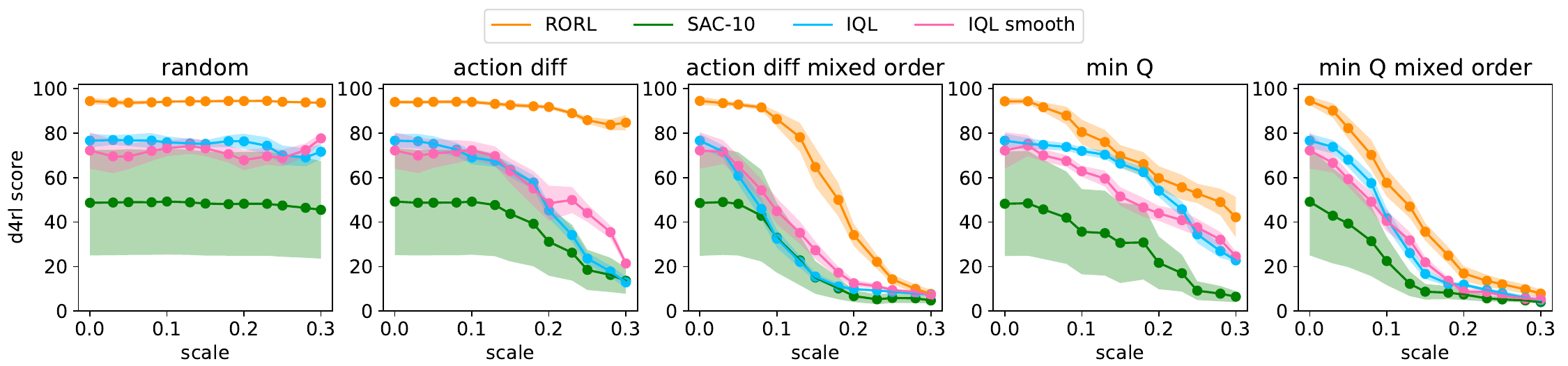}\label{fig:attack_walker2d_iql}}
\subfigure[Performance under attack on hopper-medium-v2 dataset]{\includegraphics[width=1\textwidth]{ 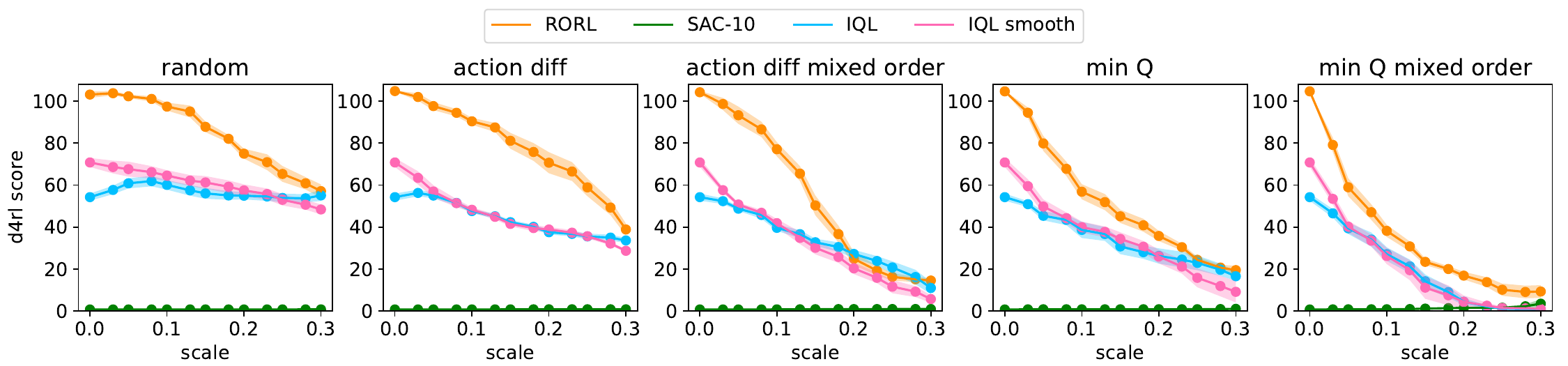}\label{fig:attack_hopper_iql}}
\caption{Comparison of IQL and IQL smooth. Figures (a) (b) (c) illustrate the performance under attack scales range $[0,0.3]$ of different types of attack. The curves are averaged over 4 seeds and smoothed with a window size of 3. The shaded region represents half a standard deviation. }
\label{fig:attack_iql}
\vspace{-1em}
\end{figure*}

\subsection{Combining Smoothing with IQL}
We combine the policy smoothing and $Q$ function smoothing techniques in RORL with IQL \cite{kostrikov2021offline}, a SOTA offline RL algorithm without ensemble $Q$ networks. We use the default hyper-parameters of IQL and set the hyper-parameters for smoothing the same as in Table \ref{tab:attack_params}. The training and evaluation settings keep the same as the adversarial experiments in our paper. As shown in Figure \ref{fig:attack_iql}, we can observe that IQL with the smoothing technique (short for 'IQL smooth') slightly improves the robustness on the walker2d-medium-v2 and hopper-medium-v2 tasks, but it has little effect on the halfcheetah-medium-v2 task. This suggests that simply adopting the smoothing technique does not consistently improve the performance in the offline setting. In contrast, RORL introduces additional OOD underestimation based on uncertainty measure, which helps to obtain conservatively smoothed policy and value functions.

\begin{figure}[htb]
    \centering
    \includegraphics[width=1\textwidth]{ 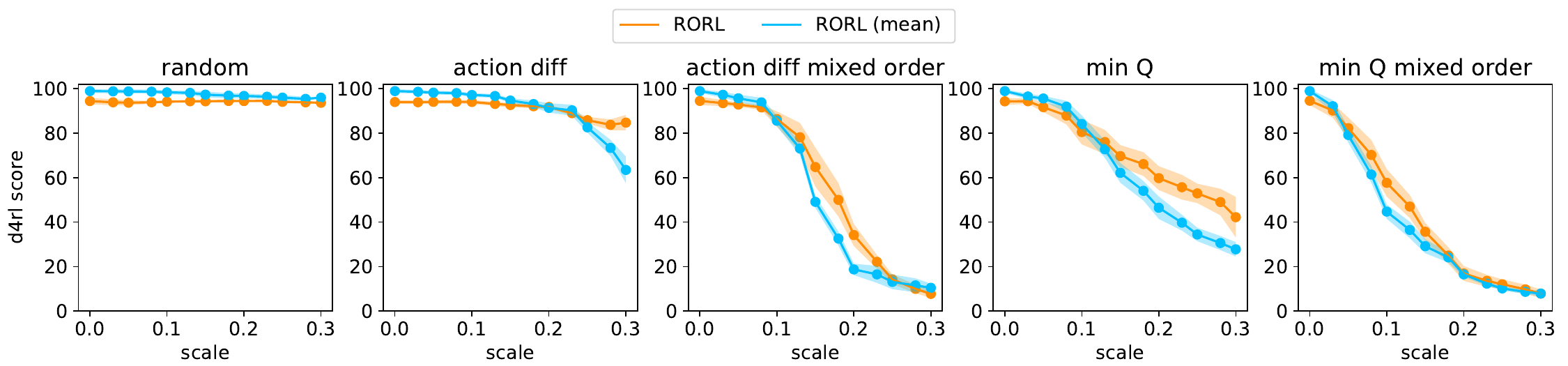}
    \caption{Comparing the 'max' with the 'mean' operators in our smoothing techniques. The comparison is made on the walker2d-medium-v2 task.}
    \label{fig:ablation_mean}
\end{figure}

\subsection{Comparing the 'max' and the 'mean' Operators in Smoothing}

In our implementation, we first sample $n$ perturbed states and select the one that maximizes the smoothing losses in Eq.~\eqref{eq:Q_smooth} and Eq.~\eqref{eq:roal-policy}. It is interesting to see if the 'max' operator is useful, as we can also use the 'mean' operator as an alternative, i.e., $\mathcal{L}_{\rm smooth}^{mean}(s,a; \phi_i) = \mathbb{E}_{\hat s\in \mathbb{B}_d(s,\epsilon)}  \mathcal{L}\big(Q_{\phi_i}(\hat s,a),Q_{\phi_i}(s,a)\big)$ and $\mathbb{E}_{\hat s\in \mathbb{B}_d(s,\epsilon)} D_{\rm J} \big(\pi_{\theta}(\cdot|s)\|\pi_{\theta}(\cdot|\hat s)\big)$.

The results are demonstrated in Figure \ref{fig:ablation_mean}. We can find that RORL with the 'max' operator obtains a more conservative policy under small-scale perturbations and achieves higher robustness under large-scale perturbations. Since the 'max' operator has the same complexity as the 'mean' operator, we use the 'max' operator by default, which is also a zeroth-order approximation to an inner optimization problem.

\begin{figure}[t]
\centering
\subfigure[Performance under attack on halfcheetah-medium-v2 dataset]{\includegraphics[width=1\textwidth]{ 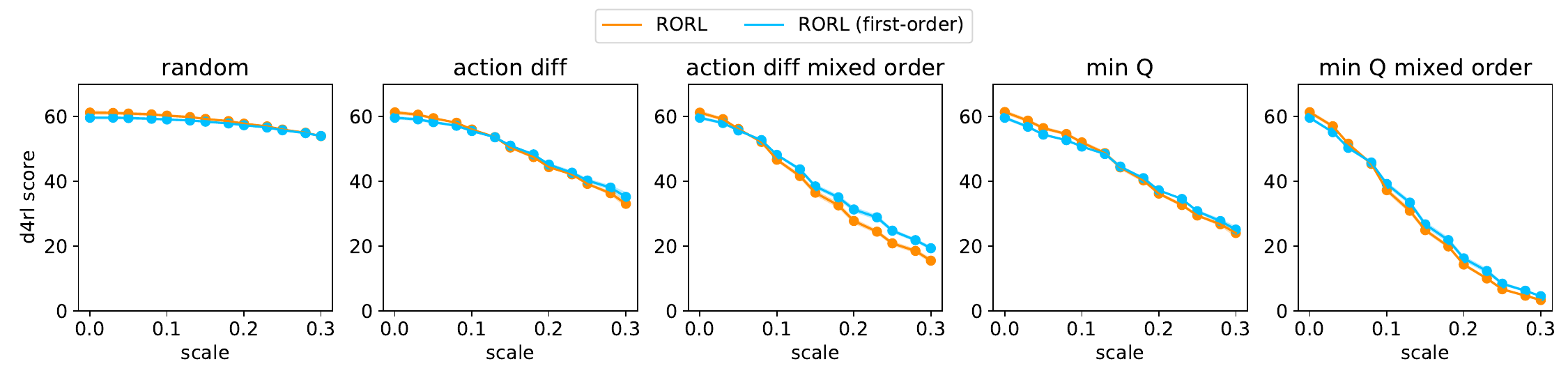}}

\subfigure[Performance under attack on walker2d-medium-v2 dataset]{\includegraphics[width=1\textwidth]{ 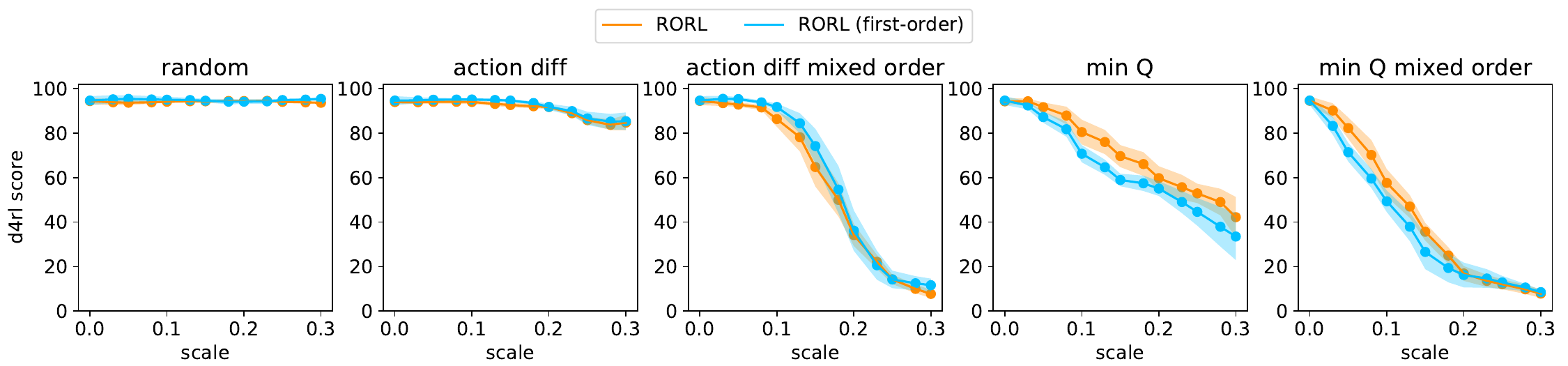}}
\caption{Comparison of zeroth-order and first-order optimization in the training period. The curves are averaged over 4 seeds and smoothed with a window size of 3. The shaded region represents half a standard deviation.}
\vspace{-1em}
\label{fig:first_order}
\end{figure}

\subsection{Comparing Different Optimization for Perturbation Generation during Training}
In the training period, we use zeroth-order optimization to approximately optimize the $Q$ smoothing loss in Eq.~\eqref{eq:Q_smooth} and the policy smoothing loss: $\max_{\hat s\in \mathbb{B}_d(s,\epsilon)} D_{\rm J} \big(\pi_{\theta}(\cdot|s)\|\pi_{\theta}(\cdot|\hat s)\big)$. In this way, we can accelerate training the robust policy and obtain similar performance. Besides, zeroth-order optimization is commonly applied in black-box attack where we can only access the input and output of neural networks without explicit gradient information. Black-box attack for reinforcement learning might be a promising direction in the future.

We also implemented a first-order version of RORL, which requires an average epoch time of 72.7s on a V100 GPU (while the average epoch time of the zeroth-order method is 29.6s). Since the perturbation generation for each training step is independent, we use the first-order optimization for a probability of 0.5 to alleviate the computational cost. In Figure \ref{fig:first_order}, we compare the trained policies with zeroth-order and first-order optimization. We can conclude that the two types of optimization for perturbation generation have very similar performance. On halfcheetah-medium task, the first-order version performs slightly better than the zeroth-order version, while the zeroth-order version works slightly better on the walker2d-medium task. We think this might be because we train the policy and value networks for 3$\times 10^6$ training steps, which may narrow the gap of the two optimization methods. On the contrary, the mixed-order attackers ('action diff mixed order' and 'min $Q$ mixed order') work better than zeroth-order attackers ('action diff' and 'min $Q$') in the evaluation period, as demonstrated in Figure \ref{fig:attack_rorl_new}.

\begin{figure}[tb]
\centering
\subfigure[D4rl scores ]{\includegraphics[width=0.245\textwidth]{ 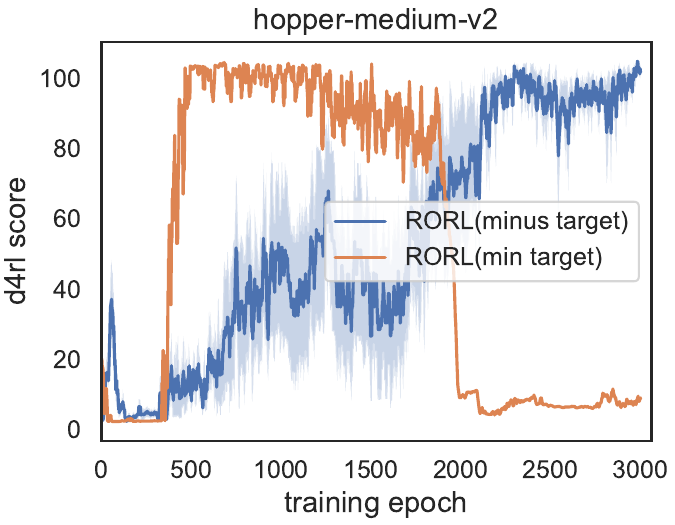}}
\subfigure[Estimated values ($\rm log$)]{\includegraphics[width=0.245\textwidth]{ 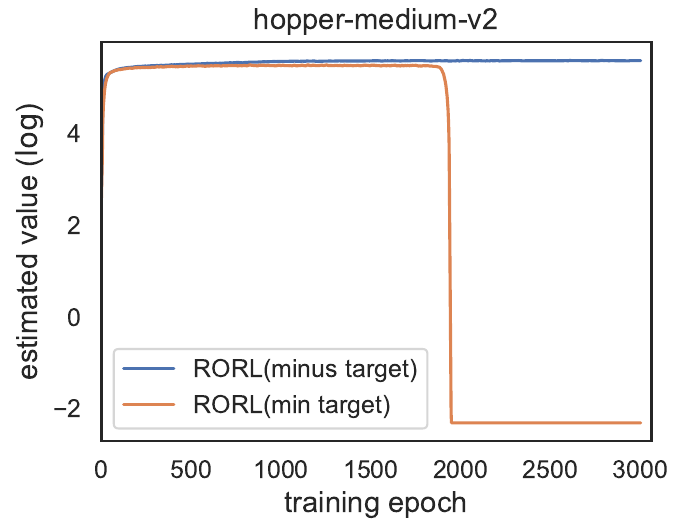}}
\subfigure[D4rl scores]{\includegraphics[width=0.245\textwidth]{ 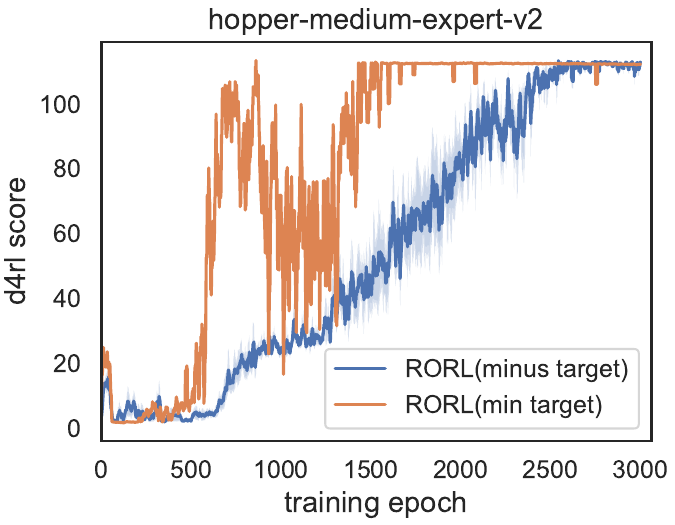}}
\subfigure[Estimated values ($\rm log$)]{\includegraphics[width=0.245\textwidth]{ 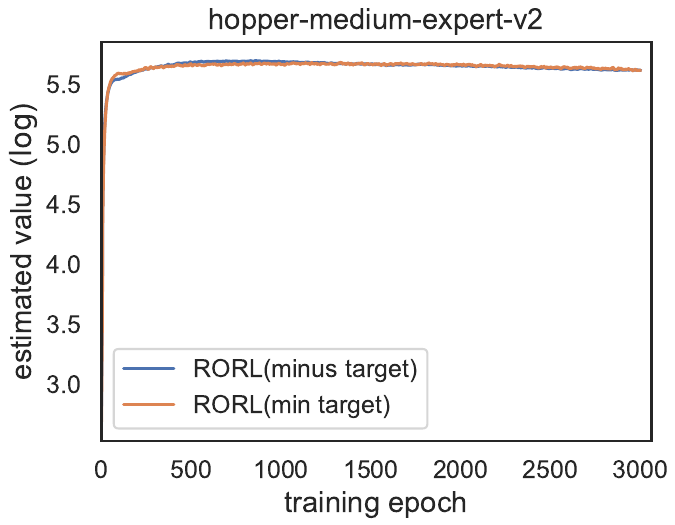}}

\subfigure[D4rl scores ]{\includegraphics[width=0.245\textwidth]{ 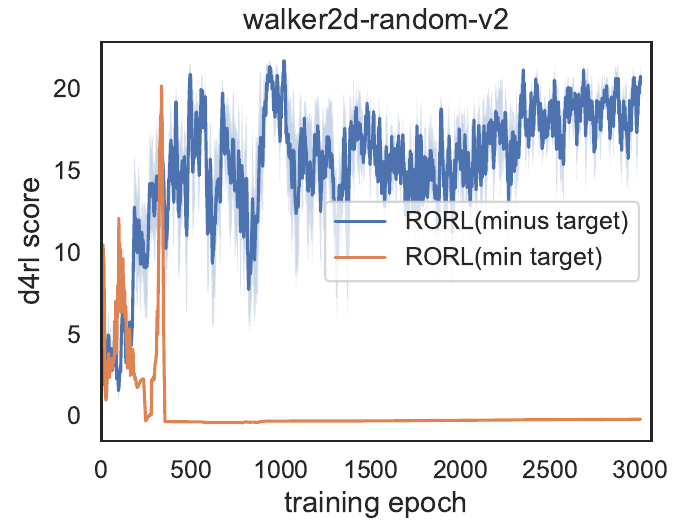}}
\subfigure[Estimated values ($\rm log$) ]{\includegraphics[width=0.245\textwidth]{ 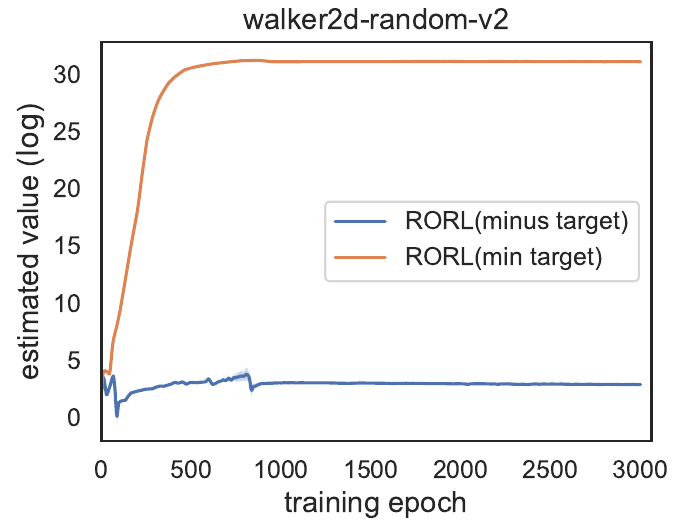}}
\subfigure[D4rl scores]{\includegraphics[width=0.245\textwidth]{ 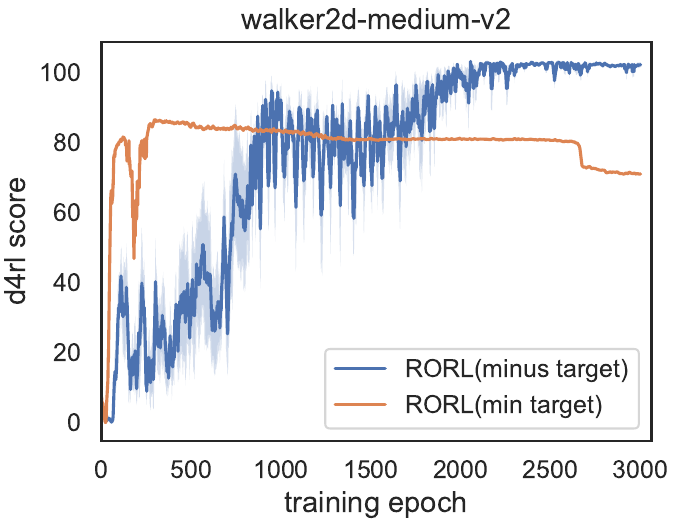}}
\subfigure[Estimated values ($\rm log$)]{\includegraphics[width=0.245\textwidth]{ 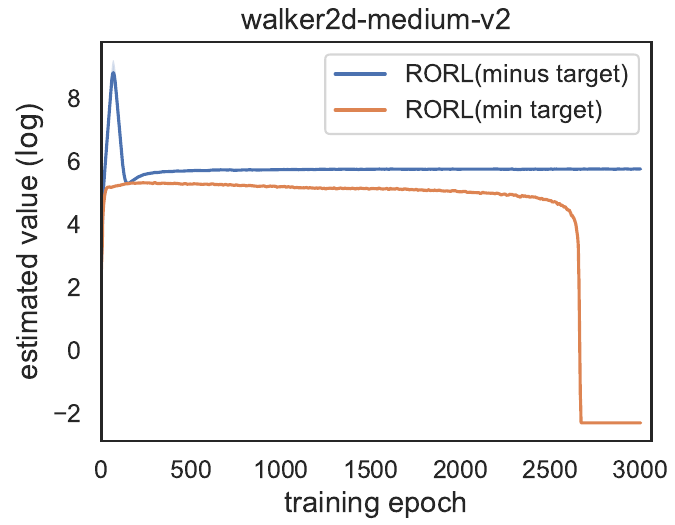}}

\caption{Comparison of the ``minus target'' and the ``min target'' in the OOD loss $\mathcal{L}_{\rm ood}$ on four tasks.}
\vspace{-1em}
\label{fig:comparison_min_target}
\end{figure}

\subsection{Comparison of the ``minus target'' and the ``min target''}
\label{ap:min_target}
In the OOD loss $\mathcal{L}_{\rm ood}$ (Eq.~\eqref{eq:ood_loss}), the pseudo-target $\widehat{\mathcal{T}}_{\rm ood}Q_{\phi_i}(\hat{s},\hat{a})$ for the OOD state-action pairs $(\hat s, \hat a)$ can be implemented in two ways to underestimate the values of $(\hat s, \hat a)$: $\widehat{\mathcal{T}}_{\rm ood}Q_{\phi_i}(\hat{s},\hat{a}):=Q_{\phi_i}(\hat{s},\hat{a})- \lambda u(\hat{s},\hat{a})$ or $\widehat{\mathcal{T}}_{\rm ood}Q_{\phi_i}(\hat{s},\hat{a}):= {\rm min}_{i=1,\ldots, K} Q_{\phi_i}(\hat{s},\hat{a})$ ($K=10$). The two targets are referred to as the ``minus target'' and the ``min target'' respectively. In Figure \ref{fig:comparison_min_target}, we compare the two targets' D4RL scores in clean environments, and the hyper-parameters are the same as Table \ref{tab:bench_params}. Although the ``min target'' has less hyper-parameters and achieves comparable performance on the hopper-medium-expert-v2 task, it is unstable and not flexible across different tasks, e.g., significantly overestimating values for the walker2d-random-v2 task and underestimating values for hopper-medium-v2 and walker2d-medium-v2 tasks. Therefore, we choose the ``minus target'' by default in our paper.

\begin{table}[b]
	\centering
	\small
	\caption{Hyper-parameters of RORL for the Adroit domains.}
	\vspace{0.2em}
	\label{tab:adroit_params}
	\begin{adjustbox}{max width=\columnwidth}
		\begin{tabular}{l|c|c|c|c|c|c|c|c|c}
			\toprule
			\textbf{Task Name} & $\beta_{\rm Q}$ & $\beta_{\rm P}$ & $\beta_{\rm ood}$ & $\epsilon_{\rm Q}$  & $\epsilon_{\rm P}$  & $\epsilon_{\rm ood}$ & $\tau$ & $n$ & $\lambda\ (d)$ \\
			\midrule
			Pen-human  & \multirow{4}{*}{0.0001} &  \multirow{4}{*}{0.01} & \multirow{4}{*}{0.5} & \multirow{4}{*}{0.001} & \multirow{4}{*}{0.001} &  0.001 & \multirow{4}{*}{0.2} & \multirow{4}{*}{20} &  0.2$\rightarrow 0.1$ ($1e^{-6}$) \\
			Hammer-human &    &   &    &  &  & 0.01  &   &  & 2$\rightarrow 0.5$ ($2e^{-6}$) \\
			Door-human  &    &    &   &  &  &   0.01 &   &   & 1$\rightarrow 0.5$ ($1e^{-6}$)  \\
			Relocate-human  &    &   &     &  &  &  0.01  &   &  &   1$\rightarrow 0.5$ ($1e^{-6}$) \\
			\midrule
			Pen-cloned   & \multirow{4}{*}{0.0001} & \multirow{4}{*}{0.1} & \multirow{4}{*}{0.5} & 0.001 & 0.001 & 0.001 & \multirow{4}{*}{0.2}  & \multirow{4}{*}{20} & 1$\rightarrow 0.2$ ($2e^{-6}$) \\
			Hammer-cloned  &   &   &   & 0.005  & 0.005  & 0.01 &   &  & 2$\rightarrow 0.5$ ($2e^{-6}$) \\
			Door-cloned &   &  &   & 0.001  &  0.001  & 0.01 &   &   & 1$\rightarrow 0.5$ ($1e^{-6}$) \\
			Relocate-cloned  &   &  &   & 0.005  &  0.005  & 0.01 &   &   & 2$\rightarrow 1.0$ ($1e^{-6}$) \\
			\midrule
			Pen-expert & \multirow{4}{*}{0.0001}  & \multirow{4}{*}{1.0} & \multirow{4}{*}{0.5} & 0.005 & 0.005 & \multirow{4}{*}{0.01}  & \multirow{4}{*}{0.2} & \multirow{4}{*}{20} & 2.0$\rightarrow 2.0$ ($0.0$) \\
			Hammer-expert &   &  &  & 0.005 & 0.005 &    &   &  & 1$\rightarrow 0.5$ ($1e^{-6}$) \\
			Door-expert &    &  &  & 0.005 & 0.005 &    &   &  & 1.5$\rightarrow 1.5$ (0.0) \\
			Relocate-expert  &   &  &  & 0.001 & 0.001 &    &   &  & 3$\rightarrow 2.0$ ($2e^{-6}$) \\
			\bottomrule
		\end{tabular}
	\end{adjustbox}
\end{table}

\subsection{Experiments in Adroit Domains}
We also evaluate RORL in the challenging Adroit domains which control a 24-DoF robotic hand to manipulate a pen, a hammer, a door and a ball. These domains contain three types of data, namely `Expert', `Cloned', and `Human', for each task. The hyper-parameters are listed in Table \ref{tab:adroit_params}. We set $\beta_{\rm Q}=0.0001$, $\beta_{\rm ood}=0.5$, $\tau=0.2$, $n=20$, and search $\beta_{\rm P}$ within $\{0.01, 0.1, 1.0\}$, $\epsilon_{\rm Q}$/$\epsilon_{\rm P}$/$\epsilon_{\rm ood}$ within $\{0.001, 0.005, 0.01 \}$. For Door/Relocate-human/cloned datasets, the policy learning rate is set to $1e^{-4}$. The other hyper-parameters are the same as in Table \ref{tab:hyper-SAC10}. On four expert datasets, we train RORL for 1000 epochs (1000 gradient steps per epoch). As for other datasets, we train RORL for 300 epochs because the `Cloned' and `Human' datasets are much smaller.

In Table \ref{tab:adroit}, we compare the performance of RORL with other baselines, such as EDAC, PBRL, TD3+BC, CQL, UWAC, BEAR, and BC. We can observe that RORL achieves the top two highest score in 6 out of 12 tasks, which further verifies the effectiveness of RORL.

\begin{table}[h]
\small
\centering
\caption{Average normalized score over 3 seeds in Adroit domain. Top two highest scores are highlighted.}
\begin{adjustbox}{max width=\columnwidth}
\begin{tabular}{llcccccccc}
\toprule
& & BC & BEAR & UWAC & CQL & TD3+BC & PBRL & EDAC & RORL \\ 
\midrule
\multirow{4}{*}{\rotatebox[origin=c]{90}{Human}} & Pen  & 34.4 & -1.0 &   10.1  $\pm$3.2 &   \textbf{37.5} &  0.0 &   {35.4  $\pm$3.3} &  \textbf{52.1$\pm$8.6} & 33.7 $\pm$ 7.6  \\
& Hammer & 1.5 & 0.3 &    1.2 $\pm$0.7 &   \textbf{4.4} &  0.0 &   0.4  $\pm$ 0.3 & 0.8$\pm$0.4 &  \textbf{2.3 $\pm$ 1.9} \\
& Door & 0.5 & -0.3 &   0.4  $\pm$0.2 &   \textbf{9.9} &  0.0 &   0.1  $\pm$0.0  & \textbf{10.7$\pm$6.8}  & 3.78 $\pm$ 0.7 \\
& Relocate & 0.0 & -0.3 &   0.0  $\pm$0.0 &   \textbf{0.2} &  0.0 &   0.0  $\pm$0.0 & \textbf{0.1$\pm$0.1}   &  0.0 $\pm$ 0.0 \\

\hline
\multirow{4}{*}{\rotatebox[origin=c]{90}{Cloned}} & Pen  &   {56.9} & 26.5 &   {23.0  $\pm$6.9} & 39.2 &  0.0 &   \textbf{74.9 $\pm$9.8} & \textbf{68.2$\pm$7.3} & 35.7$\pm$ 3.1 \\
& Hammer & 0.8 & 0.3 &   {0.4 $\pm$0.0} &   \textbf{2.1} &  0.0 &   {0.8  $\pm$0.5}  &  0.3$\pm$0.0    & \textbf{ 1.7 $\pm$0.5} \\
& Door & -0.1 & -0.1 &   {0.0 $\pm$0.0} & 0.4 &  0.0 &   \textbf{4.6   $\pm$4.8} &   \textbf{ 9.6$\pm$8.3 }    & -0.1 $\pm$ 0.1\\
& Relocate & -0.1 & -0.3 &   {-0.3  $\pm$0.0} & -0.1 &  0.0 &   {-0.1   $\pm$0.0}   &  \textbf{ 0.0$\pm$0.0}    & \textbf{0.0 $\pm$ 0.0} \\
\hline
\multirow{4}{*}{\rotatebox[origin=c]{90}{Expert}} & Pen  & 85.1 & 105.9 &   {98.2   $\pm$9.1} & 107.0 &  0.3 &   \textbf{137.7   $\pm$3.4}  & 122.8 $\pm$ 14.1  & \textbf{130.3 $\pm$ 4.2}\\
& Hammer &   {125.6} &   {127.3} &   {107.7   $\pm$21.7} & 86.7 &  0.0 &   \textbf{127.5  $\pm$0.2} & 0.2 $\pm$ 0.0 & \textbf{132.2 $\pm$ 0.7} \\
& Door & 34.9 & 103.4 &   \textbf{104.7  $\pm$0.4} & 101.5 &  0.0 &   {95.7  $\pm$12.2} & -0.3 $\pm$ 0.1   & \textbf{104.9 $\pm$ 0.9} \\
& Relocate &   \textbf{101.3} &   {98.6} &   \textbf{105.5  $\pm$3.2} & 95.0 &  0.0 &   {84.5  $\pm$12.2} & -0.3 $\pm$ 0.0 & 47.8 $\pm$ 13.5 \\
\bottomrule
\end{tabular}
\end{adjustbox}
\label{tab:adroit}
\end{table}

\subsection{AntMaze Tasks}
The AntMaze domain is a challenging navigation domain with an 8-DoF Ant quadruped robot and three types of datasets, namely `umaze', `medium', and `large'. In this domain, the agent receives a sparse reward of 0/1, where reward 1 is given only when the ant reaches the desired goal. The challenges for the AntMaze domain are sparse rewards and multitask data, which might be beyond the scope of our study. To the best of our knowledge, very few ensemble-based offline RL algorithms can work in this domain, probably because estimating uncertainty in a sparse reward setting is difficult. A recent work~\cite{ghasemipour2022so} conducted in-depth research on this problem and found that the independent target is crucial for the uncertainty estimation in ensemble-based offline RL. We adopt the techniques used in~\cite{ghasemipour2022so} for RORL and reported the results in Table \ref{tab:antmaze}. We only use the OOD loss and policy smoothing loss for RORL, and replace the shared min target in Eq. \eqref{eq:soft_Q_learning} with the independent target to train $Q$ functions:
\begin{equation}
    \widehat{\mathcal{T}}Q_{\phi_i}(s, a):=r(s,a)+\gamma \widehat {\mathbb{E}}_{a'\sim \pi_\theta(\cdot|s')}\big[ Q_{\phi'_i}(s',a')- \alpha \cdot \log\pi_\theta(a'|s')\big],
\end{equation}
In Eq \eqref{eq:roal-policy}, we can train the policy with the `LCB' objective (i.e., $\text{mean}_{j=1,\ldots,K}Q_{\phi_j}(s,a) - c \cdot \text{std}_{j=1,\ldots,K}Q_{\phi_j}(s,a) $, where $c=4$ is used in our experiments) or the `Min' target (i.e., $\min_{j=1,\ldots,K}Q_{\phi_j}(s,a)$) to enforce pessimism. Following~\cite{ghasemipour2022so}, we train RORL for $2\times 10^5$ training steps and evaluate the final performance for 100 episodes. Instead of changing the 0/1 reward to -2/2, we adopt reward shifting~\cite{sun2022exploiting} to change the 0/1 reward to 0.001/10. We also find that adding the BC loss to the policy loss is helpful for antmaze-umaze tasks. Therefore, we add the BC loss to the policy loss for $5\times 10^4$ training steps for all tasks, except for antmaze-umaze-diverse, where we add the BC loss for $2\times 10^5$ training steps. Other hyper-parameters such as the coefficient $\beta_{\rm BC}$ of BC loss are listed in Table \ref{tab:antmaze_params}. 

We also apply our policy and value function smoothing techniques on top of IQL (short for `IQL+smoothing'). For the hyper-parameters, we use $\epsilon_{\rm Q}=0.01$, $\epsilon_{\rm P}=0.03$, $\tau=0.2$ for all six types of datasets, and search $\beta_{\rm Q} \in \{0.1, 0.01\}$, $\beta_{\rm P}\in \{0.1, 0.5\}$, $n=20$. Other hyper-parameters keep the default hyper-parameters of IQL~\cite{kostrikov2021offline}.

In Table \ref{tab:antmaze}, we compare RORL and `IQL+smoothing' with both model-free (AWAC \cite{nair2020accelerating}, TD3+BC \cite{fujimoto2021minimalist}, CQL \cite{kumar2020conservative}, and IQL \cite{kostrikov2021offline}) and model-based (ROMI \cite{wang2021offline}) baselines. RORL achieves the highest average score on the 6 tasks. Besides, on 4 out of 6 tasks, 'IQL+smoothing' improves the performance of IQL. Intuitively, for sparse reward tasks, smoothing the value functions of nearby states could help with the value propagation, and smoothing the policy can enhance the robustness of learned policies. But we can still notice that RORL does not perform well on the antmaze-large task, which may be a future improvement work.


\begin{table}[h]
	\centering
	\small
	\caption{Hyper-parameters of RORL for the AntMaze domains.}
	\vspace{0.2em}
	\label{tab:antmaze_params}
	\begin{adjustbox}{max width=\columnwidth}
		\begin{tabular}{l|c|c|c|c|c|c|c|c|c}
			\toprule
			\textbf{Task Name}  & $\beta_{\rm P}$ & $\beta_{\rm ood}$  & $\epsilon_{\rm P}$  & $\epsilon_{\rm ood}$ &  $n$ & policy objective & $\beta_{\rm BC}$ & $\lambda\ (d)$ \\
			\midrule
			umaze  & \multirow{6}{*}{1.0} & 0.3 & \multirow{6}{*}{0.005} & \multirow{6}{*}{0.01} & \multirow{6}{*}{20} &  LCB &  \multirow{6}{*}{10} &  1.0$\rightarrow 1.0$ ($0$) \\
			umaze-diverse &    & 0.3  &    &    &   &  LCB &  & 2.0$\rightarrow 2.0$ ($0$) \\
			medium-play  &    &  0.3 &     &    &    & LCB  &  &   1.0$\rightarrow 1.0$ ($0$) \\
			medium-diverse &   &   0.3 &   &    &  & LCB &  & 2.0$\rightarrow 1.0$ ($1e^{-6}$) \\
			large-play  &   & 0.5  &   &   &     &  Min &  & 2.0$\rightarrow 1.0$ ($1e^{-6}$) \\
			large-diverse &   & 0.3 &   &   &      & Min  &   & 1.0$\rightarrow 1.0$ ($0$) \\
			\bottomrule
		\end{tabular}
	\end{adjustbox}
\end{table}

\begin{table}[hb]
    \centering
    \caption{Comparison of final performance on AntMaze tasks. The results are average over 3 random seeds. Top two scores for each task are highlighted.}
    \label{tab:antmaze}
    \begin{adjustbox}{max width=\columnwidth}
    \begin{tabular}{lcccccccc}
        \toprule
        & BC & AWAC & TD3+BC & CQL & ROMI+BCQ &  IQL &   IQL+smoothing & RORL \\ 
        \midrule
       antmaze-umaze  &   54.6 &    56.7 &   78.6 &     74.0 &  68.7$\pm$2.7 &  87.5 &   \textbf{92.3$\pm$4.6}  &  \textbf{96.7 $\pm$ 1.9}  \\
        antmaze-umaze-diverse   &  45.6 &   49.3 &     71.4 &   \textbf{84.0}  & 61.2 $\pm$ 3.3  &   62.2 &  64.0 $\pm$ 5.6 & \textbf{90.7$\pm$2.9} \\
         antmaze-medium-play  &  0.0 &     0.0 &       10.6 &   61.2 &    35.3 $\pm$1.3 &           71.2 &  \textbf{75.3$\pm$2.5} & \textbf{76.3$\pm$2.5} \\
         antmaze-medium-diverse  & 0.0 &  0.7 &   3.0&    53.7 &  27.3 $\pm$3.9   &  \textbf{70.0} & \textbf{74.3 $\pm$ 3.7} & 69.3$\pm$3.3   \\
       antmaze-large-play  &  0.0 &   0.0 &    0.2& 15.8 &  20.2 $\pm$ 14.8  &   \textbf{39.6} &    \textbf{38.3 $\pm$ 4.8} & 16.3$\pm$11.1 \\
      antmaze-large-diverse &  0.0 &   1.0 &       0.0&  14.9 &     \textbf{41.2 $\pm$4.2}   &   \textbf{47.5} & 40.0 $\pm$ 7.8 & 41.0$\pm$10.7  \\
      \midrule
      Average & 16.7 & 17.95 & 27.3 & 50.6 & 42.3 & 63.0 & \textbf{64.0} & \textbf{65.1} \\
        \bottomrule
    \end{tabular}
    \end{adjustbox}
\end{table}

\subsection{Robustness of the Benchmark Results}
In Figure \ref{fig:benchmark_attack}, we evaluate the robustness of the benchmark results, i.e., how robust each algorithm is to maintain the performance listed in Table \ref{tab:gym_more}. We compare RORL with EDAC, SAC-10 on six tasks. EDAC is reproduced with 10 ensemble $Q$ networks as RORL and SAC-10, and uses $\eta=1$ for all six tasks. Note that in the benchmark experiments, RORL is only trained with small smoothing scales within $\{0.001, 0.005, 0.01\}$. The evaluation perturbation scales are within range $[0.00,0.05]$ and the results are averaged over 4 random seeds. From the results, we can conclude that RORL can successfully keep the highest performance within a certain perturbation scale and the performance of EDAC and SAC-10 decreases faster than RORL for most tasks and attack methods. The results imply that RORL has better practicability in real-world scenarios.








\begin{figure}[htb]
\centering
\subfigure[Halfcheetah-medium-v2]{\includegraphics[width=1\textwidth]{  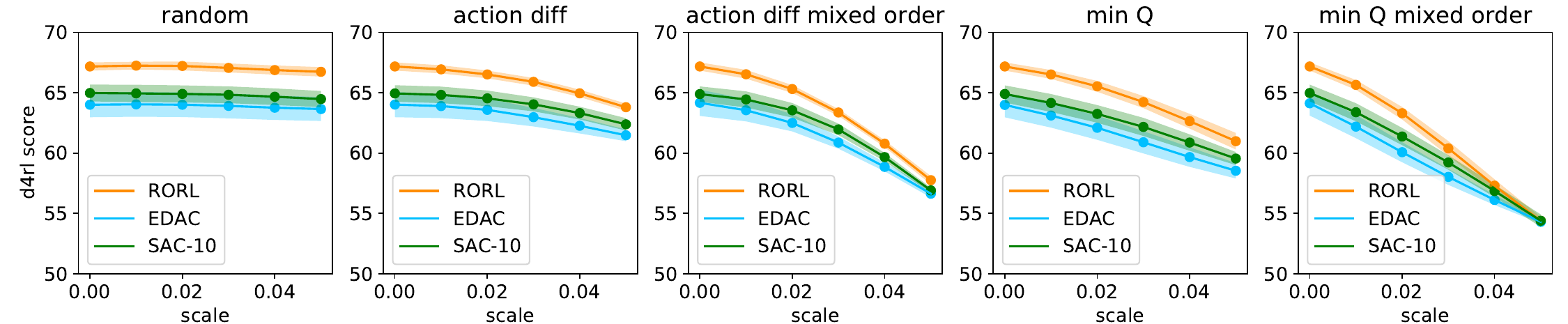}}

\subfigure[Halfcheetah-expert-v2]{\includegraphics[width=1\textwidth]{  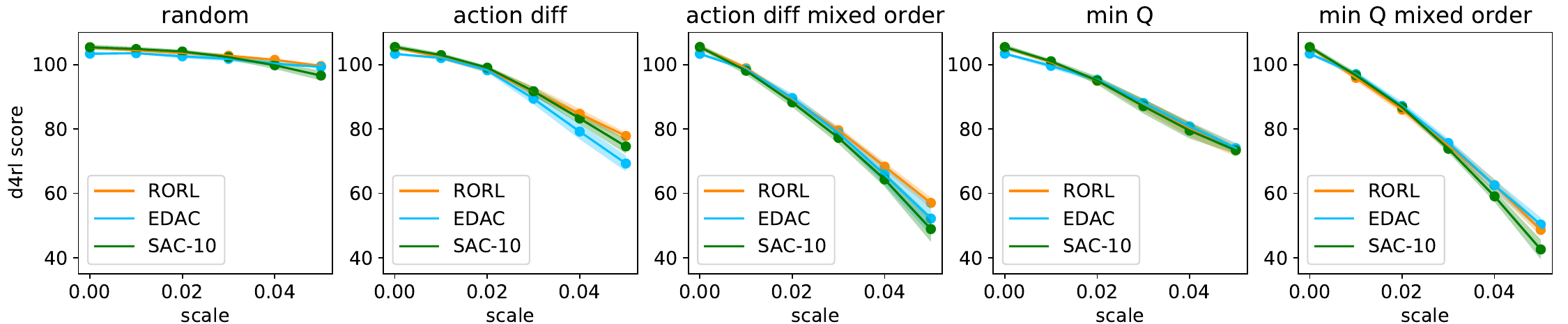}}

\subfigure[Hopper-medium-v2]{\includegraphics[width=1\textwidth]{  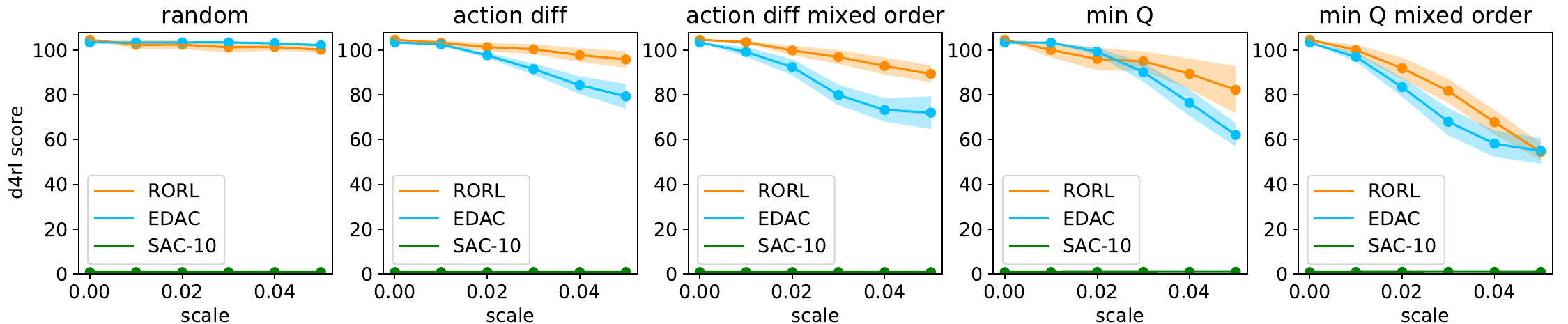}}

\subfigure[Hopper-expert-v2]{\includegraphics[width=1\textwidth]{  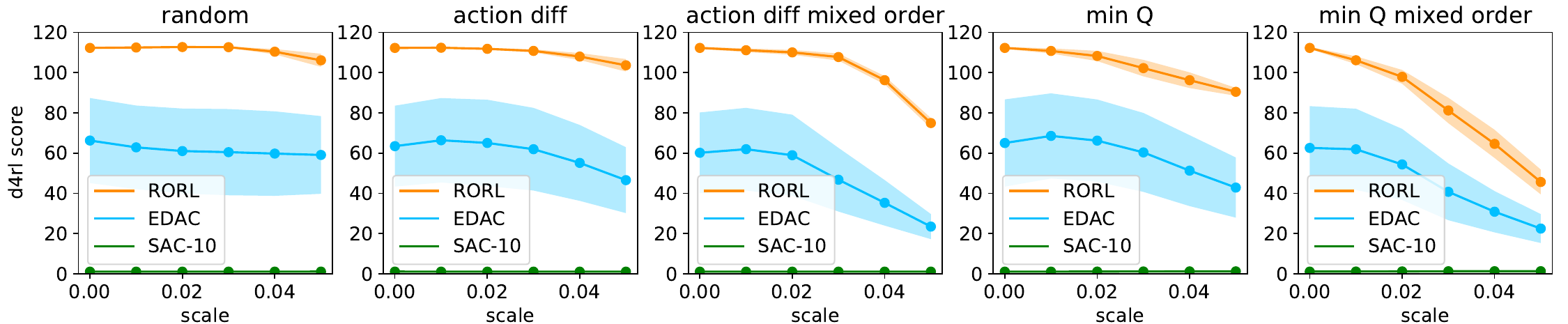}}

\subfigure[Walker2d-random-v2]{\includegraphics[width=1\textwidth]{  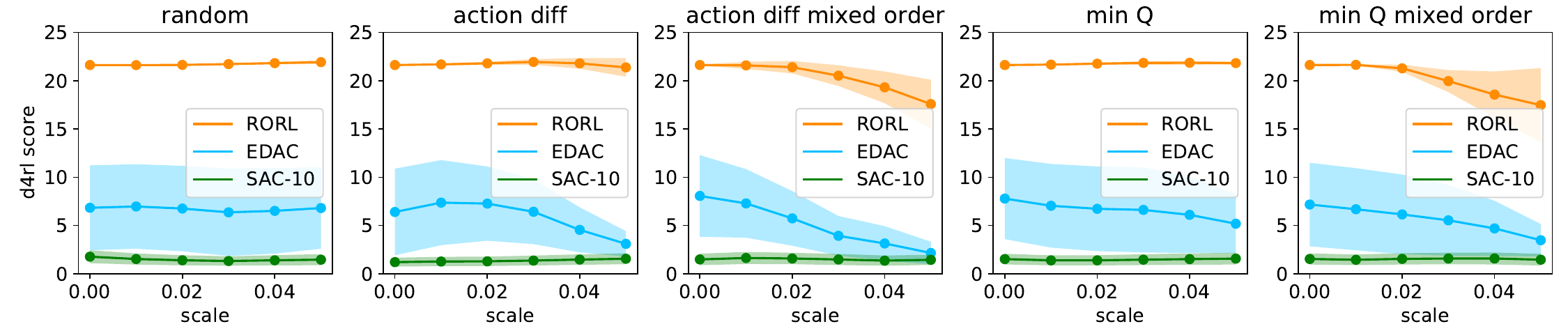}}

\subfigure[Walker2d-medium-v2]{\includegraphics[width=1\textwidth]{  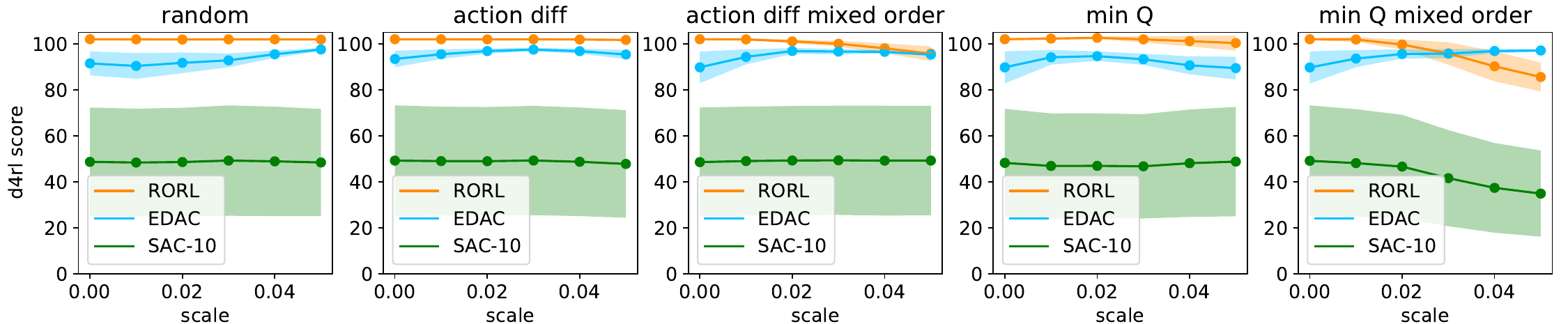}}

\caption{Performance under adversarial attack on six datasets. RORL can maintain the best performance in the benchmark experiments for small-scale perturbations.}
\label{fig:benchmark_attack}
\end{figure}

\section{Tips for Customizing RORL}
According to our ablation study result in Appendix \ref{ap:addtional_exp}, we summarize some tips for adapting RORL for customized use below.
\begin{itemize}
    \item \textbf{Hyper-parameter Tuning:} Since RORL is proposed to solve a challenging problem, it has many hyper-parameters. Our first suggestion is to use our hyper-parameter search range in Appendix \ref{ap-implementation_details}. You can tune them according to the importance of each component, where the general order is : OOD loss $>$ policy smoothing loss $>$ $Q$ smoothing loss. 

    \item \textbf{Computation Cost:} If you want less GPU memory usage and less training time, you can (1) set $\beta_{\rm Q}=0$ and $\epsilon_{\rm Q}=0$ because the $Q$ smoothing loss contributes the least but consumes a large computational cost, and (2) use a small number $n$ of sampled perturbed states to reduce the GPU memory usage. 
\end{itemize}

\newpage

\section{More Related Works}
\label{appendix:related-work}

\paragraph{Model-Based Offline RL} In offline RL, model-based methods use an empirical model learned from the offline dataset to enhance the generalization ability. The model can be used as the virtual environment for data collection~\cite{yu2020mopo,kidambi2020morel}, or to augment the dataset for an existing model-free algorithm~\cite{yu2021combo,wang2021offline}. The main challenges of model-based algorithms are how to learn the accurate empirical model and how to construct the uncertainty measure. A recent work~\cite{janner2021offline} demonstrates that the transformer model can generate realistic trajectories, which is beneficial for policy learning. In contrast, we focus on the model-free methods in this paper and leave the robustness of model-based methods in future work.

\paragraph{Adversarial Attack} Inspired by adversarial examples in deep learning~\cite{goodfellow2014explaining,papernot2016practical}, adversarial attack and policy  poisoning~\cite{behzadan2017vulnerability,huang2017adversarial,pattanaik2018robust} are studied to avoid adversarial manipulations on the network policies. Gleave et al.~\cite{gleave2019adversarial} study adversarial policy in the behavior level~\cite{gleave2019adversarial}. Data corruption~\cite{zhang2021robust,ma2019policy,wu2022copa} considers the case where an attacker can arbitrarily modify the dataset under a specific budget before training. While adversarial attack in RL is highly related to robust RL, they focus more on adversarial attacks compared to our robustness setting. More effective attack strategies for offline RL can facilitate learning more robust policies.

\end{document}